\title{Private List Learnability vs. Online List Learnability}
\author{Steve Hanneke, Shay Moran, Hilla Schefler, Iska Tsubari}
\begin{document}

\maketitle
\begin{abstract}
	This work explores the connection between differential privacy (DP) and online learning in the context of PAC list learning. In this setting, a $k$-list learner outputs a list of $k$ potential predictions for an instance $x$ and incurs a loss if the true label of $x$ is not included in the list. 
	A basic result in the multiclass PAC framework with a finite number of labels states that private learnability is equivalent to online learnability \cite{AlonLMM19,BunLM20,JungKT20}.
	Perhaps surprisingly, we show that this equivalence does not hold in the context of list learning. 
	Specifically, we prove that, unlike in the multiclass setting, a finite $k$-Littlestone dimension—a variant of the classical Littlestone dimension that characterizes online $k$-list learnability—is not a sufficient condition for DP $k$-list learnability.
	However, similar to the multiclass case, we prove that it remains a necessary condition.
	    
	To demonstrate where the equivalence breaks down, we provide an example showing that the class of monotone functions with $k+1$ labels over $\mathbb{N}$ is online $k$-list learnable, but not DP $k$-list learnable.
	This leads us to introduce a new combinatorial dimension, the \emph{$k$-monotone dimension}, which serves as a generalization of the threshold dimension.
	Unlike the multiclass setting, where the Littlestone and threshold dimensions are finite together, for $k>1$, the $k$-Littlestone and $k$-monotone dimensions do not exhibit this relationship.
	We prove that a finite $k$-monotone dimension is another necessary condition for DP $k$-list learnability, alongside finite $k$-Littlestone dimension.
	Whether the finiteness of both dimensions implies private $k$-list learnability remains an open question.
	
\end{abstract}

\newpage

\section{Introduction}

Modern machine learning applications often involve handling sensitive data, making privacy preservation a critical concern. 
Differential privacy (DP) \cite{DworkMNS06} offers a rigorous framework for safeguarding individuals' information by ensuring that small changes in the input data have a minimal impact on the algorithm’s output. 
In recent years, significant research has addressed the fundamental question of determining which learning tasks can be performed under the constraints of differential privacy. 
A brief overview of relevant works appears in \Cref{subsec:related_work}.

\paragraph{Private PAC learning and online learning.}
An important connection has emerged between private learning and online learning. This connection can be understood through their shared reliance on the concept of stability. Differential privacy is, by definition, a form of stability, as it ensures robustness to small changes in the input data. On the other hand, stability plays a central role in online learning paradigms such as \emph{Follow the Leader} \cite{KalaiV05_online, AbernethyHR08_online, Shalev-ShwartzS07_online, Hazan16_online}.
This relationship has been studied within the framework of \emph{Probably Approximately Correct} (PAC) learning \cite{Valiant84}. It reveals an equivalence: a concept class $\cC \subset \cY^\cX$, where $\cX$ is a domain and $\cY = \{0,1, \ldots, \ell-1\}$ is the label space\footnote{In this work, we focus on the multiclass setting with a finite number of labels, though our results hold more generally
	for an arbitrary number of labels.}, is DP PAC learnable if and only if it is online learnable. This equivalence was first established for the binary case ($\ell=2$) by \cite{AlonLMM19, BunLM20} and later extended to general $\ell$ by \cite{JungKT20,SivakumarBG21}. This work focuses on the relationship between private learnability and online learnability within the context of \emph{list learning}.

\paragraph{List learning.}
 
$k$-List learning is a generalization of supervised classification in which, instead of predicting a single label, the learner outputs a short list of $k$ potential labels, where the goal is for the true label to appear in this list. 
For example, in recommendation systems, it is standard to provide users with a shortlist of items, such as movies, products, or articles, rather than a single suggestion, to improve the likelihood of presenting relevant options. 
Similarly, in scenarios with noisy or ambiguous data, such as medical diagnosis, presenting multiple possible diagnoses is often more practical than requiring a model to pinpoint a single, definitive outcome. Additionally, many real-world decision-making tasks involve downstream human or automated processes, where presenting a shortlist allows for more informed and flexible decision-making.

\medskip

The central question motivating this work is:
\begin{boxF}
	\begin{center}
		\emph{Does the equivalence between private PAC learning and online learning extend to the setting of $k$-list learning for $\ell$-labeled multiclass problems, where $k \geq 2$ and $\ell \geq 3$?} 
	\end{center}
\end{boxF}

\begin{remark}
	The case where $k=1$ and $\ell < \infty$ reduces to the standard multiclass setting. Additionally, the case $k = \ell = 2$ is trivial, as every class is both DP and online $2$-list learnable via a deterministic constant learner that always outputs the list $\{0,1\}$. Thus, the smallest nontrivial case where this question remains unresolved is $k=2$ and $\ell=3$.
\end{remark}

List prediction rules naturally arise in conformal learning. In this setting, a conformal learner predicts possible labels for a test point while also quantifying its confidence. For example, in multiclass classification, a conformal learner assigns scores to each possible class, reflecting the likelihood that the test point belongs to it. The final prediction list is then obtained by selecting the highest-scoring classes. For more details, see \cite{vovk2005algorithmic} and the surveys by \cite{shafer08a_conformal_learning_survey,AngelopoulosB2021_conformal_learning_survey}.

From a theoretical perspective, there has been growing interest in understanding and characterizing the sample complexity of list learning tasks, including PAC $k$-list learning \cite{CharikarP23_list_PAC}, online $k$-list learning \cite{MoranSTY23}, and $k$-list regression \cite{PabbarajuS24_list_regression}. For an overview of relevant works, see \Cref{subsec:related_work}.

\paragraph{Our contribution.}
We provide a negative answer to the motivating question and demonstrate that only one direction of the equivalence between private PAC learnability and online learnability extends to the setting of list learning, making it, to the best of our knowledge, the first known setting where this equivalence fails.
Specifically:
\begin{enumerate}[I]
	\item \emph{Private List Learnability (PLL) $\Longrightarrow$ Online List Learnability (OLL).} We establish that one direction of the equivalence holds in the list learning setting: if a class $\cC$ is DP PAC $k$-list learnable, then it is also online $k$-list learnable. This result leverages the $k$-Littlestone dimension, a variation of the classical Littlestone dimension introduced by \cite{MoranSTY23} to characterize $k$-list online learnability.\footnote{The $k$-Littlestone dimension characterizes $k$-list online learnability in the sense that a class $\cC$ is online $k$-list learnable if and only if its $k$-Littlestone dimension $\LD{\cC}{k}$ is finite \cite{MoranSTY23}.} Conceptually, we demonstrate that deep $k$-Littlestone trees\footnote{A $k$-Littlestone tree is a $(k+1)$-ary shattered mistake tree, as described in \Cref{subsec:preliminaries_learning}.} are an obstacle for private learning: if the $k$-Littlestone dimension of $\cC$ is infinite (and thus $\cC$ is not online $k$-list learnable), then no DP $k$-list learner exists for $\cC$.
	      In other words, finite $k$-Littlestone dimension is a necessary condition for a concept class to be privately $k$-list learnable.
	      \label{itm:main_contribution_1}
	\item \emph{Online List Learnability (OLL) $\centernot\implies$ Private List Learnability (PLL).} 
	      Let $k\geq 2$. We prove that the class of $(k+1)$-labeled monotone functions over $\bbN$ is $k$-list online learnable (with a mistake bound of $1$), but it is \underline{not} DP PAC learnable. More broadly, we introduce a combinatorial dimension, which we term the \emph{$k$-monotone dimension} (\Cref{def:MD}), that serves as a generalization of the threshold dimension. We establish that a finite $k$-monotone dimension is a necessary condition for a concept class to be privately $k$-list learnable.\label{itm:main_contribution_2}
\end{enumerate}

Note that, in the classical multiclass setting, for any class $\cC$, the threshold dimension $\TD{\cC}$ and the Littlestone dimension $\LD{\cC}{}$ are simultaneously finite (that is, if one is finite, so is the other \cite{Shelah1982,HodgesModelTheory}). Consequently, the finiteness of the threshold dimension fully captures the online learnability of a class. However, in the context of list learning, this relationship no longer holds: there exist classes with a finite $k$-Littlestone dimension but an infinite $k$-monotone dimension, and vice versa (\Cref{thm:LD_vs_MD}).
To summarize, we have established that the finiteness of \ul{both} the $k$-Littlestone dimension and the $k$-monotone dimension is a necessary condition for private $k$-list learning (Corollary \ref{maincor:LD_and_MD_are_necessary}).
An open question remains whether the finiteness of these two dimensions is also a sufficient condition for private $k$-list learning.

Finally, another contribution we highlight is of a more technical nature and concerns the use of Ramsey-theoretic tools in our proofs. 

In establishing our two main results, we employ distinct variants of Ramsey's theorem. For the result in \Cref{itm:main_contribution_2} (OLL $\nRightarrow$ PLL), we use the classical Ramsey theorem for hypergraphs, whereas for the result in \Cref{itm:main_contribution_1} (PLL $\Rightarrow$ OLL), we extend a Ramsey theorem for binary trees introduced by \cite{FioravantiHMST24ramsey}. Specifically, we require a version of this theorem for trees with arity \( k+1 \), where \( k \) is the list size.  

\cite{FioravantiHMST24ramsey} developed their Ramsey theorem for trees to overcome a limitation of the classical Ramsey theorem for hypergraphs in proving that private learning implies online learning for general classification problems. At first glance, one might expect the tree-based Ramsey theorem to be strictly stronger in this context. However, we show that these two variants are fundamentally incomparable: in both directions, there exist cases where one applies while the other does not. For a more detailed discussion, see \Cref{subsec:main_results}.

\paragraph{Organization.}
In the remainder of this section, we present our main results and discuss additional related work. We outline the proof of the $k$-Littlestone dimension lower bound in \Cref{sec:PPL_implies_LD} and the proof of the $k$-monotone dimension lower bound in \Cref{sec:PPL_implies_MD}. Detailed definitions and preliminaries are provided in \Cref{sec:preliminaries}. The complete proofs of our main results appear in \Cref{sec:LDvsMD,sec:proof_of_thm_A,sec:proof_of_thm_B}, with additional proofs deferred to \Cref{sec:add_proofs,sec:ramsey_theorem__for_trees}.

\subsection{Main Results}\label{subsec:main_results}

This section presents the main contributions of the paper.
We use standard definitions and terminology from learning theory and differential privacy; see \Cref{sec:preliminaries} for detailed definitions.
In this work, we focus on the case where the label space $\cY$ is finite, though the results extend to infinite label spaces.

\medskip

The $k$-Littlestone dimension is a combinatorial dimension introduced by \cite{MoranSTY23} that generalizes the classical Littlestone dimension and characterizes optimal mistake and regret bounds in online $k$-list learning. 
Unlike the Littlestone dimension, which is defined using binary mistake trees, the $k$-Littlestone dimension is based on $(k+1)$-ary mistake trees, meaning each internal vertex has outdegree $k+1$, where $k$ corresponds to the list size of the learner.
For the formal definition, refer to \Cref{subsec:preliminaries_learning}.
The following theorem provides a lower bound on the sample complexity of privately $k$-list learning a concept class~$\cC$ in terms of its $k$-Littlestone dimension.
Therefore, a finite $k$-Littlestone dimension is a necessary condition for DP PAC $k$-list learning.

\begin{theorem}\label{thm:DP_implies_LD}
	Let $k\geq1$. Let $\cC\subset\cY^\cX$ be a concept class with $k$-Littlestone dimension ${\LD{\cC}{k}\geq d}$, 
	and let $\cA$ be an \textcolor{black}{$\left(\frac{k!
		}{10^4(k+1)^{k+2}},\frac{k!
		}{10^4(k+1)^{k+2}}\right)$}-accurate $k$-list learning algorithm for $\cC$ with sample complexity $m$, satisfying $(\epsilon,\delta(m))$-differential privacy for $\epsilon=\textcolor{black}{\log\left(\frac{400k^2+1}{400k^2}\right)}$ and ${\delta(m)\leq \textcolor{black}{\frac{1}{200k^2m^2}}}$.
	Then, the following bound holds:
	\begin{equation*}
		m = \Omega(\log^\star d), 
	\end{equation*}
	where the $\Omega$ notation conceals a universal numerical multiplicative constant.
\end{theorem}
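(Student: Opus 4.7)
The plan is to follow the template established by \cite{AlonLMM19} and extended in \cite{FioravantiHMST24ramsey}, suitably adapted to the $k$-list setting with arity $k+1$. First I would invoke the hypothesis $\LD{\cC}{k} \geq d$ to extract from $\cC$ a $(k+1)$-ary shattered mistake tree $\cT$ of depth $d$: each internal node $v$ carries an instance $x_v \in \cX$, its $k+1$ outgoing edges correspond to the $k+1$ distinct labels in $\{0,1,\ldots,k\}$ assigned to $x_v$, and every root-to-leaf branch is realized by some concept in $\cC$. Accuracy then forces the learner, trained on a sample drawn from any such branch, to include the correct label in its $k$-list output on $x_v$ with high probability for most nodes $v$ encountered along the branch.

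Next I would define a coloring of the nodes of $\cT$ that records a discretized fingerprint of $\cA$'s behavior: to each node $v$, associate a suitably quantized vector describing the distribution (over the algorithm's randomness) of the $k$-list produced by $\cA$ on small labeled samples consistent with paths through $v$ and its subtree, evaluated on the $k+1$-fold labeled versions of $x_v$. The number of colors is finite and controlled by $m$ and by the discretization granularity. With this coloring in hand, I would apply a Ramsey theorem for $(k+1)$-ary trees — the extension of the binary-tree Ramsey of \cite{FioravantiHMST24ramsey} to arity $k+1$ that the introduction announces — to extract a monochromatic complete $(k+1)$-ary subtree $\cT'$. The quantitative Ramsey bound is tower-type in the depth of $\cT'$, so starting from depth $d$ in $\cT$ yields depth roughly $\log^\star d$ in $\cT'$, which is exactly where the $\Omega(\log^\star d)$ bound on $m$ will come from.

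On the monochromatic subtree $\cT'$, the learner's output fingerprint on two samples that agree in their tree-structural type is identical. Now I would build two neighboring datasets by switching which child-subtree is followed at a carefully chosen sequence of internal nodes; group privacy with parameters $(\epsilon, \delta(m))$ then forces the output distributions on these datasets to be close. Chaining this over the $k+1$ children of a common node $v$, for \emph{each} of the $k+1$ candidate labels $i \in \{0,\ldots,k\}$, the probability that $\cA$ includes $i$ in its output list on $x_v$ is essentially the same across all $k+1$ sibling datasets. Since the list has size $k$, the total probability mass across the $k+1$ labels is at most $k$, so by averaging at least one label is present with probability at most $\tfrac{k}{k+1}$; combined with the accuracy requirement on the branch that labels $x_v$ with that missing label, and iterated down $\cT'$, this produces a contradiction once the depth of $\cT'$ exceeds an absolute constant that depends on the constants $\tfrac{k!}{10^4(k+1)^{k+2}}$ and $\log\!\bigl(\tfrac{400k^2+1}{400k^2}\bigr)$ appearing in the statement.

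The main obstacle, I expect, is the $(k+1)$-ary tree Ramsey step. The binary-tree Ramsey of \cite{FioravantiHMST24ramsey} is already delicate; pushing it to arity $k+1$ while keeping the coloring fine enough to capture all $k+1$ sibling behaviors at each node, yet coarse enough that the Ramsey bound remains tower-type (so the depth loss is only $\log^\star$), is the technical heart of the argument. The accuracy--privacy tradeoff constants in the statement — in particular the $(k+1)^{k+2}$ and $k^2$ factors — most likely emerge from balancing the number of colors against the group-privacy cost of $k$ coordinated flips among $k+1$ sibling labels.
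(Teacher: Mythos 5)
Your high-level plan correctly identifies two of the paper's three main ingredients: a $(k+1)$-ary shattered mistake tree extracted from $\LD{\cC}{k}\geq d$, and a Ramsey theorem for $(k+1)$-ary trees applied to a discretized fingerprint of $\cA$'s behavior, yielding a subtree on which $\cA$ is (approximately) comparison-based. (The paper also inserts the standard reduction from a private PAC learner to a private \emph{empirical} learner via Lemma~5.9 of \cite{BunNSV15}, and the coloring is of $m$-chains rather than nodes, but those are minor.) The serious gap is in your closing contradiction, and you omit the third and most technical ingredient entirely: the reduction to the \emph{interior point problem} of \cite{BunNSV15}.

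Your contradiction is stated at a single node $v$: chain group privacy across the $k+1$ sibling datasets, average over the $k+1$ labels, and conclude some label is absent from $\cA(\cdot)(x_v)$ with probability at least $1/(k+1)$, contradicting accuracy "once the depth of $\cT'$ exceeds an absolute constant." Two problems. First, switching which child-subtree a branch follows at $v$ changes \emph{every} training point below $v$, not a single coordinate; group privacy then degrades to $(m_v\epsilon,\dots)$ where $m_v$ is the number of training points below $v$, and you have no control on $m_v$. Second, accuracy is a bound on the empirical loss $\loss{S}{\cA(S)}\leq\alpha$, not a per-node guarantee; a per-point bound like $\Pr[\text{wrong on }x_v]<\tfrac{1}{k+1}$ would require $m\alpha\lesssim\tfrac{1}{k+1}$, i.e.\ $m$ below a fixed constant. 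So at best your argument shows: \emph{if} $m$ is below some constant, the comparison-based subtree has constant depth. That leaves the regime where $m$ is moderately large yet much smaller than $\log^\star d$ completely uncontrolled, and the theorem's conclusion $m=\Omega(\log^\star d)$ does not follow. You also misattribute the source of the $\log^\star$: the Ramsey step gives a comparison-based subtree of depth roughly $\log_{(m+1)}(d)$ (an iterated logarithm of order $m+1$), not $\log^\star d$.

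The $\log^\star$ comes from the interior point problem. Given a private, accurate, comparison-based empirical $k$-list learner for a $(k+1)$-ary mistake tree of depth $n$, the paper builds a private IPP solver on $[n]$: embed the inputs $d_1<\dots<d_m$ as depths on a uniformly random branch to form a training sample $S$, run $\cA(S)$, and output the depth of the deepest ``almost-correct interval'' of length $\lfloor\log^2 n\rfloor$ (one where the accumulated loss is $\leq\tfrac{1}{2(k+1)}$ of its length, strictly beating the $\tfrac{1}{k+1}$ baseline of a random $k$-list). Below the deepest training point the branch is independent of $\cA(S)$, so Chernoff rules out such intervals; within the sample, one finds $k+1$ consecutive training points with all $k+1$ distinct labels and small loss, then uses comparison-based loss plus privacy (replacing a constant number of training points, so the privacy cost stays bounded) to extend the small-loss guarantee to interior points of that interval. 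This yields $m=\Omega(\log^\star n)$, and combining with $n\approx\log_{(m+1)}(d)/2^{O((k+1)^{m+1}m\log m)}$ gives $m=\Omega(\log^\star d)$. Without the IPP reduction, there is no mechanism in your proposal that generates the $\log^\star$ dependence on the comparison-based depth.
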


The proof outline of \Cref{thm:DP_implies_LD} and its main ideas are presented in \Cref{sec:PPL_implies_LD}, while the full proof appears in \Cref{sec:proof_of_thm_A}.

\medskip

In the multiclass setting, the proof of \cite{AlonLMM19,JungKT20} showing that private learnability implies online learnability utilizes the tight connection between the Littlestone and threshold dimensions—one is finite if and only if the other is finite. In their proof, the authors first establish a lower bound on the sample complexity of privately learning threshold functions.
The threshold dimension of a class $\cC \subset \cY^\cX$ is the largest number $d$ such that there are $d$ thresholds embedded in~$\cC$. That is, there exist points $x_1, \ldots, x_d \in \cX$, functions $c_1, \ldots, c_d \in \cC$, and labels~$y_1 < y_2 \in \cY$ such that for all $i, j$, we have $c_i(x_j) = y_1$ if $i \leq j$, and $c_i(x_j) = y_2$ if $i > j$.

We now define the $k$-monotone dimension, a generalization of the threshold dimension.

\paragraph{Monotone functions.} Let $\cX$ be a linearly ordered domain and let $\cY$ be a linearly ordered label space. A function $c:\cX\to \cY$ is a $\cY$-labeled \emph{monotone function} over $\cX$ if $x_1<x_2$ implies $c(x_1)\leq c(x_2)$. 
Alternatively, $c$ is monotone if there exist labels
$i_0<i_1<\ldots <i_m\in\cY$ 
and points $x_1<\ldots<x_m\in\cX$,
such that:

\begin{equation*}
	c(x)=
	\begin{cases}
		i_0 & \text{if } x<x_{1}                                            \\
		i_j & \text{if } x\in [x_{j},x_{j+1}) \text{ for $j=1,\ldots ,m-1$} \\
		i_m & \text{if } x\geq x_m                                          
	\end{cases}.
\end{equation*}
Note that threshold functions are a special case of monotone functions.

\begin{definition}[$k$-Monotone Dimension]\label{def:MD}
	Let $\cC\subset\cY^\cX$ be a class. The \emph{$k$-monotone dimension} of $\cC$, denoted $\MD{\cC}{k}$, is the largest number $d$ such that the following holds:
	There exist 
	\begin{itemize}
		\item [(i)] points $x_1,\ldots,x_d\in\cX$,
		\item  [(ii)] a subset of labels $K\subset\cY$ of size $|K|=k+1$,
		\item [(iii)] and a linear ordering of $K$,
	\end{itemize}
	such that the restriction 
	\[\restr{\cC}{\{x_1,\ldots,x_d\}}=\left\{ c:\{x_1,\ldots,x_d\}\to \cY ~\middle|~ c\in\cC\right\}\] contains all the $K$-labeled monotone functions over $\{x_1<\ldots<x_d\}$.
	If such numbers $d$ can be arbitrarily large, we say $\MD{\cC}{k}=\infty$.
	
\end{definition}
Note that when $k=1$, the $1$-monotone dimension is exactly equal to the threshold dimension.
\medskip

The following theorem provides a lower bound on the sample complexity of privately $k$-list learning a concept class~$\cC$ in terms of its $k$-monotone dimension, establishing that finite $k$-monotone dimension is necessary for DP PAC $k$-list learning.

\begin{theorem}\label{thm:sc_of_monotone_functions}
	Let $k\geq1$. Let $\cC\subset\cY^\cX$ be a concept class with $k$-monotone dimension ${\MD{\cC}{k}\geq d}$, 
	and let $\cA$ be an $\left(\frac{1}{200k(k+1)},\frac{1}{200k(k+1)}\right)$-accurate $k$-list learning algorithm for $\cC$ with sample complexity $m$, satisfying $(\epsilon,\delta(m))$-differential privacy for $\epsilon=0.1$ and $\delta(m)\leq \frac{1}{6(200km)^4\log^2(200km)}$.
	Then, the following bound holds:
	\begin{equation*}
		m = \Omega(\log^\star d), 
	\end{equation*}
	where the $\Omega$ notation conceals a universal numerical multiplicative constant.
	
\end{theorem}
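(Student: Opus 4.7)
The plan is to reduce the private $k$-list learning task on $\cC$ to the classical problem of privately learning threshold functions on a linearly ordered finite domain, and then invoke the $\Omega(\log^\star d)$ sample complexity lower bound established by \cite{AlonLMM19, BunLM20}. The reduction itself is Ramsey-theoretic and exploits the witness structure supplied by the $k$-monotone dimension. First, I unpack $\MD{\cC}{k} \geq d$ to extract points $x_1 < \ldots < x_d \in \cX$ and a set of labels $K = \{y_0, y_1, \ldots, y_k\} \subseteq \cY$ endowed with a linear order, such that every $K$-labeled monotone function over $\{x_1, \ldots, x_d\}$ is realized by some concept in $\cC$. Thus $\cA$ accurately $k$-list-learns the class $\cM$ of all $K$-labeled monotone functions over this finite sub-domain, with the same privacy parameters.

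Since $|K| = k+1$ but the output list has size $k$, on each queried point $x_j$ the list $\cA(S)(x_j)$ fails to cover at least one element of $K$; by the accuracy guarantee, with high probability the omitted element is not the true label. For every tuple of indices $j_1 < \ldots < j_k$ in $[d]$, let $c_{j_1, \ldots, j_k} \in \cM$ be the monotone function whose $\ell$-th transition (from label $y_{\ell-1}$ to $y_\ell$) occurs at $x_{j_\ell}$. Running $\cA$ on canonical labeled samples drawn from such functions and querying a designated set of points induces a coloring of bounded-uniformity tuples of indices: the color records which element of $K$ is (most likely) missing at each query point, a quantity whose range depends only on $k$.

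Since the number of colors depends only on $k$, the classical Ramsey theorem for hypergraphs yields a monochromatic sub-domain of indices of size $s$ satisfying $s \geq \log^{(O_k(1))}(d)$ on which the missing-label behavior is canonical and consistent across the sample-space. On this sub-domain the $k$-list predictor collapses to a two-label predictor — essentially determining whether the true label lies below or above the canonical missing label — turning $\cA$ into a DP PAC learner for threshold functions on a linearly ordered domain of size $s$, with only constant loss in accuracy and with $(\epsilon,\delta(m))$ privacy preserved through post-processing. Applying the $\Omega(\log^\star s)$ lower bound for privately learning thresholds in the approximate-DP regime then yields $m = \Omega(\log^\star s) = \Omega(\log^\star d)$, as desired.

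The main obstacle lies in the Ramsey step. The coloring must be designed so that (i) the palette is of size depending only on $k$, so that Ramsey produces a monochromatic set of size a bounded iterated logarithm of $d$; (ii) within any monochromatic set, the induced algorithm truly solves an honest threshold problem, which requires the canonical missing-label behavior to line up with the intended two-sided comparison against a single cutoff; and (iii) the degradations in accuracy and in the privacy parameters — the latter absorbing the delicate $\delta(m) \lesssim 1/m^4$ slack in the hypothesis — still fall within the regime where the classical threshold lower bound applies. Carefully tracking these parameters is the heart of the quantitative argument.
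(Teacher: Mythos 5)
Your high-level plan — Ramsey over a large sub-domain of the monotone-shattering witness, then invoke the known threshold-style lower bound — is close in spirit to the paper's route, which is indeed modeled on \cite{AlonLMM19}. However there are two genuine gaps.

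\textbf{Uniformity and palette size in the Ramsey step.} You claim that the palette size depends only on $k$, and hence that Ramsey yields a monochromatic set of size $\log^{(O_k(1))}(d)$. This confuses the role of the palette size with the role of the tuple uniformity. The depth of iterated logarithm in the Ramsey bound is governed by the uniformity $t$ of the hypergraph, not the number of colors $q$: with $N\geq\twr_{(t)}(3sq\log q)$ one gets a homogeneous set of size $s$ (this is Theorem~\ref{thm:ramsey}). To make the coloring \emph{intrinsic} — i.e.\ so that the color of a tuple depends only on the tuple — it must encode $\cA$'s behavior on full size-$m$ samples. So the uniformity must be $m+1$, and the resulting homogeneous set has size on the order of $\log_{(m)}(d)$, not $\log^{(O_k(1))}(d)$. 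Moreover, the palette cannot depend only on $k$ either: even recording one bit per query point per left-out index gives a palette of size exponential in $m$. The paper's coloring (Lemma~\ref{lemma:every_alg_is_cb_somewhere}) uses uniformity $m+1$, palette $(100km+1)^{\ell(m+1)}$, and gets $|\cX'|\geq \log_{(m)}(n)/2^{O(\ell m\log km)}$; the final $m=\Omega(\log^\star n)$ still comes out because what matters is the iterated-log depth $m$, not the multiplicative loss.

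\textbf{The ``missing label'' coloring is too coarse, and the reduction to a literal threshold learner does not close.} Recording only which element of $K$ the list omits is a discrete quantity, but the downstream argument needs \emph{quantitative} control: the packing/fingerprinting argument requires a family of pairwise DP-indistinguishable distributions $\cP_1,\ldots,\cP_n$ and events $E_j$ with $\cP_i(E_j)$ jumping by $\gamma=\Theta(1/km)$ across the threshold. This forces the Ramsey color to record rounded marginal probabilities $\cA_{S^{-i},x_i}(y)$ on a $\Theta(1/km)$ grid (as the paper does), not merely the argmax-missing label. Likewise, the step ``the $k$-list predictor collapses to a two-label predictor and $\cA$ becomes a DP learner for thresholds'' is not substantiated; the predictor's missing label can vary with the sample and with the query location relative to the sample, and there is no clean post-processing that turns this into an honest threshold learner with constant accuracy loss. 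The paper sidesteps this: instead of extracting a threshold learner, it proves a jump lemma (Proposition~\ref{proposition:jump}) — using the empirical-accuracy guarantee on balanced monotone samples to show some coordinate of the comparison-based vector $\vec p^{(i)}$ must shift by $\geq 3/(100km)$ — and then constructs the indistinguishable family directly (Proposition~\ref{proposition:establishing_dist_for_packing}) and applies a packing bound (Proposition~\ref{proposition:packing}). That packing bound is the technical core of the threshold lower bound, so your instinct to invoke the threshold result is morally right; but the correct move is to apply the packing argument directly to $\cA$, not to manufacture an actual threshold learner.
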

The proof idea of \Cref{thm:sc_of_monotone_functions} is outlined in \Cref{sec:PPL_implies_MD}, with the full proof in \Cref{sec:proof_of_thm_B}. 

\medskip

In contrast to the multiclass setting, the $k$-Littlestone and $k$-monotone dimensions do not exhibit the same relationship. Specifically, the next theorem demonstrates that the gap between them can be infinite.

\begin{theorem}[$k$-LD vs. $k$-MD]\label{thm:LD_vs_MD}
	Let $k>1$. There exist classes $\cC_L,\cC_M$ such that
	\begin{itemize}
		\item [(i)] $\LD{\cC_L}{k}=1$ and $\MD{\cC_L}{k}=\infty$, 
		\item [(ii)] $\LD{\cC_M}{k}=\infty$ and $\MD{\cC_M}{k}=1$.
	\end{itemize}   
\end{theorem}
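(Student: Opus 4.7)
I would prove \Cref{thm:LD_vs_MD} by exhibiting two explicit constructions, one for each item.

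For part (i), I would take $\cC_L$ to be the class of all monotone functions $c:\bbN\to\{0,1,\ldots,k\}$. The bound $\MD{\cC_L}{k}=\infty$ is immediate: choose $K=\cY$ with its natural ordering and any $d$ points $x_1<\cdots<x_d$; every $K$-labeled monotone function on these points extends piecewise-constantly to a monotone function on $\bbN$, witnessing $\MD{\cC_L}{k}\geq d$ for every $d$. The bound $\LD{\cC_L}{k}\geq 1$ follows from the $k+1$ constant functions, which realize any single label at any point. For $\LD{\cC_L}{k}\leq 1$, I would rule out any depth-$2$ shattered $(k+1)$-ary tree: since $|\cY|=k+1$, the $k+1$ distinct labels at the root must exhaust $\cY$, so there exists a middle label $j\in\{1,\ldots,k-1\}$ (using $k\geq 2$). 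Any $c\in\cC_L$ with $c(x_0)=j$ must take values in $\{0,\ldots,j\}$ on $\{x<x_0\}$ and $\{j,\ldots,k\}$ on $\{x>x_0\}$ by monotonicity. Hence at any subsequent query $x_1$ at most $\max(j+1,k-j+1)\leq k$ distinct labels are reachable inside that branch, contradicting the need for $k+1$ distinct labels at the next level.

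For part (ii), let $T$ be the infinite complete $(k+1)$-ary rooted tree with root $r$, and for each $v\in V(T)$ enumerate its children as $c_0(v),\ldots,c_k(v)$. Set $\cX=V(T)$, $\cY=\{0,1,\ldots,k\}$, and for each infinite branch $p=(r=v_0,v_1,v_2,\ldots)$ define
\[
f_p(v)=\begin{cases} i & \text{if }v=v_j\text{ for some }j\text{ and }v_{j+1}=c_i(v),\\ 0 & \text{otherwise.}\end{cases}
\]
Let $\cC_M=\{f_p:p\text{ is an infinite branch from }r\}$. Then $\LD{\cC_M}{k}=\infty$ because for every $d$ the depth-$d$ truncation of $T$ itself---with the $k+1$ edges leaving each internal vertex labeled $0,\ldots,k$---is a valid shattered $(k+1)$-ary mistake tree of depth $d$; each root-to-leaf path extends to an infinite branch realizing the prescribed labels. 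For $\MD{\cC_M}{k}\leq 1$, I would argue that for any two distinct vertices $v_1,v_2\in V(T)$ the set of realized diagonal pairs $\{y\in\cY:\exists p,\,f_p(v_1)=f_p(v_2)=y\}$ has size at most $2$. A short case analysis settles this: if $v_1,v_2$ are incomparable in $T$, no branch passes through both, so the only diagonal is $y=0$; if $v_1$ is a proper ancestor of $v_2$ through child $i^*$, inspection of branches yields only $y=0$ and $y=i^*$ as diagonals. Since $k\geq 2$, a $2$-point $k$-monotone embedding would require all $k+1\geq 3$ diagonals $(y,y)$ with $y\in K=\cY$, a contradiction. The lower bound $\MD{\cC_M}{k}\geq 1$ is witnessed by the single-point set $\{r\}$ with paths branching differently at the root.

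The main obstacle is identifying the right construction for part (ii): the path-indicator class on the infinite $(k+1)$-ary tree is the canonical witness of large $k$-Littlestone dimension, but to rule out $\MD{\cC_M}{k}\geq 2$ one must verify that the tree's branching structure blocks monotone embeddings on \emph{every} pair of vertices. The key combinatorial observation is the diagonal count---at most one "main" diagonal $(i^*,i^*)$ plus the "off-path" diagonal $(0,0)$ can be realized on any pair---and once this is in place, both parts of the theorem follow by inspection.
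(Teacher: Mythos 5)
Your proposal is correct, and it uses exactly the same two witness classes as the paper: $\cC_L = \cM_k(\bbN)$ (all monotone functions $\bbN\to\{0,\dots,k\}$) and $\cC_M$ (branch-indicator functions on the infinite complete $(k+1)$-ary tree). Part (ii) is essentially identical to the paper's proof: both arguments boil down to counting realizable diagonals — only $(0,0)$ (via a branch missing both vertices) and at most one other $(i,i)$ (where $i$ is the turn from the ancestor toward the descendant) can occur, so two points cannot carry all $k+1\ge 3$ constant patterns. Your phrasing in terms of a "diagonal set" of size $\le 2$ is cleaner than the paper's two-case contradiction but mathematically the same observation.

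The one place you genuinely diverge is the proof that $\LD{\cC_L}{k}\le 1$ in part (i). The paper argues algorithmically: it exhibits an online $k$-list learner whose initial list omits the middle label $\lfloor k/2\rfloor$, and which, after its single forced mistake at $x_t$ (revealing $c(x_t)=\lfloor k/2\rfloor$), thereafter predicts $\{0,\dots,k-1\}$ to the left of $x_t$ and $\{1,\dots,k\}$ to the right. Monotonicity ensures no further mistake, so the optimal mistake bound — and hence $\LD{\cC_L}{k}$, invoking the equivalence of \cite{MoranSTY23} — is $1$. You instead rule out a shattered depth-$2$ mistake tree directly from the definition: since $|\cY|=k+1$, the root's edge labels exhaust $\cY$, and branching on a middle label $j\in\{1,\dots,k-1\}$ forces all concepts on that branch to satisfy $c(x_0)=j$, which by monotonicity restricts $c(x_1)$ to at most $\max(j+1,k-j+1)\le k$ values for any second query $x_1$ — too few for another $(k+1)$-way split. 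Your route is more self-contained (it does not rely on the mistake-bound characterization of $\LD$) and slightly shorter, while the paper's route has the small bonus of producing an explicit learner. Both are correct and comparably elementary.

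One very minor presentational point in your part (ii): when you say the descendant case "yields only $y=0$ and $y=i^*$ as diagonals," it is worth noting explicitly that $y=0$ is always achievable (a branch avoiding both $v_1$ and $v_2$ exists since the tree is infinite and $(k+1)$-ary with $k\ge 2$), and that $y=i^*$ may or may not actually be achievable — but the bound of at most $2$ diagonals is all you use, so the conclusion stands.
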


The proof of \Cref{thm:LD_vs_MD} appears in \Cref{sec:LDvsMD}.

By combining \Cref{thm:DP_implies_LD,thm:sc_of_monotone_functions}, we derive a more comprehensive result:

\begin{boxH}
	\begin{maincorollary}[$k$-LD and $k$-MD are Necessary for PLL]\label{maincor:LD_and_MD_are_necessary}
		Let $\cX$ be a domain and $\cY$ a label space.
		If a concept class $\cC\subset \cY^\cX$ is DP PAC  $k$-list learnable for $k\geq 1$, then
		\begin{itemize}
			\item [(i)] $\LD{\cC}{k}<\infty$, and
			\item [(ii)] $\MD{\cC}{k}<\infty$.
		\end{itemize}
	\end{maincorollary}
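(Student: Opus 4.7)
The plan is to derive the corollary as an immediate consequence of Theorems~\ref{thm:DP_implies_LD} and \ref{thm:sc_of_monotone_functions} via two independent contrapositive arguments, one for each of the two dimensions. Since both theorems turn infinite dimension into an unbounded lower bound on the sample complexity of any DP $k$-list learner satisfying the specified accuracy and privacy guarantees, the corollary should follow essentially by unfolding definitions and pushing $d \to \infty$.

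For part (i), I would suppose for contradiction that $\cC$ is DP PAC $k$-list learnable yet $\LD{\cC}{k} = \infty$. By the definition of DP PAC $k$-list learnability, taking the accuracy parameters $\alpha = \beta = \frac{k!}{10^4(k+1)^{k+2}}$, the privacy level $\epsilon = \log\!\bigl(\tfrac{400k^2+1}{400k^2}\bigr)$, and a $\delta$-schedule satisfying $\delta(m) \leq \tfrac{1}{200 k^2 m^2}$ (all constants permissible under the standard notion of approximate-DP PAC learnability used in the paper), there must exist some finite sample size $m^\star$ and a learner achieving these guarantees on $\cC$. Since $\LD{\cC}{k} = \infty$, for every $d$ the hypothesis of Theorem~\ref{thm:DP_implies_LD} is satisfied, yielding $m^\star = \Omega(\log^\star d)$ for \emph{every} $d$. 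Letting $d$ be large enough makes $\log^\star d$ exceed any fixed $m^\star$, a contradiction.

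For part (ii) I would run the identical argument, but now plugging $\cC$ into Theorem~\ref{thm:sc_of_monotone_functions} with its accuracy parameters $\alpha = \beta = \tfrac{1}{200k(k+1)}$, privacy parameter $\epsilon = 0.1$, and $\delta(m) \leq \tfrac{1}{6(200km)^4 \log^2(200km)}$. Again, the assumed DP PAC $k$-list learner provides some finite $m^\star$ for these parameters, while $\MD{\cC}{k} = \infty$ forces $m^\star = \Omega(\log^\star d)$ for arbitrarily large $d$, contradicting finiteness of $m^\star$.

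The only real bookkeeping step — and the closest thing to an obstacle — is verifying that the specific constants $(\alpha,\beta,\epsilon,\delta(m))$ appearing in each of the two theorems are indeed realized by the paper's notion of DP PAC $k$-list learnability. Under the standard convention that a class is DP PAC $k$-list learnable if for every sufficiently small constants $\alpha,\beta$ and every sufficiently small constant $\epsilon$ (with $\delta(m)$ allowed to decay polynomially in $m$) there exists a learner with \emph{some} finite sample complexity, this verification is immediate and the contradiction goes through. No further ideas beyond the two theorems are needed.
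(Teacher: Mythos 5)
Your proof is correct and follows exactly the same route as the paper, which states Corollary~\ref{maincor:LD_and_MD_are_necessary} as a direct consequence of Theorems~\ref{thm:DP_implies_LD} and~\ref{thm:sc_of_monotone_functions} without spelling out the contrapositive; you have simply made that implicit reasoning explicit.
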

	
\end{boxH}

Our lower bounds for private list learnability follow two different approaches introduced by \cite{AlonLMM19} and \cite{FioravantiHMST24ramsey} for proving lower bounds on private learning.
\cite{AlonLMM19} established that private learnability implies online learnability by proving a lower bound on privately learning thresholds, leveraging the classical Ramsey theorem for hypergraphs. However,
since the tight connection between thresholds and the Littlestone dimension does not extend to more general settings—such as partial concept classes \cite{LongPartial,AlonHHM21} and multiclass learning with infinitely many labels—this approach cannot be applied directly.
Later, \cite{FioravantiHMST24ramsey} provided an alternative proof by reasoning directly about Littlestone trees, applying a Ramsey theorem for trees without relying on thresholds as an intermediate step. This approach resolved the open problem for partial concept classes and multiclass settings with infinitely many labels.
While the proof of \cite{FioravantiHMST24ramsey} may appear more general—since Ramsey theorem for trees was specifically developed to analyze Littlestone trees, which characterize online learnability, and has led to lower bounds in a broader range of settings—our results suggest that the two techniques are ultimately incomparable.

In the multiclass case, thresholds and Littlestone trees serve as essentially the same barrier for private learning. However, in the context of list learning, monotone functions and $k$-Littlestone trees form distinct and incomparable barriers. Both finite $k$-Littlestone and $k$-monotone dimensions are necessary for private $k$-list learning, with examples where one is finite while the other infinite.
It remains an open question whether the finiteness of both $k$-Littlestone and $k$-monotone dimensions is sufficient for private $k$-list learning.

\begin{boxH}
	\begin{question*}\label{open question}
		Let $k>1$ and let $\cC\subset\cY^\cX$ be a concept class with $\LD{\cC}{k},\MD{\cC}{k}<\infty$. Is~$\cC$ DP PAC $k$-list learnable?
	\end{question*}
\end{boxH}

While no single combinatorial parameter is currently known to characterize private list learnability, the fact that both the $k$-Littlestone dimension and the $k$-monotone dimension are necessary—but neither is sufficient—suggests that a more nuanced, potentially multi-parameter or non-combinatorial characterization may be required. See \Cref{subsec:related_work} for further discussion.

Finally, the next theorem summarizes the relationship between private list learnability and online list learnability.

\begin{boxH}
	
	\begin{maincorollary}[PLL $\substack{\Rightarrow\\\nLeftarrow}$ OLL]\label{maincor:PPLvsOLL}
		In the items below, $\cX$ denotes an arbitrary domain and $\cY$ denotes a finite label space.
		\begin{itemize}
			\item [(i)] For every $k\geq 1$, if $\cC\subset \cY^\cX$ is DP PAC $k$-list learnable then it is online $k$-list learnable. 
			      Moreover, this implication applies to arbitrary (possibly infinite) label space $\cY$.
			\item[(ii)] For $k=1$, if $\cC\subset \cY^\cX$ is online $k$-list learnable then it is DP PAC $k$-list learnable.
			\item [(iii)] For every $k>1$, there exists a concept class $\cC_k\subset\{0,\ldots,k\}^\bbN$ that is online $k$-list learnable but is not DP PAC $k$-list learnable.
		\end{itemize}
	\end{maincorollary}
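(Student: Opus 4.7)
The plan is to read off all three items as direct consequences of the structural theorems already stated in the excerpt, together with the online-learnability characterization of \cite{MoranSTY23} and the classical private-equals-online equivalence for the multiclass case.

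For item (i), the strategy is simply to chain \Cref{thm:DP_implies_LD} with the $k$-Littlestone characterization: if $\cC$ is DP PAC $k$-list learnable, then by the negation of \Cref{thm:DP_implies_LD}, its $k$-Littlestone dimension must be finite; by \cite{MoranSTY23}, finite $\LD{\cC}{k}$ is equivalent to online $k$-list learnability, so $\cC$ is online $k$-list learnable. I would also note that the definition of $\LD{\cC}{k}$ and the statement of \Cref{thm:DP_implies_LD} never use finiteness of $\cY$, so the implication extends to arbitrary label spaces with no change.

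For item (ii), in the case $k=1$ the problem reduces to standard multiclass (single-label) PAC learning with a finite label set, for which I would invoke the existing equivalence between online learnability and DP PAC learnability due to \cite{AlonLMM19,BunLM20} in the binary case and \cite{JungKT20,SivakumarBG21} in the multiclass case. In particular, the direction online $\Rightarrow$ DP PAC from those references applies verbatim, so nothing new needs to be proved here.

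For item (iii), the plan is to instantiate with the class $\cC_L$ produced by \Cref{thm:LD_vs_MD}(i), which the introduction explicitly identifies with the class of $(k+1)$-labeled monotone functions over $\bbN$ (and whose label set is a subset of $\{0,\ldots,k\}$). That theorem gives $\LD{\cC_L}{k}=1<\infty$, so by \cite{MoranSTY23} $\cC_L$ is online $k$-list learnable (in fact with a mistake bound of $1$); and it gives $\MD{\cC_L}{k}=\infty$, so the contrapositive of \Cref{thm:sc_of_monotone_functions} rules out DP PAC $k$-list learnability. One should double-check that the parameters in \Cref{thm:sc_of_monotone_functions} (the accuracy, $\epsilon$, and $\delta$) are consistent with the standard definition of DP PAC $k$-list learnability being negated—this is the only mildly delicate step, and it amounts to observing that DP PAC learnability for \emph{every} constant accuracy implies the specific accuracy regime of the theorem, so infinite $k$-monotone dimension blocks DP PAC learnability at arbitrary accuracy. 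With that verified, (iii) is immediate and the corollary is complete.
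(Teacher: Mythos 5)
Your proposal is correct and takes essentially the same route as the paper: item (i) from \Cref{thm:DP_implies_LD} together with the $k$-Littlestone characterization of online $k$-list learnability, item (ii) from the known multiclass equivalence, and item (iii) by combining the class $\cC_L$ from \Cref{thm:LD_vs_MD}(i) with the contrapositive of \Cref{thm:sc_of_monotone_functions}. The parameter-checking caveat you flag for (iii) is a reasonable sanity check, but the paper treats it as immediate.
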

\end{boxH}

Corollary \ref{maincor:PPLvsOLL} follows directly from \Cref{thm:LD_vs_MD} and Corollary \ref{maincor:LD_and_MD_are_necessary}, with the latter being a consequence of \Cref{thm:DP_implies_LD,thm:sc_of_monotone_functions}. 
In particular,
Item \emph{(i)} in Corollary \ref{maincor:PPLvsOLL} follows directly by \Cref{thm:DP_implies_LD}.
On the other hand, Item \emph{(iii)} follows as a corollary of \Cref{thm:LD_vs_MD,thm:sc_of_monotone_functions}. Specifically, the class $\cC_L$ from \Cref{thm:LD_vs_MD} is online $k$-list learnable because it has finite $k$-Littlestone dimension. However, since its $k$-monotone dimension is unbounded \Cref{thm:sc_of_monotone_functions} implies it is not DP PAC $k$-list learnable.
The case $k=1$ corresponds to the known result in the multiclass setting \cite{AlonLMM19,BunLM20,JungKT20}.

\subsection{Related Work}\label{subsec:related_work}

\paragraph{Private learning.}

The study of the PAC learning model under differential privacy was initiated by \cite{KasiLNRS11}, who showed that every finite concept class $\cC$ is privately learnable with a sample complexity of $O(\log|\cC|)$. However, this bound is loose for many specific concept classes of interest and offers no guarantees for infinite classes.
One of the most fundamental and well-studied classes is the class of linear classifiers (also known as threshold functions) in $\bbR^d$. While this class is well known to be PAC learnable, a seminal result by \cite{BunNSV15} demonstrates that it is impossible to properly DP learn this class, establishing a lower bound on the sample complexity for one-dimensional thresholds. Subsequently, in his thesis, \cite{BunThesis} provided an alternative proof of this result using a Ramsey-theoretic argument.
Later, \cite{AlonLMM19} extended this result in the context of binary classification beyond proper learning and showed that if a class $\cC$ has Littlestone dimension $d$ (and hence threshold dimension of at least $\log d$), then every (possibly improper) private PAC learner for $\cC$ requires at least $\Omega(\log^\star d)$ samples, once again leveraging Ramsey's theorem. \cite{JungKT20} and \cite{SivakumarBG21} extended this result to the multiclass setting with a finite number of labels, while \cite{FioravantiHMST24ramsey} further generalized it to the multiclass setting with an arbitrary number of labels and to partial concept classes, by developing Ramsey-type theorems for binary trees.
On the other hand, regarding the opposite direction, \cite{BunLM20} showed that every Littlestone class is privately learnable with a sample complexity of $2^{O(2^d)}$, where $d$ denotes the Littlestone dimension. This bound was later improved to $\tilde{O}(d^6)$ by \cite{GhaziG0M21}.

Another line of research examines private learning through the lens of stability, uncovering interesting connections to replicability and reproducibility, low information complexity, PAC-Bayes stability, and other variants of algorithmic stability \cite{ImpagliazzoLPS22_reproducability,BunGHILPSS23,LivniM20,MoranSS23_zoo,PradeepNG22,MM22_unstable_formula}.

Furthermore, extensive research has been dedicated to understanding which learning tasks can be performed under \emph{pure} differential privacy. \cite{beimel2013characterizing, Beimel19Pure} introduced the concept of \emph{representation dimension}, a quantity that characterizes pure DP learnability. In a subsequent work, \cite{FeldmanX15} discovered an interesting connection with communication complexity, associating every concept class $\cC$ with a communication task whose complexity determines whether $\cC$ is pure DP learnable. Additionally, \cite{alon2023unified} provided a unified characterization for both pure and approximate differential privacy, using cliques and fractional cliques of a graph corresponding to $\cC$.

\paragraph{List learning.}

In learning theory, the framework of list learning was introduced by \cite{brukhim2022characterization} as a means to characterize multiclass PAC learnability. Since then, it gained traction as an area of study in its own right.
Notably, \cite{CharikarP23_list_PAC} provided a formal characterization of list PAC learnability, while \cite{MoranSTY23} independently characterized list learnability within the online model. \cite{HannekeMW24_lis_compression} examined classical principles related to generalization, such as uniform convergence, Empirical Risk Minimization, and sample compression, within the context of list PAC learning. More recently, \cite{PabbarajuS24_list_regression} extended the scope of list learning to regression problems. Furthermore, list learning has played a significant role in the study of boosting, with notable contributions from \cite{BrukhimHM23,BrukhimDMM23,bressan2024dice_games_theory_generalized}. It also has strong connections to the \emph{Precision and Recall} learning framework, introduced by \cite{CohenMMS24_precision_recall}.

\paragraph{Combinatorial parameters in learning theory.}
Learning-theoretic phenomena occasionally require combinations of distinct combinatorial parameters rather than single-dimensional characterizations. One notable example is proper learning.
In both PAC and online settings, proper learnability depends simultaneously on the VC dimension and dual Helly number \cite{BousquetHMZ20,KaneLMY19,HannekeLM21}. Interestingly, the dual Helly number may remain finite even when the VC dimension is unbounded.
Another example that exhibits a related phenomenon is list replicability. While a class has finite list replicability number if and only if it has finite Littlestone dimension, no quantitative relationship exists between them—some classes exhibit arbitrarily large Littlestone dimension yet has list replicability number as low as 2. This disconnect motivates the conjecture that the list replicability number may instead be governed by VC dimension \cite{0001CMY24,0001MY23}.
These examples demonstrate that, while uncommon, certain learning-theoretic phenomena may inherently require multiple combinatorial parameters. This raises the possibility that private list learnability, too, may not be captured by a single parameter, but rather by a more intricate combination of factors.

\section{Technical Highlights}\label{sec:technival_overview}

\subsection{Private $k$-List-Learnability Implies Finite $k$-Littlestone Dimension}\label{sec:PPL_implies_LD}

In this section, we outline the proof of \Cref{thm:sc_of_monotone_functions} and establish a lower bound on the sample complexity of privately learning $k$-Littlestone classes. The full proof is presented in \Cref{sec:proof_of_thm_A}.
Our proof follows the approach of \cite{FioravantiHMST24ramsey} for deriving a lower bound on the sample complexity of privately learning ($k=1$)-Littlestone classes, with several necessary adaptations. We begin with a general overview of the proof and, in \Cref{subsec:comparison_with_ramsey_for_trees}, provide a detailed comparison between our proof and that of \cite{FioravantiHMST24ramsey}.

The first step of the proof is to show the following. Given a $(k+1)$-ary mistake tree $T$ and any $k$-list algorithm $\cA$ that takes $T$-realizable\footnote{A $T$-realizable sample is a sample that is realizable by a branch of $T$.} samples as input, there exists a large subtree of $T$ where the loss of $\cA$ on every test point $x$ depends only on comparisons within the tree. (We elaborate on this below.)
The second step of the proof hinges on a reduction from the \emph{interior point problem}, which was introduced by \cite{BunNSV15} in the context of properly learning thresholds (see \Cref{subsec:preliminaries_DP} for the formulation of the problem).

\paragraph{Step 1: Reduction to algorithms with comparison-based loss.}
The starting point of the proof is to show that, given a $(k+1)$-ary mistake tree $T$, for every algorithm $\cA$ we can find a subtree of~$T$ on which 
the behavior of $\cA$
is based on comparisons with respect to the natural partial order on $T$.
A result of comparing two examples $x'$ and $x''$ can be one of the following outcomes: either
\begin{enumerate}
	\itemsep0em
	\item $x'$ is a $i$-th descendant\footnote{A vertex $v$ is the $i$-th descendant of $u$ in a $b$-ary tree $T$ if $v$ belongs to the subtree rooted at the child of $u$ corresponding to its $i$-th outgoing edge.} of $x''$, where $i=0,\ldots,k$,
	\item $x''$ is a $i$-th descendant of $x'$, where $i=0,\ldots,k$, or
	\item  $x'$ and $x''$ are incomparable.
\end{enumerate}
Notice that there are a total of~$2(k+1)+1$ possible comparison outcomes.
Now, what does it mean for an algorithm to behave in a comparison-based manner? 
\cite{FioravantiHMST24ramsey} considered algorithms whose \textbf{predictions} are based solely on comparisons. Specifically, for every $T$-realizable input sample $S$ and any test point~$x$, the prediction $\cA(S)(x)$ depends only on the comparisons of the test point $x$ with points in~$S$. 
This is a strong algorithmic guarantee, which is key to derive the impossibility result. However, it is also a natural notion, as many reasonable algorithms exhibit comparison-based behavior.

Attempting to extend the definition of comparison-based predictions to the setting of $k$-list algorithms would introduce unnecessary complications to the proof. Additionally, it would lead to a deterioration of the bounds, making them dependent on the arity of the tree. This outcome is counterintuitive because, as the size of the list available to the algorithm increases, one would naturally expect it to have greater predictive power.
To circumvent this issue, we 
revisit the proof of \cite{FioravantiHMST24ramsey} and 
identify the key property necessary for the second step of the proof (the reduction from the interior point problem). Specifically, it is sufficient for the algorithm to have comparison-based \textbf{loss}, rather than requiring its entire predictions to be comparison-based.
Roughly speaking, a $k$-list algorithm $\cA$ has \emph{comparison-based loss} with respect to $T$ if the following holds. For every $T$-realizable input sample $S$ and any test point~$x$ on the branch realizing~$S$, the probability that the list $\cA(S)(x)$ does not contain the correct label of~$x$ depends only on the comparisons of the test point $x$ with points in~$S$.
We remark that, in the case of binary
Littlestone trees (i.e., $k=1$), the definitions of comparison-based loss and comparison-based predictions coincide.

Finally, concluding this step required generalizing the Ramsey theorem for trees. While \cite{FioravantiHMST24ramsey} proved this result for binary trees to show that any algorithm has comparison-based predictions on a deep subtree, we extended it to $b$-ary trees. This generalization allowed us to demonstrate that, for every $k$-list algorithm, there exists a deep subtree on which it exhibits comparison-based loss.

\paragraph{Step 2: Reduction from the interior point problem.}
The second step of the proof consists of establishing a lower bound on the sample complexity of private $k$-list algorithms with comparison-based loss. We do so by showing a reduction from the interior point problem. A randomized algorithm solves the interior point problem on~$[n]$ if for every input dataset $X\in[n]^m$, with high probability it returns a point that lies between $\min X$ and $\max X$. \cite{BunNSV15} showed that solving the interior point problem in a private manner requires a dataset size of $m\geq\Omega(\log^\star n)$ (see \Cref{thm:lower_bound_ipp}). We use this result to derive a lower bound in our setting.

Let $T$ be a $(k+1)$-mistake tree of depth $n$ and let $\cA$ be a private empirical $k$-list learner\footnote{A $k$-list learner is empirical learner for $T$ if it is an empirical learner with respect to input samples that are realizable by (a branch of) $T$. It is enough to consider private empirical PAC learners, since any private PAC learner can be transformed into a private empirical learner, while the sample complexity is increased only by a multiplicative constant factor. See \Cref{lemma:reduction_to_empirical_learner}.}
for~$T$ with comparison-based loss. Let $d_1<\ldots<d_m\in [n]$ be the input for the interior point problem. 
The reduction proceeds as follows.
\begin{enumerate}
	\itemsep0em 
	\item Pick a branch $B$ in $T$ uniformly at random, and associate each point $d_i$ with the point~$x_i$ on~$B$ at depth~$d_i$. This defines an input sequence $S=\big((x_1,y_1),\ldots,(x_m,y_m)\big)$, where the labels $y_i$’s are determined by the branch $B$.
	\item Run $\cA$ on~$S$ and search for an interval of length $l$ on~$B$, where $l$ is sufficiently large, such the accumulated loss over this interval is $\leq \frac{1}{2(k+1)}\cdot l$.\label{itm:resuction_item_2}
	      Note that this loss guarantee outperforms guessing a random list of size $k$, which would result in an expected accumulated loss of $\frac{1}{k+1}\cdot l$.
	\item Return the depth of the first point in the deepest such interval.
\end{enumerate}
The idea that stands behind the reduction is that the predictions of the $k$-list learner have high correlation with the branch on which the input sample $S$ lies on.
Specifically, we show that with high probability, the output of the reduction is an interior point of $\{d_1, \ldots, d_m\}$. This is achieved by proving two key properties: 
\begin{itemize}\label{itm:steps_to_prove_reduction_item_a}
	\itemsep0em 
	\item [(a)] With high probability, there exists such a correlated interval (as described in \Cref{itm:resuction_item_2}) beginning between $x_1$ and $x_m$. 
	\item [(b)]It is very unlikely that such an interval begins after $x_m$.\label{item:}
\end{itemize}
The second item is simple because $\cA$ can only access information up to depth $d_i$, and the part of the (random) branch below it is therefore independent of $\cA$'s output hypothesis. Hence, below this depth, $\cA$ cannot significantly outperform random guessing.

The first item is more challenging, and its proof heavily relies on the fact that $\cA$ is differentially private and has a comparison-based loss. The challenge in deriving the first item arises from the fact that an empirical learner is only guaranteed to be accurate on its training set, whereas we seek accuracy over a long continuous interval that includes many examples outside the training set. We address this by leveraging the comparison-based nature of the algorithm to argue that its loss remains the same on similar points lying between consecutive training examples, and by utilizing its differential privacy to shift training examples as needed.

We note that this part of the proof is also the densest one in \cite{FioravantiHMST24ramsey}’s argument for the case of $k = 1$. Whether their analysis of this part extends to $k$-list learning for arbitrary $k$ is unclear; at the very least, a naive extension would significantly complicate the calculations and lead to an unreasonably large case analysis.
We circumvent this complication by identifying a way to simplify \cite{FioravantiHMST24ramsey}’s argument. Not only does this simplification make the extension to general~$k$ more attainable, but it also simplifies the original proof by \cite{FioravantiHMST24ramsey}, both conceptually and technically.
We discuss this step further in the next section.

\subsubsection{Comparison to \cite{FioravantiHMST24ramsey}}\label{subsec:comparison_with_ramsey_for_trees}

\paragraph{Comparison-based loss vs. comparison-based predictions.}
As elaborated above, one difference between our proof and that of \cite{FioravantiHMST24ramsey} is that they considered algorithms whose entire predictions, rather than just their loss, are comparison-based. While \cite{FioravantiHMST24ramsey}'s notion of comparison-based predictions is more intuitive than our notion of comparison-based loss, using the latter significantly simplifies the proof from a technical perspective and yields better bounds. 

\medskip

We now discuss a more substantial difference, which not only enabled the extension of \cite{FioravantiHMST24ramsey}’s proof approach to list learning but also provided an insight that simplifies their original proof.

\paragraph{Analysis of the reduction from interior point problem.}
Recall that the main challenge in analyzing the reduction is to leverage the properties of the learner—specifically, privacy and comparison-based loss—to extend the guarantee on $\cA$'s empirical loss from the sample to an entire interval (Item~\hyperref[itm:steps_to_prove_reduction_item_a]{(a)} above).

In their proof, \cite{FioravantiHMST24ramsey} showed that, with high probability, one can find two consecutive points $x_i$ and $x_{i+1}$ on the sample $S$ such that:
\begin{enumerate}
	\itemsep0em 
	\item The empirical loss of $\cA(S)$ on $x_i$ and $x_{i+1}$ is relatively small, \label{itm:reduction_ipp_item1}
	\item The labels of $x_i$ and $x_{i+1}$ are different, and\label{itm:reduction_ipp_item2}
	\item There exist two ``matching neighbors" on the branch~$B$. That is, there are points~$x'$ and~$x''$ such that~$x'$ lies on the branch $B$ between $x_{i-1}$ and $x_i$, and~$x''$ lies on~$B$ between~$x_{i+1}$ and~$x_{i+2}$. Furthermore, the label of $x'$ matches the label of $x_i$, and the label of $x''$ matches the label of~$x_{i+1}$.\label{itm:reduction_ipp_item3}
\end{enumerate}
Then, they argued that the loss of $\cA(S)$ on the interval between $x_i$ and $x_{i+1}$ is small as follows. Let~$x$ be a point on the interval between $x_i$ and $x_{i+1}$. Since the tree is binary, the label of $x$ matches the label of one of $x_i$ or $x_{i+1}$ (by \Cref{itm:reduction_ipp_item2}). Suppose it matches the label of $x_i$. Now, define a new sample $S'$ obtained by replacing $x_i$ with its matching neighbor $x'$. Then, the following holds:
\[
	\cA(S)(x) \overset{\text{DP}}{\approx} \cA(S')(x) \overset{\text{CB}}{\approx} \cA(S')(x_i) \overset{\text{DP}}{\approx} \cA(S)(x_i),
\]
where ``$\overset{\text{DP}}{\approx}$'' denotes approximate equality due to differential privacy, and ``$\overset{\text{CB}}{\approx}$'' denotes approximate equality due to comparison-based behavior. 
Therefore, the empirical loss of $\cA$ on an internal point in the interval is controlled by the loss at the endpoints, which is small (by \Cref{itm:reduction_ipp_item1}).

When extending this analysis to our setting of \( k \)-list learning, we encountered several challenges. The first challenge arises from the fact that if we only consider two points in the sample with different labels, we can only guarantee accuracy on points with those specific two labels, rather than across the entire interval. This is problematic because minimizing loss on just two labels can always be trivially achieved by including both labels in the prediction list.  

A natural way to address this issue is to consider \( k \) consecutive intervals whose endpoints have~\( k+1 \) distinct labels and to argue that the algorithm maintains small loss over the \underline{union} of these \( k \) intervals. However, reasoning over a union of \( k \) intervals introduces a fundamental obstacle: finding appropriate matching neighbors becomes effectively impossible. Specifically, shifting training-set points without altering their labels—an essential step in \cite{FioravantiHMST24ramsey}'s approach—no longer seems feasible, making it unclear how to ensure the algorithm's loss remains small on every point in the union of the \( k \) intervals.  

Can this obstacle be overcome? A closer inspection of \cite{FioravantiHMST24ramsey}'s proof reveals that they took great care to ensure that the comparisons between a point \( x \) and the original sample~\(S\), as well as between a point \( x_i \) and the modified sample \( S' \), remained identical. This corresponds to the matching neighbors requirement (\Cref{itm:reduction_ipp_item3}). Our key insight was that by leveraging the full strength of the Ramsey theorem for trees—specifically, a version that ensures all chains have colors determined solely by their order type—we gain significantly more flexibility in shifting training examples. This removes the stringent constraints that arise when attempting to extend \cite{FioravantiHMST24ramsey}'s original analysis, allowing us to circumvent the fundamental obstacle discussed above.
This insight also leads to a simplification of \cite{FioravantiHMST24ramsey}'s proof by eliminating the matching neighbors constraint in \Cref{itm:reduction_ipp_item3}.

\subsection{Private $k$-List-Learnability Implies Finite $k$-Monotone Dimension}\label{sec:PPL_implies_MD}
In this section, we outline the proof of \Cref{thm:sc_of_monotone_functions} that establishes a lower bound on the sample complexity of privately learning monotone functions. The full proof appears in \Cref{sec:proof_of_thm_B}. Our proof follows the approach of \cite{AlonLMM19} for deriving a lower bound on the sample complexity of privately learning thresholds, with several necessary adaptations. It consists of two main steps.  

First, we show that for any $k$-list algorithm, we can identify a large subset~$\cX'\subseteq \cX$ where the algorithm's predictions depend only on comparisons.  
In the second step, we use a packing argument to show that for private $k$-list algorithms that learn monotone functions, $\cX'$ cannot be too large. Combining the bounds derived in each step yields the desired lower bound on the sample complexity.

To provide context, we briefly describe the proof of \cite{AlonLMM19}. Their approach exploits the structure of one-dimensional thresholds and applies the classical Ramsey theorem to identify a \emph{homogeneous set} of large size.  A subset~$\cX'$ of an ordered domain~$\cX$ is considered homogeneous with respect to an algorithm~$\cA$ if whenever the input sample $S$ and the test point $x$ are from $\cX'$, then the prediction of~$\cA$ for a test point~$x$ depends solely on the relative position of~$x$ within the sorted input sample~$S$. In other words, the algorithm behaves in a comparison-based manner on inputs from $\cX'$.  

To find such a subset $\cX'$, \cite{AlonLMM19} model a randomized learner trained on an input sample~$S$ as a deterministic function $\cA(S):\cX\to\Delta(\{0,1\})$, where $\Delta(\{0,1\})$ denotes the space of probability distributions over~$\{0,1\}$. This space is identified with the interval $[0,1]$ via the standard mapping $\mu \mapsto \mu(1)$.  
Under this formulation, $\cA$ is comparison-based if the probability $\cA(S)(x)\in[0,1]$ depends only on the labels of the sorted input sample $S$ and the relative position of $x$ within~$S$.  

By applying the classical Ramsey theorem to a carefully-defined coloring of subsets of size $m+1$ of $\cX$, \cite{AlonLMM19} identify a large subset $\cX'\subseteq \cX$ on which $\cA$ is \emph{approximately} comparison-based in the following sense: there exists a comparison-based algorithm $\tilde{\cA}$ such that, for every sample $S$ and test point $x$ from $\cX'$,  
\[
	|\cA(S)(x) - \tilde\cA(S)(x)| \leq O\left(\frac{1}{m}\right).
\]
Having obtained $\cX'$, \cite{AlonLMM19} leverage the comparison-based nature of $\cA$ to attack privacy by identifying a family of indistinguishable distributions that satisfy a certain threshold property.  

A natural approach to extending the definition of comparison-based algorithms to $k$-list learners is to model a $k$-list learner trained on a sample $S$ as a mapping  
\[
	\cA(S): \cX \to \Delta\left({\cY \choose k}\right),
\]  
where the predicted list for each point $x$ depends only on its relative location with respect to the input sample. A $k$-list learner would then be considered approximately comparison-based if it is close to a comparison-based algorithm in total variation distance.  

Unfortunately, this approach is infeasible. The dimension of the simplex $\Delta\left({\cY \choose k}\right)$ is approximately $|\cY|^k$, which results in an enormous coloring space when applying the Ramsey argument. Moreover, the approach breaks down entirely when the label space $\cY$ is infinite, since the dimension of the simplex $\Delta\left({\cY \choose k}\right)$ would also be infinite.  
To overcome this issue, we significantly relax the comparison-based property by requiring only that the marginal distributions  
\[
	\Pr_{h\sim \cA(S)}[y\in h(x)]
\]  
are comparison-based for every \( y \in \cY \). This reduces the relevant simplex's dimension from \( |\cY|^k \) to just \( |\cY| \).  
We further refine this reduction by observing that it suffices to require the marginal distributions  
$\Pr_{h\sim \cA(S)}[y\in h(x)]$
to be comparison-based only for the \( k+1 \) labels \( y\in K \) that witness the monotone shattering (see \Cref{def:MD}). This further reduces the dimension to \( k+1 \), making the analysis much more tractable.  

While these relaxations lose the intuitive algorithmic semantics of comparison-based algorithms, they significantly simplify the proof and make it feasible to implement in the context of $k$-list learnability.

\section{Preliminaries}\label{sec:preliminaries}

\subsection{List Learning}\label{subsec:preliminaries_learning}

\paragraph{Multi-labeled hypotheses and list learners.}
Let $\cX$ be a domain, let $\cY$ be a label space, and let~$k\in \bbN$. We denote by $\cY \choose  k$ the family of subsets of $\cY$ of size $k$. A \emph{$k$-multi-labeled hypothesis} is a function $h:\cX\to {\cY \choose  k}$. 
Define the $0-1$ loss function of a $k$-multi-labeled hypothesis $h\in {\cY \choose  k}^\cX$, on a labeled example $(x,y)\in \cX\times\cY$, as
\[l\big(h;(x,y)\big)=\1[y\notin h(x)].\]
The \emph{empirical loss} of $h$ with respect to a dataset ${S=\big((x_1,y_1),\ldots,(x_m,y_m)\big)\in (\cX\times\cY)^m}$ is 
\[\loss{S}{h}=\frac{1}{m}\sum_{i=1}^m l\big(h;(x_i,y_i)\big).\]
A dataset $S$ is \emph{realizable} by $h$  if $\loss{S}{h}=0$.

A $k$-list learner is a (possibly randomized) algorithm $\cA:(\cX\times\cY)^\star\to {\cY \choose  k}^\cX$. 
We model a randomized a $k$-list learner $\cA$ as a deterministic map $(\cX\times\cY)^\star\times \cX \to [0,1]^\cY$. Specifically, given an input sample $S$ and a test point $x\in \cX$, we define 
\[\cA_{S,x}(y)=\Pr_{h\sim\cA(S)}[y\notin h(x)] \text{ , where $y\in\cY$}.\]
Notice that when the label space $\cY$ is finite, $\sum_{y\in\cY}(1-\cA_{S,x}(y))=k$. Equivalently, $\sum_{y\in\cY}\cA_{S,x}(y)=|\cY|-k$.

\paragraph{List PAC learning.}
Given a distribution ~$\cD$ over $\cX\times\cY$, the \emph{population loss} of a $k$-multi-labeled hypothesis $h$ with respect to~$\cD$ is
\[\loss{\cD}{h}=\EEE{(x,y)\sim \cD}{l\big(h;(x,y)\big)}.\]

We say that an algorithm $\cA$ is an \emph{$(\alpha, \beta)$-accurate $k$-list learner} for a class $\cC\subset \cY^\cX$, with sample complexity $m$, if for every realizable distribution $\cD$, \[\Pr_{S \sim \cD^m}\left[ \loss{\cD}{\cA(S)}\geq\alpha\right]\leq\beta.\] 
Here, $\alpha$ is called the \emph{error} and~$\beta$ is called the \emph{confidence parameter}.
A class $\cC$ is \emph{PAC $k$-list learnable} if there exist vanishing $\alpha(m),\beta(m)\to 0$ and an algorithm $\cA$ such that for all $m$, $\cA$ is a $(\alpha(m),\beta(m))$-accurate $k$-list learner for~$\cC$ with sample complexity $m$.

\paragraph{List online learning.}
The framework of $k$-list online learning can be formulated as the following online game. At time $t$, an adversary presents an example $x_t\in\cX$. The learner outputs a list of predictions of size $k$, $p_t \sim \cA(S_{<t},x_t)$, where $S_{<t}=\big((x_1,y_1),\ldots,(x_{t-1},y_{t-1})\big)$. Then, the adversary reveals the true label $y_t\in\cY$, and the learner suffers loss $l(p_t;y_t)=\1[y_t\notin p_t]$. The goal of the learner is to minimize the expected number of mistakes 
\[M(T)=\EEE{p_t\sim \cA(S_{<t},x_t)}{\sum_{t=1}^T l(p_t;y_t)}.\]
A concept class $\cC\subset \cY^\cX$ is $k$-list online learnable if there exists a $k$-list online learner $\cA$ and a number $M$, such that for every realizable input sequence of length $T$, $M(T)\leq M$.
A class $\cC$ is $k$-list online learnable if and only if its $k$-Littlestone dimension is finite \cite{MoranSTY23}.

\paragraph{$k$-Littlestone dimension.}
The $k$-Littlestone dimension is a combinatorial parameter introduced by \cite{MoranSTY23} that captures mistake and regret bounds in the setting of list online learning. The definition of the {$k$-Littlestone} dimension uses the notion of \emph{mistake trees}. 
A mistake tree is a $b$-ary\footnote{A $b$-ary tree is a tree in which every internal vertex has exactly $b$ children.} decision tree whose vertices are labeled with instances from~$\cX$ and edges are labeled with labels from~$\cY$ such that each internal vertex has $b$ different labels on its $b$ outgoing edges. A root-to-leaf path in a mistake tree corresponds to a sequence of labeled examples $\big((x_1,y_1),\dots,(x_d,y_d)\big)$. 
The point $x_i$ is the label of the $i$'th internal vertex in the path, and $y_i$ is the label of its outgoing edge to the next vertex in the path.
We say that a class $\cC$ \emph{shatters} a mistake tree if every root-to-leaf path is realizable by $\cC$.
The \emph{$k$-Littlestone dimension} of~$\cC$, denoted $\LD{\cC}{k}$, is the largest number~$d$ such that there exists a complete $(k+1)$-ary mistake tree of depth~$d$ shattered by~$\cC$.
If $\cC$ shatters arbitrarily deep $(k+1)$-ary mistake trees then we write $\LD{\cC}{k}=\infty$. Note that the Littlestione dimension is equal to the $1$-Littlestone dimension $\LD{\cH}{}=\LD{\cH}{1}$.

\subsection{Differential Privacy}\label{subsec:preliminaries_DP}
We use standard definitions and notation from the differential privacy literature. For further background, see, e.g., the surveys \cite{DR14,Vadhan17survey}.
For two numbers $a$ and~$b$, denote $a\overset{\epsilon,\delta}{\approx}b$ if $a\leq e^\epsilon\cdot b +\delta$, and $b\leq e^\epsilon\cdot a +\delta$.
Two probability distributions $p,q$ are \emph{$(\epsilon,\delta)$-indistinguishable} if for every event $E$, $p(E)\overset{\epsilon,\delta}{\approx}q(E)$.

\begin{definition}
	Let $\epsilon,\delta\geq 0$. A randomized learning rule $\cA$ is $(\epsilon, \delta)$-differentially private if for
	every pair of training samples $S, S' \in (\cX\times\cY)^m$ 
	that differ on a single example, the distributions $\cA(S)$ and $\cA(S')$ are $(\epsilon,\delta)$-indistinguishable.
\end{definition} 

A basic property of differential privacy is that privacy is preserved under post-processing; it enables arbitrary data-independent transformations to differentially private outputs without affecting their privacy guarantees. 

\begin{proposition}[Post-Processing]\label{prop:dp_post_processing}
	Let $\cA:\cZ^m\to\cR$ be any $(\epsilon,\delta)$-differentially private algorithm, and let $f:\cR\to\cR'$ be an arbitrary randomized mapping. Then $f\circ\cA:\cZ^m\to\cR'$ is $(\epsilon,\delta)$-differentially private.
\end{proposition}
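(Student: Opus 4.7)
The plan is to prove the proposition by the standard two-step reduction: first handle the case where $f$ is deterministic, then lift the result to randomized $f$ by conditioning on its internal randomness.

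For the deterministic case, fix neighboring samples $S,S'\in(\cX\times\cY)^m$ differing in one entry, and fix an arbitrary event $E'\subseteq\cR'$. The key observation is that $f\circ\cA$ lands in $E'$ precisely when $\cA$ lands in the preimage $E\coloneqq f^{-1}(E')\subseteq\cR$, so
\[
\Pr[(f\circ\cA)(S)\in E']=\Pr[\cA(S)\in E]\overset{\epsilon,\delta}{\approx}\Pr[\cA(S')\in E]=\Pr[(f\circ\cA)(S')\in E'],
\]
where the middle step is the $(\epsilon,\delta)$-DP guarantee of $\cA$ applied to the event $E$. Since $E'$ was arbitrary, the distributions $(f\circ\cA)(S)$ and $(f\circ\cA)(S')$ are $(\epsilon,\delta)$-indistinguishable.

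For the randomized case, I would model $f$ as a deterministic function $\tilde f:\cR\times\Omega\to\cR'$, where $\Omega$ carries a probability measure $\mu$ representing the internal coins of $f$ (so that $f(r)$ has the distribution of $\tilde f(r,\omega)$ with $\omega\sim\mu$), and crucially $\omega$ is sampled independently of $\cA$. For each fixed $\omega\in\Omega$, the map $r\mapsto\tilde f(r,\omega)$ is deterministic, so the previous paragraph gives that for every event $E'$,
\[
\Pr[\tilde f(\cA(S),\omega)\in E']\;\leq\;e^{\epsilon}\Pr[\tilde f(\cA(S'),\omega)\in E']+\delta.
\]
Integrating both sides against $\mu(d\omega)$ and using independence of $\omega$ from $\cA$ yields
\[
\Pr[(f\circ\cA)(S)\in E']\;\leq\;e^{\epsilon}\Pr[(f\circ\cA)(S')\in E']+\delta,
\]
and the symmetric inequality follows by swapping $S$ and $S'$. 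This gives $(\epsilon,\delta)$-indistinguishability for arbitrary events, completing the proof.

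There is no real obstacle here; the only subtlety worth flagging is the measure-theoretic step of writing the randomized map $f$ as a deterministic function of external randomness and invoking Fubini/Tonelli to exchange the expectation over $\omega$ with the probability over $\cA$'s output. In the discrete setting (which suffices for our applications) this is automatic, and in general it requires only that the outcome spaces are standard Borel, which is the implicit setting throughout the paper.
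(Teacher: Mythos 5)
The paper states \Cref{prop:dp_post_processing} without proof, treating it as a standard fact and pointing readers to the differential-privacy surveys \cite{DR14,Vadhan17survey}, so there is no ``paper's proof'' to compare against. Your argument is the standard two-step proof that appears in those references: reduce to deterministic $f$ by observing that $(f\circ\cA)(S)\in E'$ iff $\cA(S)\in f^{-1}(E')$ and then invoke the DP guarantee of $\cA$ on the preimage event; then handle randomized $f$ by fixing the auxiliary randomness $\omega$, applying the deterministic case pointwise, and averaging over $\omega$, using independence of $\omega$ from $\cA$'s coins (and the fact that the additive $\delta$ is a constant, so it passes through the integral unchanged). Both steps are carried out correctly, and the measurability/Fubini caveat you flag is exactly the right thing to note at this level of generality. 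One cosmetic slip: you write the neighboring samples as $S,S'\in(\cX\times\cY)^m$, whereas the proposition's algorithm has domain $\cZ^m$; this should read $S,S'\in\cZ^m$, though it does not affect the argument. Overall the proof is correct and is precisely the standard route.
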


\paragraph{Empirical learners.}
Let $\cC$ be a class. An algorithm $\cA$ is \emph{$(\alpha,\beta)$-accurate empirical $k$-list learner} with sample complexity $m$ if for every realizable sample $S$ of size $m$, 
\[\Pr_{h\sim\cA(S)}[\loss{S}{h}\geq \alpha]\leq \beta.\]

\cite{BunNSV15} proved that any private PAC learner can be transformed into a private empirical learner, while the sample complexity is increased only by a multiplicative constant factor. 

\begin{lemma}[Lemma 5.9 in \cite{BunNSV15}]\label{lemma:reduction_to_empirical_learner}
	Suppose $\cA$ is an $(\epsilon, \delta)$-differentially private $(\alpha, \beta)$-accurate PAC $k$-list learner for an hypothesis class $\cC$ with sample complexity $m$. Then there is an $(\epsilon, \delta)$-differentially private $(\alpha, \beta)$-accurate empirical $k$-list learner for $\cH$ with sample complexity $n = 9m$.
\end{lemma}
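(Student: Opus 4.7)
The plan is to define the empirical learner $\cA'$ by running $\cA$ on a uniformly random subsample of the input dataset, and then verify accuracy and privacy separately. Concretely, on a realizable input $T=\big((x_1,y_1),\ldots,(x_n,y_n)\big)$ of size $n=9m$, let $\cA'(T)$ first draw a subsample $S$ of size $m$ i.i.d.\ from the empirical (uniform) distribution $\hat\cD_T$ on $T$, and then output $\cA(S)$.

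For accuracy, observe that if $T$ is realized by some $c\in\cC$ (so $y_i=c(x_i)$ for all $i$), then the empirical distribution $\hat\cD_T$ is itself realizable by the same $c$. The PAC guarantee of $\cA$, applied to the draw $S\sim \hat\cD_T^{\,m}$, therefore produces $h\sim \cA(S)$ with $\loss{\hat\cD_T}{h}\le\alpha$ with probability at least $1-\beta$; and $\loss{\hat\cD_T}{h}=\loss{T}{h}$ by definition, so $\cA'$ is an $(\alpha,\beta)$-accurate empirical $k$-list learner at sample size $9m$.

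For privacy, the key tool is the \emph{secrecy of the sample} principle: running an $(\epsilon,\delta)$-DP algorithm on a uniform subsample of fraction $p$ yields privacy roughly $(\log(1+p(e^\epsilon-1)),\,p\delta)$ on the full dataset. With $p=m/n=1/9$ one checks $\log(1+\tfrac{e^\epsilon-1}{9})\le\epsilon$ and $\delta/9\le\delta$, so $\cA'$ inherits $(\epsilon,\delta)$-differential privacy. The underlying coupling argument is standard: for neighbouring $T\sim T'$ differing only in the $j$-th coordinate, couple the two subsampling processes so that the differing example lies in $S$ with probability $1/9$ (in which case $S$ and $S'$ differ in at most one entry and $\cA$'s one-step DP guarantee applies) and lies outside $S$ with probability $8/9$ (in which case $S=S'$); averaging the two cases against an arbitrary event of $\cA$'s output range, and invoking post-processing (\Cref{prop:dp_post_processing}) to handle the internal randomness of the subsampling, yields the claimed bound.

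The main point requiring care is the coupling step, because $S$ is drawn i.i.d.\ with replacement and one has to be explicit about which draws can see the modified coordinate; once this is fixed, both directions collapse to standard inequalities (the tautological $\loss{\hat\cD_T}{h}=\loss{T}{h}$ for accuracy, and the concave bound $\log(1+p(e^\epsilon-1))\le\epsilon$ for privacy). The constant $9$ is not special---any sufficiently large constant works to absorb the subsampling loss---and $9$ is simply the value used in \cite{BunNSV15}.
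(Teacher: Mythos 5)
The paper does not actually prove this lemma; it cites Lemma~5.9 of \cite{BunNSV15} and remarks only that the same proof carries over to list learners. Your high-level plan --- draw $m$ points i.i.d.\ from the uniform distribution on the $n=9m$-point input, run $\cA$, and argue accuracy via the fact that this uniform distribution is realizable and its population loss equals the empirical loss on $T$ --- is indeed the approach of \cite{BunNSV15}, and your accuracy argument is correct as written.

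The privacy half, however, has a genuine gap. Because $S$ is drawn i.i.d.\ \emph{with replacement} (which you correctly insist on, since the PAC guarantee needs i.i.d.\ samples), the coupling you describe is not right: for neighbouring $T\sim T'$ differing in coordinate $j$, the number of times $j$ is drawn into $S$ is $\mathrm{Bin}(m,\tfrac{1}{9m})$, not a $\{0,1\}$ variable with $\Pr[1]=1/9$. In particular, with small but nonzero probability $j$ is drawn two or more times, so $S$ and $S'$ can differ in several entries, and the ``one-step DP guarantee'' does not apply in that branch. The bound $\log\bigl(1+p(e^\epsilon-1)\bigr)$ you quote is the amplification formula for Poisson or fixed-size without-replacement subsampling and does not directly govern with-replacement sampling; for the latter, one must invoke group privacy on the event that $j$ is sampled $k\geq 2$ times and then control $\mathbb{E}\bigl[e^{K\epsilon}\bigr]$ and the corresponding $\delta$-blowup, which is where the slack from the rate $1/9$ is actually used. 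You flag exactly this (``one has to be explicit about which draws can see the modified coordinate'') but then assert it ``collapses to standard inequalities'' without supplying the group-privacy step --- and the dichotomy ``in $S$ with probability $1/9$ (differ in at most one entry) / out with probability $8/9$'' should not appear in the final writeup as a statement of fact, since it is false for with-replacement sampling. Replacing that paragraph with the group-privacy analysis over $K\sim\mathrm{Bin}(m,\tfrac{1}{9m})$ (as in \cite{BunNSV15}, and as the comparison to list learning does not affect) would close the gap.
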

We remark that \Cref{lemma:reduction_to_empirical_learner} was proved in \cite{BunNSV15} within the context of traditional PAC learning. However, the proof also applies to more general settings, including the setting of list learning.

\paragraph{Interior point problem.}
An algorithm $\cA : [n]^m \to [n]$ solves the interior point problem on $[n]$ with probability $1-\beta$ if for every input sequence $x_1\ldots x_m\in [n]$,
\[\Pr[\min x_i\leq \cA(x_1,\ldots x_m)\leq \max x_i]\geq 1-\beta\]
where the probability is taken over the randomness of $\cA$; the number of data points $m$ is called the sample complexity of $\cA$.

\begin{theorem}[Theorem 3.2 in \cite{BunNSV15}]\label{thm:lower_bound_ipp}
	Let $0<\epsilon< 1/4$ be a fix number and let $\delta(m)\leq 1/50m^2$. Then for every positive integer~$m$, solving the interior point problem on $[n]$ with probability at least~$3/4$ and with $(\epsilon,\delta(m))$-differential privacy requires sample complexity $m\geq \Omega(\log^\star n)$.
\end{theorem}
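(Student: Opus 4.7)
The plan is to follow the recursive reduction strategy of \cite{BunNSV15}. The key idea is to show that any $(\epsilon,\delta)$-differentially private algorithm solving IPP on $[n]$ with $m$ samples can be converted into one solving IPP on a domain of size roughly $\log n$ with only $m-1$ samples (and only slightly degraded privacy). Iterating this reduction $m$ times shrinks the domain by one level of the $\log^\star$ tower at each step, and at the end the domain must still be large enough for IPP to be nontrivial; this forces $m \geq \Omega(\log^\star n)$.

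The main technical step is the \emph{recursive lemma}: if $\cA$ is $(\epsilon,\delta)$-DP, solves IPP on $[n]$ with $m$ samples at confidence $\geq 3/4$, then there is an $(\epsilon',\delta')$-DP algorithm $\cA'$ solving IPP on $[n']$ with $m-1$ samples at confidence $\geq 2/3$, where $n' = \Omega(\log n)$ and $(\epsilon',\delta') = (O(\epsilon),O(\delta))$. To construct $\cA'$ from an input $z_1 \leq \dots \leq z_{m-1} \in [n']$, I would use an order-preserving embedding $\phi:[n'] \to [n]$ chosen so that any interior point of $\{\phi(z_1),\dots,\phi(z_{m-1})\}$ can be \emph{decoded} back into an interior point of $\{z_1,\dots,z_{m-1}\}$. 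The standard way to achieve this is to encode a subset of $[n']$ of a given prefix structure into ``levels'' of $[n]$ of doubly-exponentially increasing widths; an extra anchor sample is added (or marginalized out) to absorb the $m$-th slot required by $\cA$. One then runs $\cA$ on the embedded dataset and applies the decoding as a post-processing step, invoking \Cref{prop:dp_post_processing} to preserve privacy.

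After establishing the recursive lemma, I would \emph{unroll} it: starting from $n_0 = n$ and defining $n_{i+1} = \Omega(\log n_i)$, after $i$ applications of the lemma the domain size drops to $n_i$, which is essentially $\log^{(i)} n$. The base case is that IPP on a constant-size domain cannot be solved with $0$ samples (the output is input-independent, so the adversary can place all of $\min$ and $\max$ outside it). Hence the recursion can only be applied $m$ times before reaching a contradiction, giving $\log^{(m)} n = O(1)$, equivalently $m = \Omega(\log^\star n)$.

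The main obstacle is controlling the degradation of $(\epsilon,\delta)$ across the $\Theta(\log^\star n)$ recursive calls. Each reduction step inflates $\delta$ by a small polynomial factor in $m$, and accumulating such factors over many levels must stay below $1$ in order to preserve nontrivial correctness of the base case via a standard DP-to-accuracy argument. This is precisely why the hypothesis is stated as $\delta(m) \leq 1/(50m^2)$: this budget is tight enough that even after unrolling, the final $\delta'$ remains $o(1)$, so the base case can be contradicted. The secondary subtlety is designing the embedding $\phi$ so that decoding an interior point succeeds with high probability whenever $\cA$ succeeds; handling the case where $\cA$'s output falls ``between'' two embedded levels (and thus does not correspond to any $z_i$) requires either a bucketing scheme or a randomized rounding step, both of which can be shown to preserve DP via post-processing.
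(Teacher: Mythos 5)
The paper does not prove this statement; it is cited verbatim as Theorem~3.2 of \cite{BunNSV15}, and the paper uses it as a black box. So there is no ``paper's own proof'' to compare against, and I will instead assess your sketch against the original BNSV argument it is evidently trying to reconstruct.

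Your high-level plan --- a recursive reduction that trades one sample for one logarithm of the domain, iterated until a trivial base case forces $m \ge \Omega(\log^\star n)$ --- is indeed the right skeleton of the BNSV proof. But the recursive lemma is the entire content of that proof, and your proposal states its \emph{goal} rather than proving it. Concretely: (1)~You never actually construct the embedding $\phi$ or the decoder. The phrase ``encode a subset of $[n']$ of a given prefix structure into levels of $[n]$ of doubly-exponentially increasing widths'' does not pin down a map, and ``doubly-exponentially increasing widths'' is not how the shrink from $n$ to $\Theta(\log n)$ arises. In the actual construction, the $m-1$ inputs $y_1,\ldots,y_{m-1}\in[n']$ are encoded as the branching depths (longest-common-prefix lengths) of $m$ bit strings of length $\approx n'$, so $[n']$ embeds into $[2^{n'}]$ and the $m$-th ``slot'' is consumed structurally --- it is neither ``added'' nor ``marginalized out'' in the sense you gesture at. (2)~Because a single change to one $y_i$ perturbs more than one embedded string, the reduction requires a group-privacy argument to control $(\epsilon,\delta)$ per step; you do not account for this. (3)~You assert that each step only inflates $\delta$ ``by a small polynomial factor in $m$'' and that this is why $\delta \le 1/50m^2$ is assumed, but you do not track the degradation of the \emph{accuracy} parameter across $\Theta(\log^\star n)$ recursive levels, which in the actual argument requires an amplification/boosting step (running several independent copies and aggregating) to keep the failure probability below $1/4$ at every level --- this is the most delicate part of the BNSV proof and is absent from your sketch. (4)~Finally, the remark that outputs falling ``between two embedded levels'' can be handled by ``a bucketing scheme or a randomized rounding step'' names a genuine difficulty but does not resolve it; in the real argument this is handled by a specific decoding rule tied to the trie structure of the embedding, not a generic rounding device.

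In short, the decomposition into a recursive lemma plus an unrolling step is correct, but the lemma itself --- the embedding, the decoder, the group-privacy accounting, and the accuracy amplification --- is left unproved, so this is a plan for a proof rather than a proof.
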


\subsection{General Definitions and Notations}

\paragraph{Iterated logarithm and tower function.}
The tower function $\twr_{(t)}(x)$ and the iterated logarithm $\log _{(t)}(x)$ are defined by the recursions
\begin{equation*}
	\twr_{(i)}(x)=\begin{cases}
	x & i=1\\
	2^{\twr_{(i-1)}(x)} & i>1
	\end{cases},
	\qquad
	\log_{(i)}(x)=\begin{cases}
	x & i=0\\
	\log({\log_{(i-1)}(x)}) & i>0
	\end{cases}.
\end{equation*}
Note that for all $t$, \big($\twr_{(t)}(\cdot)\big)^{-1}=\log_{(t-1)}(\cdot)$. Finally,
\[\log^\star(x)=\min\{t\mid \log_{(t)}(x)\leq 1\}.\]

\section{$k$-Littlestone Dimension versus $k$-Monotone Dimension}\label{sec:LDvsMD}
In this section we prove \Cref{thm:LD_vs_MD}. 

\paragraph{A class $\cC_L$ with $\LD{\cC_L}{k}=1$ and $\MD{\cC_L}{k}=\infty$.}
Let $k>1$. Let $\cM_k(\cX)$ denote the class of all monotone functions with $k+1$ labels over $\cX$, and set
$\cC_L=\cM_k(\mathbb{N})$. The $k$-monotone dimension of
$\cM_k(\mathbb{N})$ is infinite. We claim that its $k$-Littlestone dimension is $1$.
It is straightforward to verify that $\cM_k(\mathbb{N})$ shatters a $k$-Littlestone tree of depth~$1$. Therefore, it suffices to show that there exists an online $k$-list learner for $\cM_k(\mathbb{N})$ that makes at most $1$ mistake on any realizable input sequence. The learner's strategy is as follows.
Initially, for every test point $x \in \mathbb{N}$, the learner predicts the list
\[\{0,\ldots,\lfloor k/2\rfloor-1,\lfloor k/2\rfloor+1,\ldots,k\},\] 
until the first timestamp $t$ where it makes a mistake on the point $x_t$. From that point onward, the learner modifies its strategy as follows: 
\begin{itemize} 
	\itemsep0em 
	\item For test points $x \leq x_t$, it predicts the list $\{0,\ldots,k-1\}$. 
	\item For test points $x > x_t$, it predicts the list $\{1,\ldots,k\}$. 
\end{itemize}

This strategy ensures that the learner makes at most $1$ mistake on any realizable input sequence, and therefore, $\LD{\cM_k(\mathbb{N})}{k}=1$, as desired.

\medskip
To conclude, $\cM_k(\mathbb{N})$ is online $k$-list learnable. However, by \Cref{thm:sc_of_monotone_functions}, whose proof is outlined in \Cref{sec:PPL_implies_MD}, it is not DP PAC $k$-list learnable. This completes the third item of Corollary~\ref{maincor:PPLvsOLL}.
To complete the proof of \Cref{thm:LD_vs_MD}, it remains to show that there exist a class with finite $k$-monotone dimension, and infinite $k$-Littlestone dimension.  

\paragraph{A class $\cC_M$ with $\LD{\cC_M}{k}=\infty$ and $\MD{\cC_M}{k}=1$.}
Consider $\cX$ as the set of vertices of an infinite complete $(k+1)$-ary decision tree, where each vertex is labeled with a unique point $x \in \cX$, and every vertex has $k+1$ outgoing edges labeled with ${0, \ldots, k}$. 
Define a concept class $\cC \subset \{0, \ldots, k\}^\cX$, consisting of all concepts that realize exactly one infinite branch of the tree, and label every point $x \in \cX$ outside the branch with $0$.

It is straightforward to verify that the $k$-Littlestone dimension of $\cC$ is infinite. We claim, however, that the $k$-monotone dimension of $\cC$ is $1$. In fact, we prove an even stronger statement: $\cC$ does not contain all constant functions over two points.
Assume for contradiction that there exist two points $x_1, x_2 \in \cX$ such that $\cC_n$ realizes all $k+1$ constant functions over $x_1$ and $x_2$.
Observe that one of these points must be a descendant of the other in the decision tree—otherwise, $\cC$ cannot realize, for example, the function $(1, 1)$ (or any other constant function that is not $(0,0)$).

Assume without loss of generality that $x_2$ is an $i$-descendant of $x_1$, meaning $x_2$ belongs to the $i$-th subtree emanating from $x_1$. In this case, $\cC$ cannot realize, for example, the function $(i+1, i+1)$. 
This contradiction establishes that the $k$-monotone dimension of $\cC$ is indeed~$1$.

\section{Private $k$-List-Learnability Implies Finite $k$-Littlestone Dimension}\label{sec:proof_of_thm_A}

In this section we prove \Cref{thm:DP_implies_LD}.

We start by introducing some notations and definitions which will be used in the proof.
Let~$T$ be a $b$-ary tree. We assume an order on the outgoing edges of each internal vertex $v$, labeling them with $ \{0, 1, \ldots, b-1\} $. We refer to the vertex connected to~$v$ by the edge labeled $i$ as the~$i$-th child of~$v$. An \emph{$m$-chain} is a subset of vertices of size $m$ that form a chain with respect to the natural partial order induced by $T$, that is $v<u$ if $u$ is a descendant of $v$. 
The \emph{type} of a chain $C=\{v_1<\ldots<v_m\}$ is a tuple $\vec{t}=\vec{t}(C)\in \{0,\ldots,b-1\}^{m-1}$ where $\vec{t}_i=j$ if and only if $v_{i+1}$ is a $j$'s descendant of $v_i$.
In other words, the type encodes the immediate turns made on the path from~$v_1$ to $v_m$. 
See e.g. \Cref{fig:fig2}.

\begin{figure}[!htb]
	\centering
	\begin{forest}
		[{$\emptyset$}, rectangle, fill=blue ,opacity=0.4
			[{$0$}, rectangle, fill=blue ,opacity=0.4, edge label={node[midway, above, font=\tiny, color=magenta] {\textcolor{blue}{0}}}, edge={black, very thick, dashed}
				[{$00$}, edge label={node[midway, left, font=\tiny, color=magenta]{0}}]
				[{$\underset{\vdots}{01}$}, edge label={node[midway, right, font=\tiny, color=magenta]{1}}]
				[{$02$}, rectangle, fill=blue ,opacity=0.4, edge label={node[midway, right, font=\tiny, color=magenta]{\textcolor{blue}{2}}}, edge={black, very thick, dashed}]
			]
			[{$1$}, rectangle, fill=red, opacity=0.4, edge label={node[midway, right, font=\tiny, color=magenta]{1}}
				[{$10$}, edge label={node[midway, left, font=\tiny, color=magenta]{\textcolor{blue}{0}}}, edge={black, very thick, dashed}
					[{$100$}, rectangle, fill=red, opacity=0.4, edge label={node[midway, left ,font=\tiny, color=magenta]{0}}
						[{$1000$}, edge label={node[midway, left, font=\tiny, color=magenta ]{0}}]
						[{$\underset{\vdots}{1001}$}, edge label={node[midway, left, font=\tiny, color=magenta ]{1}}]
						[{$1002$}, rectangle, fill=red, opacity=0.4, edge label={node[midway, right, font=\tiny, color=magenta ]{\textcolor{blue}{2}}}, edge={black, very thick, dashed}]
					]
					[{$\underset{\vdots}{101}$}, edge label={node[midway, left , font=\tiny, color=magenta]{1}}]
					[{$102$}, edge label={node[midway, right, font=\tiny, color=magenta]{2}}]
				]
				[{$\underset{\vdots}{11}$}, edge label={node[midway, right, font=\tiny, color=magenta]{1}}]
				[{$12$}, edge label={node[midway, right,font=\tiny, color=magenta]{2}}]
			]
			[{$2$}, edge label={node[midway, above, font=\tiny, color=magenta]{2}}
				[{$20$}, edge label={node[midway, left, font=\tiny, color=magenta]{0}}]
				[{$\underset{\vdots}{21}$}, edge label={node[midway, right, font=\tiny, color=magenta]{1}}]
				[{$22$}, edge label={node[midway, right, font=\tiny, color=magenta]{2}}]
			]
		]
	\end{forest}
	\caption{The $3$-chain 
		$C=\{\emptyset,\color{blue}0\color{black},0\color{blue}2\color{black}\}$ is of type $\vec t(C)=(0,2)$.
		The 3-chain $\Tilde{C}=\{1,1\color{blue}0\color{black}0,100\color{blue}2\color{black}\}$ has the same type as $C$.
	}
	\label{fig:fig2}
\end{figure}

Assume now that $T$ is a Littlestone tree, with internal vertices from a domain $\cX$, and edges from a label space $\cY$.
To ease the notations, from this point we will assume that the $b$ distinct labels on the outgoing edges of each internal vertex are $\{0, 1 , \ldots, b-1\}$ where by writing $``i"$, we mean the label on the $i$'th outgoing edge.\footnote{Note that in the case where $|\cY|>k+1$, the probabilities of the learner labeling $x$ as the labels on the $k+1$ edges, does not necessarily sum to $k$. However, every $k$-list learner can be converted to $k$-list learner for which this sum is $k$ (while maintaining utility and privacy), by a simple post-processing step: if the learner outputs a hypothesis that predicts a label different from the labels on the edges, replace it with one of them.}

We identify $(m+1)$-chains in $T$ with $T$-realizable samples of size~$m$ as follows.
An $(m+1)$-chain  $C=\{x_1<x_2<\ldots<x_{m+1}\}$ in $T$ corresponds to a sample $S=\big((x_1,y_1),\ldots,(x_m,y_m)\big) \in (\cX\times\cY)^m$, such that $y_i$ satisfies that $x_{i+1}$ belongs to the $y_i$'s subtree emanating from $x_i$. 
Namely, $(y_1,\ldots,y_m)$ is the type of $C$, $\vec t(C)$. 
An input sequence $S=\big((x_1,y_1),\ldots,(x_m,y_m)\big)$ is $T$-realizable if $S$ is realizable by a branch of $T$.

Let $S=\big((x_1,y_1),\ldots,(x_m,y_m)\big)$ be a sample that is realizable by $T$, and assume for convenience that $S$ is ordered, i.e.\ $x_1<x_2<\ldots<x_m$. An instance~$x$ is \emph{compatible} with $S$ if there exists a branch in $T$ that realizes $S$ and contains $x$. In such a case denote
\[S^{+x}\coloneqq S\cup(x,y_x),\]
where $y_x\in\cY$ is a label such that $S\cup(x,y_x)$ is $T$-realizable.
Note that if $x$ appears earlier on the branch than $x_m$, then there is a unique such $y_x$.
In the complementing case, when $x$ appears after $x_m$, 
there are $b$ different choices for $y$ such that $S\cup(x,y)$ is $T$-realizable. In that case we arbitrarily pick $S^{+x}=S\cup(x,y_x)$ for the smallest such $y$, and therefore $S^{+x}$ is well defined.
The \emph{location} of $x$ in $S$ is \[\mathtt{loc}_S(x)\coloneqq \max\{i\mid x_i<x\}.\] 
If $x<x_1$ then define $\mathtt{loc}_S(x)\coloneqq0$.

Recall, given a compatible test point $x\in\cX$ 
\[\cA_{S,x}(y_x)=\Pr_{h\sim\cA(S)}[y_x\notin h],\]
which is the loss of $\cA(S)$ on the point $(x,y_x)$.

The reduction to algorithms with comparison-based loss is achieved using a Ramsey theorem for trees. This theorem guarantees that for any given algorithm and a sufficiently deep Littlestone tree, there exists a deep enough subtree in which the algorithm has a comparison-based loss with respect to this subtree.

\begin{definition}[Subtree]\label{def:subtree}
	Let $T$ be a $b$-ary tree. Define a \emph{subtree} of $T$ by induction on its depth~$d$. All vertices of $T$ are subtrees of $T$ of depth $d=0$. For $d \ge 1$ a subtree
	of depth $d$ is obtained from an internal vertex $v$ of $T$ and $b$ subtrees of depth $d-1$ each rooted at a different child of $v$.
\end{definition}

\begin{theorem} [Ramsey theorem on trees. Theorem C in \cite{FioravantiHMST24ramsey}]\label{thm:ramsey_trees}
	For all $d,b,c,m$ there exists 
	\[n \le \twr_{(m)}(5\cdot b^{m-2}dc^{b^{m-1}}\log c)\] 
	such that, for every coloring of $m$-chains in the complete $b$-ary tree of depth $n$ with $c$ colors, there exists a $b^{m-1}$-chromatic complete subtree of depth $d$ (i.e.\ its $m$-chains are colored with at most $b^{m-1}$ colors). 
	Furthermore, the obtained subtree is \emph{type-monochromatic}, in the sense that if two $m$-chains are of the same type then they are colored with the same color.
\end{theorem}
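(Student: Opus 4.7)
The plan is to prove \Cref{thm:ramsey_trees} by induction on the chain length $m$, lifting the binary-tree argument of \cite{FioravantiHMST24ramsey} (the $b=2$ case) to arbitrary arity. The base case $m=1$ is essentially a pigeonhole argument: every $1$-chain is a single vertex and has the empty type, so ``type-monochromatic'' collapses to ``monochromatic''. Iterating pigeonhole level by level, one shows that any $c$-coloring of the vertices of a complete $b$-ary tree of depth $n = O(bd\log c)$ contains a monochromatic complete $b$-ary subtree of depth $d$, well within the claimed bound.

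For the inductive step we use the following structural observation: the type of an $m$-chain $\{v\}\cup C$ whose minimum element is $v$ is determined by the pair $\bigl(i,\vec{t}(C)\bigr)$, where $\vec{t}(C)\in\{0,\ldots,b-1\}^{m-2}$ is the type of the $(m-1)$-chain $C$ and $i\in\{0,\ldots,b-1\}$ is the edge-label along which $v$ reaches $\min C$. Consequently, to certify that the subtree $S$ we construct is type-monochromatic for $m$-chains, it suffices to ensure that for every vertex $v\in S$ the induced coloring $\chi_v(C):=\chi(\{v\}\cup C)$ on $(m-1)$-chains below $v$ (restricted to $S$) is type-monochromatic for $(m-1)$-chains.

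The construction proceeds in $d$ rounds, extending $S$ one level at a time. In round $j$ we have a depth-$(j-1)$ candidate subtree $S_{j-1}$ whose leaves are embedded in disjoint sub-subtrees of $T$. At each leaf $v$ we apply the inductive hypothesis to the coloring $\chi_v$ on the remaining sub-subtree rooted at $v$, obtaining a type-monochromatic complete $b$-ary sub-subtree of depth sufficient for all subsequent rounds; declaring its root and its $b$ children to be the extension of $S_{j-1}$ at $v$ produces $S_j$. By the structural observation above, the desired invariant holds at $v$ after this round, and it is preserved at the earlier levels (which are already fixed).

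The main obstacle is the depth bookkeeping. A naive recursion of the form $n_m(d)=1+n_{m-1}(n_m(d-1))$ would stack $d$ iterates of a height-$(m-1)$ tower, giving a tower of height roughly $(m-1)d$ rather than the required height $m$. Following the Erd\H{o}s--Rado philosophy, this is avoided by invoking the inductive hypothesis only once per round while calibrating its internal depth parameter to absorb all later rounds at once, and by paying only a factor of $b^{j-1}$ for the simultaneous applications at the $b^{j-1}$ leaves of $S_{j-1}$. Summing these contributions across the $d$ rounds and plugging into the $\twr_{(m-1)}$ bound from the inductive hypothesis yields the stated bound $n\leq \twr_{(m)}\!\bigl(5\cdot b^{m-2}dc^{b^{m-1}}\log c\bigr)$.
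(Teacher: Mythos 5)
Your proposal does not follow the paper's route, and as written it has a genuine gap in its central claim.

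\textbf{The sufficiency claim is wrong.} You decompose an $m$-chain by its \emph{minimum} element $v$ and claim that it suffices to make each induced coloring $\chi_v(C):=\chi(\{v\}\cup C)$ type-monochromatic on $(m-1)$-chains below $v$. This is not sufficient. Type-monochromaticity is a \emph{global} property of the subtree: if $D=\{v\}\cup C$ and $D'=\{v'\}\cup C'$ are two $m$-chains of the same type with $v\neq v'$, you need $\chi_v(C)=\chi_{v'}(C')$. Making each $\chi_v$ type-monochromatic only ensures that $\chi_v$ is \emph{constant} on each type class below $v$; it imposes no coupling between the constants chosen for different $v$. Two same-type $m$-chains with different minimum vertices may still receive different colors. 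The paper sidesteps this by decomposing an $m$-chain by its \emph{maximum} element instead, defining a single $(m-1)$-chain coloring $\chi^\star$ with $c^b$ colors that records, for each $(m-1)$-chain $C$, the $b$ colors $\chi(C\cup\{w\})$ as $w$ ranges over the child directions of $\max C$. A preliminary pigeonhole pass makes these values independent of the specific representative $w$. Type-monochromaticity of $\chi^\star$ then \emph{does} imply type-monochromaticity of $\chi$, because the encoded information travels with the chain $C$ itself. Appending a descendant below a chain is indexed by finitely many directions; appending an ancestor above a chain is not, which is why the decomposition by minimum does not lead to a usable encoding.

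\textbf{The depth bookkeeping does not close.} Your construction invokes the inductive hypothesis once per round across $d$ rounds. Each invocation of the depth-$(m-1)$ Ramsey bound costs one nested iterated logarithm of height $m-1$, so $d$ sequential invocations produce a tower of height on the order of $(m-1)d$, not $m$. You acknowledge this danger but the proposed fix (``calibrating its internal depth parameter to absorb all later rounds at once'') is exactly what the naive recursion already attempts and cannot avoid. The paper escapes this by not interleaving pigeonhole and induction at all: it first builds the entire subtree $T^\star$ of depth $t=\Ramsey_{m-1}(d;c^b,b)$ by a purely pigeonhole-based vertex-by-vertex procedure (cost: one exponential, from the equivalence relation on $m$-chains through each new vertex), and then applies the inductive hypothesis exactly once to $T^\star$ with the enlarged color palette. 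The tower height increases by exactly one per induction level, which is what yields $\twr_{(m)}$, and the color count grows from $c$ to $c^b$ to $c^{b^2}$ and so on, which is where the $c^{b^{m-1}}$ in the bound comes from. To repair your proposal you would need both to change the decomposition to one that encodes $m$-chain colors into a single well-defined $(m-1)$-chain coloring, and to restructure the construction so that the inductive hypothesis is invoked only once.
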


We note that \cite{FioravantiHMST24ramsey} stated \Cref{thm:ramsey_trees} only for binary trees ($b=2$), but the same proof applies to general $b$-ary trees, with minor necessary modifications. For completeness, we provide the proof for general arity in \Cref{sec:ramsey_theorem__for_trees}.

\begin{definition}[Approximately Comparison-Based Loss]\label{def:rand_comparison_alg}
	Let $T$ be a $(k+1)$-ary mistake tree.
	A (randomized) $k$-list algorithm~$\cA$, defined over input samples of size $m$, has $\gamma$-comparison-based loss on $T$ if the following holds.
	There exist numbers~$p_{\vec t,i}\in[0,1]$ for $\vec t\in \{0,1\}^{m+1}$ and $i\in \{0,\ldots,m\}$
	such that for every input sample $S$ of size $m$ realizable by $T$, and for every $x$ compatible with $S$,
	\begin{equation*}\label{eq:comparison_alg_condition}
		\rvert \cA_{S,x}(y_x)-p_{\vec t,i}\lvert \leq \gamma,
	\end{equation*}
	where $\vec t=\vec t(S^{+x})$ is the type of the sample $S^{+x}$, $y_x$ is the label that satisfies $S^{+x} = S\cup(x,y_x)$, and $i=\mathtt{loc}_S(x)$ is the location of $x$ in $S$.
\end{definition}

Referring to the label $y_x$ as the correct label for $x$, this property of $\cA$ essentially states that the probability of $\cA$ making an error on a fresh example $x$ depends only on the type of the input sequence $S$ and the relative position of $x$ in $T$ with respect to $S$, up to a $\gamma$-approximation factor.

As a consequence of the Ramsey theorem for trees, it turns out that every algorithm can be reduced to an algorithm that has approximately comparison-based loss.

\begin{lemma}[Every Algorithm has an Approx.\ Comparison-based Loss on a Large Subtree]\label{lemma:every_alg_is_comparison_based_somewhere}
	Let $\cA$ be a (possibly randomized) $k$-list algorithm
	that is defined over input samples of size $m$ over a domain $\cX$, 
	and let $T$ be a $(k+1)$-ary mistake tree of depth $n$ whose vertices are labeled by instances from $\cX$. Then, there exists a subtree $T'$ of $ T$ of depth $\frac{\log_{(m+1)}(n)}{2^{a(k+1)^{m+1} m\log m}}$, where $a<24$ is a universal numerical constant, such that $\cA$ has $\left(\frac{1}{100m}\right)$-comparison-based loss on $T'$.
\end{lemma}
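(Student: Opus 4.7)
The plan is to apply the Ramsey theorem for trees (\Cref{thm:ramsey_trees}) with chain length $m+2$ and arity $b = k+1$ to a coloring of $(m+2)$-chains in $T$ that encodes the quantized losses of $\cA$ at every possible ``test-point position'' along the chain, and then extract the comparison-based loss guarantee from the resulting type-monochromatic subtree. The key observation is that a sample $S$ of size $m$ together with a compatible test point $x$ is naturally encoded by the $(m+2)$-chain representing $S^{+x}$, so type-monochromaticity of an appropriate coloring will directly translate into a comparison-based loss.

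Concretely, for each $(m+2)$-chain $C = \{z_1 < z_2 < \cdots < z_{m+2}\}$ in $T$ and each index $j \in \{1, \ldots, m+1\}$, I interpret $z_j$ as the test point and the remaining vertices as a sample representation: set $x^{(j)} := z_j$, let $S^{(j)}$ be the sample of size $m$ whose $(m+1)$-chain is $\{z_1, \ldots, z_{j-1}, z_{j+1}, \ldots, z_{m+2}\}$ (with the first $m$ vertices as instances and $z_{m+2}$ as the extension), and let $y^{(j)} := \vec{t}(C)_j$ be the correct label of $x^{(j)}$. Color $C$ by
\[
\chi(C) := \bigl(\lfloor 100m \cdot \cA_{S^{(j)}, x^{(j)}}(y^{(j)}) \rfloor\bigr)_{j=1}^{m+1} \in \{0, 1, \ldots, 100m\}^{m+1}.
\]
The total number of colors is $c = (100m+1)^{m+1}$. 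Applying \Cref{thm:ramsey_trees} yields a type-monochromatic subtree $T' \subseteq T$ of depth $d$ whenever $n \leq \twr_{(m+2)}\!\bigl(5(k+1)^m \cdot d \cdot c^{(k+1)^{m+1}} \log c\bigr)$. Using the identity $\log_{(m+1)} \circ \twr_{(m+2)} = \mathrm{id}$ and bounding $c^{(k+1)^{m+1}} \log c = 2^{O((k+1)^{m+1} m \log m)}$, this rearranges to the desired depth $d \geq \log_{(m+1)}(n) / 2^{a(k+1)^{m+1} m \log m}$ for a universal constant $a$ (which a careful accounting shows can be taken below $24$).

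To extract the comparison-based loss property on $T'$, for every type $\vec{t} \in \{0, \ldots, k\}^{m+1}$ and every $i \in \{0, \ldots, m\}$, pick any $(m+2)$-chain $C_{\vec{t}}$ of type $\vec{t}$ in $T'$ and define $p_{\vec{t}, i} := \chi(C_{\vec{t}})_{i+1} / (100m)$; type-monochromaticity of $\chi$ on $T'$ ensures this does not depend on the choice of $C_{\vec{t}}$. Given any $(S, x)$ with $S$ realizable by $T'$ and $x$ compatible with $S$, realize $S^{+x}$ as an $(m+2)$-chain $C$ in $T'$ by picking any valid extension within $T'$. Then $x$ occupies position $j = \mathtt{loc}_S(x) + 1$ in $C$, the induced sample $S^{(j)}$ coincides with $S$, the induced label $y^{(j)}$ equals $y_x$, and $\vec{t}(C) = \vec{t}(S^{+x})$. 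Thus $\chi(C)_j$ equals $\lfloor 100m \cdot \cA_{S, x}(y_x) \rfloor$, which is within $1/(100m)$ of $p_{\vec{t}(S^{+x}), \mathtt{loc}_S(x)}$, as required.

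The main obstacle I expect is the bookkeeping around the non-uniqueness of chain representations: different valid extensions in $T'$ yield different $(m+2)$-chains for the same $S^{+x}$, and the sample $S^{(j)}$ extracted from the reduced chain may use a different extension vertex than $S$ originally did. The essential invariances needed are (i) that the \emph{type} of any chain representing $S^{+x}$ equals $\vec{t}(S^{+x})$ regardless of the extension choice, because the last transition is determined by the label $y_m$ and is independent of which descendant is picked; and (ii) that the reduced chain $\{z_1, \ldots, z_{j-1}, z_{j+1}, \ldots, z_{m+2}\}$ represents the same sample $S$ as labeled examples, because the deleted vertex $z_j = x$ lies between two adjacent vertices on the branch realizing $S$, so removing it preserves all transition labels. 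Once these invariances are verified, the type-monochromatic coloring on $T'$ translates directly into the claimed $(1/(100m))$-comparison-based loss property.
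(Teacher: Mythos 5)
Your proof is correct and follows essentially the same approach as the paper: you define a coloring of $(m+2)$-chains encoding the quantized loss of $\cA$ at each possible test-point position, apply the Ramsey theorem for trees with arity $k+1$ to obtain a type-monochromatic subtree, and then read off the numbers $p_{\vec t,i}$ from the colors. The only cosmetic differences from the paper's argument are that you round via the floor rather than to the nearest fraction of the form $r/(100m)$ (both stay within $1/(100m)$), and at position $m+1$ you evaluate the loss at the label $\vec t(C)_{m+1}$ rather than at the fixed smallest valid label as the paper does — both choices are equivalent because the comparison-based-loss guarantee is only ever invoked on types $\vec t(S^{+x})$ whose last coordinate is that smallest label.
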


\medskip The upcoming lemma, in conjunction with \Cref{lemma:every_alg_is_comparison_based_somewhere} and \Cref{lemma:reduction_to_empirical_learner}, implies \Cref{thm:DP_implies_LD}, as we will prove shortly. 

\begin{lemma}\label{lemma:SC_of_CB_alg}[Sample Complexity for Privately Learning Trees]
	Let $T$ be a $(k+1)$-ary mistake tree of depth $n$ and let $\cA$ be a $k$-list algorithm defined over input samples of size~$m$. Assume that
	\begin{enumerate}
		\item $\cA$ is $(\epsilon,\delta(m))$-differentially private for some $\epsilon\leq\log\left(\frac{400k^2+1}{400k^2}\right)$, and $\delta(m)\leq \frac{1}{200k^2m^2}$.
		\item $\cA$ has $\left(\frac{1}{100m}\right)$-comparison-based loss on $T$.  
		\item \(\mathcal{A}\) is an \((\alpha, \beta)\)-accurate empirical $k$-list learner for \(T\) \footnote{Here, by empirical learner for $T$, we mean an empirical learner with respect to input samples that are realizable by (a branch of) \(T\).}, where \(\alpha = \beta = \frac{k!}{10^4(k+1)^{k+2}}\). 
	\end{enumerate}
	Then, $m=\Omega(\log^\star n)$.
	     
\end{lemma}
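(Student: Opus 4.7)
The plan is to reduce from the interior point problem (\Cref{thm:lower_bound_ipp}). Given an instance $d_1<\cdots<d_m$ of the interior point problem on $[n]$, the reduction I propose samples a uniformly random root-to-leaf branch $B$ of $T$, sets $(x_i,y_i)$ to be the vertex of $B$ at depth $d_i$ together with its outgoing edge label on $B$, runs $\cA$ on $S=\big((x_1,y_1),\ldots,(x_m,y_m)\big)$, then scans along $B$ for the deepest interval of length at least some parameter $\ell$ whose accumulated loss under $\cA(S)$ is at most $\frac{1}{2(k+1)}\ell$, and returns the depth at which this interval begins. Since the only randomness used on top of $\cA$ is the choice of $B$, which is independent of the input, the reduction inherits $(\epsilon,\delta(m))$-differential privacy from $\cA$ via \Cref{prop:dp_post_processing}. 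It therefore suffices to show that, for $m$ below the claimed bound, the output lies strictly between $d_1$ and $d_m$ with probability at least $3/4$; \Cref{thm:lower_bound_ipp} then forces $m=\Omega(\log^\star n)$.

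Correctness splits into two claims. Claim (a): with high probability some well-predicted length-$\ell$ interval starts at a depth in $[d_1,d_m]$. Claim (b): with high probability no well-predicted length-$\ell$ interval starts at depth beyond $d_m$. Claim (b) is the easy direction, because beyond $x_m$ the continuation of $B$ is independent of the output of $\cA$: for any fixed hypothesis, the label along each subsequent edge of $B$ is uniform over $\{0,\ldots,k\}$, so the loss of $\cA(S)$ on a length-$\ell$ subpath concentrates around $\frac{1}{k+1}\ell$. A Chernoff bound, followed by a union bound over the $\le n$ possible starting depths, makes the probability of \emph{any} post-$x_m$ interval of length $\ell$ achieving loss below $\frac{1}{2(k+1)}\ell$ vanish, provided $\ell$ is taken polylogarithmic in $n$.

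Claim (a) is the technical core. By the $(\alpha,\beta)$-empirical accuracy assumption, the average loss of $\cA(S)$ on the training set is at most $\alpha$ with probability $\ge 1-\beta$. Since the tree has arity $k+1$, a pigeonhole/averaging argument yields a block of $k$ consecutive training indices whose outgoing edge labels on $B$ realize $k$ distinct values in $\{0,\ldots,k\}$ and whose individual losses are well below $\frac{1}{k+1}$. For each intermediate vertex $x$ on $B$ in this block, the true label $y_x$ coincides with the label $y_j$ of one of the nearby training points $x_j$. Building a modified sample $S'$ by replacing $x_j$ with some vertex $x_j'$ chosen so that the chain $S'\cup\{x\}$ has the same type as $S\cup\{x_j\}$, I would then chain together
\[
\cA_{S,x}(y_x)\;\overset{\text{DP}}{\approx}\;\cA_{S',x}(y_x)\;\overset{\text{CB}}{\approx}\;\cA_{S',x_j}(y_j)\;\overset{\text{DP}}{\approx}\;\cA_{S,x_j}(y_j),
\]
so that the loss at $x$ is controlled by the training-set loss at $x_j$ up to an additive $O(1/m)$ error per comparison-based step and an $e^\epsilon-1=\Theta(1/k^2)$ slack per DP swap. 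Summing these per-vertex bounds over the interval yields accumulated loss strictly below $\frac{1}{2(k+1)}\ell$.

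The main obstacle will be executing the substitution $x_j\mapsto x_j'$ in Claim (a). For $k=1$, \cite{FioravantiHMST24ramsey} insert a ``matching neighbor'' that preserves every pairwise comparison; but for $k$-list one must simultaneously control $k$ consecutive intervals witnessing $k+1$ distinct labels, and no single matching neighbor can accommodate all of them. The resolution, as previewed in \Cref{subsec:comparison_with_ramsey_for_trees}, is to exploit the full strength of the type-monochromatic subtree supplied by \Cref{thm:ramsey_trees}: since $\cA_{S,x}(y_x)$ depends (up to $\frac{1}{100m}$) only on the type of $S^{+x}$ and the location $\mathtt{loc}_S(x)$, any single training example can be relocated to an arbitrary vertex of the correct type without distorting the comparison-based loss profile. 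With this flexibility, one DP swap suffices per intermediate $x$, and the quantitative parameters in the lemma statement ($\alpha=\beta=\frac{k!}{10^4(k+1)^{k+2}}$, $\epsilon=\log\!\left(\tfrac{400k^2+1}{400k^2}\right)$, $\delta\leq\tfrac{1}{200k^2m^2}$) are calibrated so that all accumulated approximation errors stay under the $\frac{1}{2(k+1)}$ threshold, completing the reduction and the lower bound.
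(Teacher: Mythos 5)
Your proposal follows the paper's proof almost step for step: the reduction from the interior point problem via the random-branch algorithm, the split into ``below-the-sample'' (Chernoff plus union bound) and ``within-the-sample'' claims, the location of $k+1$ consecutive training points carrying all $k+1$ labels with small individual loss, the DP/CB/DP chain to push accuracy to off-sample points on the branch, and the Markov step. You also correctly identify the conceptual pivot — using the full type-monochromatic guarantee of \Cref{thm:ramsey_trees} to freely relocate training examples, which is precisely what lets one drop the matching-neighbor bookkeeping and extend the binary argument to arity $k+1$.

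There is, however, one step in your sketch that would fail as written: ``one DP swap suffices per intermediate $x$.'' The comparison-based step $\cA_{S',x}(y_x)\approx\cA_{S',x_j}(y_j)$ requires both $\mathtt{loc}_{S'}(x)=\mathtt{loc}_{S'}(x_j)$ and that $S'^{+x}$ and $S'^{+x_j}$ have the same type. Replacing only $x_j$ with a vertex outside $(x,x_j)$ leaves all training points strictly between $x$ and $x_j$ in place, so their locations remain unequal whenever $x$ and $x_j$ are not adjacent in $S$; in the worst case there are $k-1$ such points, plus $x_j$ itself. The paper therefore evicts \emph{all} training examples lying between $x$ and $x'$ (up to $k$ of them), incurring a DP cost of $k(e^\epsilon-1+\delta)$ on each side of the chain — which is exactly why $\epsilon$ is calibrated to $e^\epsilon-1=\Theta(1/k^2)$ rather than the $\Theta(1/k)$ that a single swap would permit. (Relatedly, your ``$k$ consecutive training indices realizing $k$ distinct values'' should read $k+1$ indices realizing $k+1$ labels, as you in fact say later; with only $k$ labels present, an intermediate $x$ carrying the missing label has no training point to match.) You also leave implicit the preliminary reduction (\Cref{lemma:rescaling_ipp}) that lets one assume $d_{i+1}-d_i>\log^2 n$, which is needed for a length-$\lfloor\log^2 n\rfloor$ interval to fit strictly inside $(x_i,x_{i+k})$; this is routine but must be stated.
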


\paragraph{Proof of \Cref{thm:DP_implies_LD}.} 
Finally, \Cref{thm:DP_implies_LD} follows from \Cref{lemma:reduction_to_empirical_learner,lemma:SC_of_CB_alg,lemma:every_alg_is_comparison_based_somewhere}.
Due to its technical nature, the full proof appears in \Cref{sec:add_proof_thm:DP_implies_LD}.
Therefore, it is left to prove \Cref{lemma:SC_of_CB_alg,lemma:every_alg_is_comparison_based_somewhere}.

\subsection{Proof of \Cref{lemma:every_alg_is_comparison_based_somewhere}}

\begin{proof}
	Define a coloring of $(m+2)$-chains of $T$ as follows.
	Let $C=\{x_1<\ldots<x_{m+2}\}$ be an $(m+2)$-chain in $T$. 
	Recall that $C$ corresponds to a sample $S=\big((x_1,y_1),\ldots,(x_{m+1},y_{m+1})\big)$ of size $m+1$ where $(y_1,\ldots,y_{m+1})=\vec t(C)$ is the type of $C$.
	For each $i\in\{1,\ldots, m+1\}$, let $S^{-i}$ denote the sample $S\setminus(x_i,y_i)$. 
	Set~$q_i(C)$ to be the fraction of the form $\frac{r}{100m}$ that is closest to $A_{S^{-i},x_i}(y_{x_i})$ (in case of ties pick the smallest such fraction). The color assigned to $C$ is the list $(q_1(C),\ldots,q_{m+1}(C))$.
	\footnote{Notice that the color assigned to \(C\) depends only weakly on the last vertex \(x_{m+2}\) via the label \(y_{m+1}\). We find it more convenient and less cumbersome to increase the size of the chain by one rather than keeping track of the labels.}

	Therefore, the total number of colors is at most $c\coloneqq (100m+1)^{m+1}$. By Ramsey theorem for trees (\Cref{thm:ramsey_trees}) there exists a subtree $T'$ that is type-monochromatic with respect to the above coloring of depth
	\begin{align*}
		d\geq & \frac{\log_{(m+1)}(n)}{5 \cdot (k+1)^m c^{(k+1)^{m+1}}\log c}                                         \\
		=     & \frac{\log_{(m+1)}(n)}{5\cdot (k+1)^m (100m+1)^{(m+1)(k+1)^{m+1}}(m+1)\log(100m+1)}                   \\
		=     & \frac{\log_{(m+1)}(n)}{2^{\log 5+m\log(k+1)+\log(100m+1)(m+1)(k+1)^{m+1}+\log(m+1)+\log\log(100m+1)}} \\
		\geq  & \frac{\log_{(m+1)}(n)}{2^{a(k+1)^{m+1} m\log m}},                                                     
	\end{align*}
	where $1<a<24$ is a universal numerical constant.
	For every possible type $\vec t\in\{0,1\}^{m+1}$ and $i\in\{0,\ldots,m\}$, set $p_{\vec t,i}$ to be $q_{i+1}(C)$ where $C$ is any $\vec t$-typed $(m+2)$-chain in $T'$. Note that $p_{\vec t,i}$ is well defined since $T'$ is type-monochromatic.
	It is straightforward to verify that $\cA$ has $\left(\frac{1}{100m}\right)$-comparison-based loss on $T'$ with respect to $\{p_{\vec t,i}\}$, as wanted. 
\end{proof}

\subsection{Proof of \Cref{lemma:SC_of_CB_alg}}\label{sec:proof_of_reduction_to_ipp_lemma}

The proof of \Cref{lemma:SC_of_CB_alg} follows the same procedure as the reduction from the interior point problem given by \cite{FioravantiHMST24ramsey}. 
The reduction is carried out by constructing an algorithm $\tilde \cA$ designed to solve the interior point problem. 
For the sake of the upcoming analysis of the algorithm, we assume that the input points to the interior point problem are not too close to each other, as justified by the following lemma, which its proof is deferred to \Cref{sec:add_proof_lemma:rescaling_ipp}.

\begin{lemma}\label{lemma:rescaling_ipp}
	Let $\cA:[n]^m\to[n]$ be an $(\epsilon,\delta)$-differentially private algorithm, and let $C(n)\leq \log ^2 n$. Assume that for every input sequence $x_1,\ldots,x_m$ such that $min_{i\neq j}|x_i-x_j|\geq C(n)$,
	\[\Pr[\min x_i\leq \cA(x_1,\ldots,x_m)\leq \max x_i]\geq\frac{3}{4}.\]
	Then, $m\geq\Omega(\log^\star n)$.
\end{lemma}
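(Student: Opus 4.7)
The plan is to prove \Cref{lemma:rescaling_ipp} by a direct reduction from the standard interior point problem (\Cref{thm:lower_bound_ipp}). From the hypothesized $(\epsilon,\delta)$-DP algorithm $\cA$ on $[n]$ that succeeds only on inputs with pairwise spacing $\ge C(n)$, I will construct an $(\epsilon,\delta)$-DP algorithm $\tilde\cA$ that solves the standard IPP on a slightly smaller domain $[n']$ with $n' = \lfloor n/(2mC(n))\rfloor$, using the same sample size $m$ and succeeding with probability $\ge 3/4$. Plugging into \Cref{thm:lower_bound_ipp} then gives $m\ge\Omega(\log^\star n')$, and because $C(n)\le\log^2 n$ an elementary iterated-log estimate yields $\log^\star\!\big(n/(2mC(n))\big)=\log^\star n - O(1)$, producing the desired bound.

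The construction of $\tilde\cA$ is as follows. Given $y_1,\dots,y_m\in[n']$, define coordinate-wise $x_i \;=\; 2m\,C(n)\,y_i + C(n)\,i$, run $\cA$ on $(x_1,\dots,x_m)$ to obtain $z\in[n]$, and output $\lfloor z/(2m\,C(n))\rfloor$. The additive index term $C(n)\,i$ serves as a deterministic tiebreaker that guarantees spacing even when the $y_i$'s have duplicates: for $i\ne j$, if $y_i=y_j$ then $|x_i-x_j|=C(n)|i-j|\ge C(n)$, and if $y_i\ne y_j$ then $|x_i-x_j|\ge C(n)\big(2m-(m-1)\big)\ge C(n)$. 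Thus the spacing hypothesis of the lemma applies to every invocation of $\cA$ made by $\tilde\cA$, and the success probability $\ge 3/4$ carries over.

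Privacy of $\tilde\cA$ is immediate: because $x_i$ depends only on $y_i$ and the fixed index $i$, changing a single $y_i$ changes only the $i$-th coordinate of the $x$-vector, so $(\epsilon,\delta)$-DP of $\cA$ together with post-processing (\Cref{prop:dp_post_processing}) passes to $\tilde\cA$ with the same parameters; these satisfy the hypotheses of \Cref{thm:lower_bound_ipp} in the regime in which \Cref{lemma:rescaling_ipp} is invoked. Utility is routine: on the event $\min_i x_i \le z \le \max_i x_i$, one has $\min_i x_i \ge 2m\,C(n)\min_i y_i + C(n)$ and $\max_i x_i \le 2m\,C(n)\max_i y_i + C(n)\,m$, so $z/(2m\,C(n))\in\big[\min_i y_i + \tfrac{1}{2m},\ \max_i y_i+\tfrac{1}{2}\big]$, and rounding down yields an interior point of $\{y_1,\dots,y_m\}$.

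The main---indeed only---nontrivial design choice is the embedding itself, which must simultaneously (i) guarantee spacing $\ge C(n)$ on \emph{every} input (including duplicated $y_i$'s, which naturally arise in standard IPP inputs) and (ii) map single-coordinate changes in $y$ to single-coordinate changes in $x$ so that privacy transfers without any degradation. Using the index $i$ as a built-in tiebreaker costs only a factor of $2m$ in the domain size, which is negligible inside $\log^\star$ whenever $C(n)\le\log^2 n$. Once the embedding is fixed, the three verifications above (spacing, privacy, utility) are mechanical, and a single application of \Cref{thm:lower_bound_ipp} completes the proof.
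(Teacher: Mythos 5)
Your proof is correct and takes essentially the same route as the paper's: reduce to the interior-point lower bound (\Cref{thm:lower_bound_ipp}) via a linear embedding that enforces the spacing hypothesis, then close with an iterated-logarithm estimate. The paper compresses the entire reduction into the single phrase ``by rescaling, $\cA$ solves the interior point problem on a domain of size $n/C(n)$''; taken literally (map $y_i\mapsto C(n)\,y_i$) this overlooks that interior-point inputs may contain duplicates, which then violate the $\ge C(n)$ spacing requirement. Your additive tie-breaker $C(n)\,i$ plugs exactly that hole, and because it touches only the $i$-th coordinate the map stays per-coordinate, so privacy transfers without loss; in this respect your write-up is more careful than the paper's, at the modest cost of a factor of $2m$ in the target domain. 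Two small points to tidy up. First, with $n'=\lfloor n/(2mC(n))\rfloor$ and $y_i\le n'$, $i\le m$, you can get $x_i$ as large as $n+C(n)\,m>n$, so shrink $n'$ slightly (subtract $1$, say, or take $n'=\lfloor n/(3mC(n))\rfloor$) to keep the embedding inside $[n]$. Second, the claim $\log^\star\bigl(n/(2mC(n))\bigr)=\log^\star n - O(1)$ is not true uniformly in $m$, since $m$ appears on the left; the standard fix is to note that if $m\ge\log^\star n$ the conclusion is immediate, so one may assume $m<\log^\star n$, whence $n/(2mC(n))\ge n/(2\log^\star n\cdot\log^2 n)\ge\sqrt{n}$ for large $n$, and the paper's own calculation (showing $\log^\star(n/\log^2 n)\ge\tfrac{1}{2}\log^\star n$) then closes the argument. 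The paper sidesteps this $m$-dependence only because its rescaling (implicitly restricting to distinct inputs) has an $m$-free target domain $n/C(n)$.
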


Before describing the algorithm, we will introduce notation which will be used in this proof.
Given a branch $B$ in a tree and an example $z$ on $B$, we denote $b_z=y_z$ where $(z,y_z)\in B$. 

\begin{definition}[Almost-correct interval]
	Let $B$ be a branch in a $(k+1)$-ary mistake tree $T$, and let $h$ be a $k$-labeled hypothesis over the domain of $T$. Denote by $Z=(z_1<\ldots<z_l)$ a sequence of consecutive points of length $l$ in $B$. 
	We say that $Z$ is an almost-correct interval with respect to $h$, if
	\[\sum_{i=1}^l\1[b_{z_i}\notin h(z_i)]\leq \frac{1}{2(k+1)}\cdot l.\]
\end{definition}

Note that the guaranty of accumulated loss less than $\frac{1}{2(k+1)}\cdot l$ is better than a random guess, which has an accumulated loss of $\frac{1}{k+1}\cdot l$.
In other words, having an almost-correct interval implies that $h$ follows the branch with a relatively high probability (compared to a random guess).

\paragraph{Reduction from interior point problem.}
Let $T$ be a $(k+1)$-ary mistake tree of depth $n$, and $\cA$ be a $k$-list algorithm as in \Cref{lemma:SC_of_CB_alg}. Let $d_1\ldots d_{m}\in [n]$ be natural numbers, the input to the interior point problem. For convenience, assume they are ordered $d_1\leq \ldots\leq d_m$. Additionally, assume that $d_{i+1}-d_i>\log^2 n$ for all $1\leq i<m$.
Define the algorithm $\tilde{\cA}$ as follows.

\renewcommand{\algorithmicrequire}{\textbf{Input:}}
\renewcommand{\algorithmicensure}{\textbf{Output:}}

\begin{algorithm}[H]
	\caption{$\tilde \cA$ (Reduction from IPP, \cite{FioravantiHMST24ramsey})}\label{alg:ipp}
	\begin{algorithmic}
		\Require $d_1\ldots,d_m$.
		\State - Sample uniformly at random a branch $B\sim\mathtt{Branches}(T)$.
		\State - $S \gets \left((x_1,y_1),\ldots (x_m,y_m)\right)$, where $x_i$ is the point of depth $d_i$ in $B$, and $y_i=b_{x_i}$.
		\State - Sample $h\sim\cA(S)$.
		\State - Search for almost-correct intervals $(z_1\ldots z_l)$ with respect to $h$, of length $l= \lfloor\log ^2 n\rfloor$, where $n=\mathtt{depth}(T)$.
		\Ensure Output $\max \big\{\mathtt{depth}(z_1)\mid Z=(z_1\ldots z_l) \text{ is an almost-correct interval of length $l= \lfloor\log ^2 n\rfloor$}\big\}$.

		In other words, output the depth of the first point of the deepest almost-correct interval. If there are no almost-correct intervals of length $l= \lfloor\log ^2 n\rfloor$, return $n$.
	\end{algorithmic}
\end{algorithm}

From now on, when we refer to $Z$ as an almost-correct interval, we mean that it is an almost-correct interval with respect to $\cA(S)$, and that its length is $l= \lfloor\log ^2 n\rfloor$.

\begin{proposition}\label{proposition:reduction}
	Let $\cA$ be a $k$-list learner as in \Cref{lemma:SC_of_CB_alg}. Then, $\tilde \cA$ is $(\epsilon,\delta(m))$-differentially private, and with probability at least $\frac{3}{4}$ its output lies between $d_1$ and $d_m$.
\end{proposition}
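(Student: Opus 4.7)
\emph{Privacy.} I would verify this by post-processing. View $\tilde\cA$ as the pipeline: sample a uniform branch $B \sim \mathtt{Branches}(T)$; read off $S$ from $B$ and $(d_i)$; run $\cA(S)$ to obtain $h$; then deterministically compute the deepest almost-correct length-$l$ interval from $h$ and $B$. For neighboring inputs differing in a single coordinate $d_i$, couple $B$ to be identical across the two executions. Then $S$ and $S'$ differ in exactly one labeled example, so the $(\epsilon,\delta(m))$-DP of $\cA$ propagates through \Cref{prop:dp_post_processing} to $\tilde\cA$ (the post-processing depends on the input only through $h$ once $B$ is fixed), and averaging over the shared $B$ preserves indistinguishability.

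\emph{Accuracy.} To show $\Pr[d_1 \leq \tilde\cA(d) \leq d_m] \geq 3/4$, I would decompose into two sub-claims, each holding with probability at least $7/8$, and union-bound. Sub-claim (A): no almost-correct length-$l$ interval of $B$ begins at a depth strictly greater than $d_m$. Sub-claim (B): some almost-correct length-$l$ interval of $B$ begins at a depth in $[d_1,d_m]$. Together these force the maximum-first-point-depth to lie in $[d_1,d_m]$.

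\emph{Sub-claim (A).} The key observation is that $S$ is a deterministic function of $B$'s turn labels at depths $0,\ldots,d_m$, while the turn labels of $B$ at depths $> d_m$ remain i.i.d.\ uniform over $\{0,\ldots,k\}$ and independent of $S$ (hence of $h \sim \cA(S)$). For any fixed $h$ and fixed starting depth $d > d_m$, the indicators $\1[b_z \notin h(z)]$ over the $l$ consecutive points of $B$ starting at depth $d$ are i.i.d.\ $\mathrm{Bernoulli}(1/(k+1))$ with mean $l/(k+1)$, so a Chernoff bound gives that the interval is almost-correct with probability at most $\exp(-\Omega(l/(k+1)))$. A union bound over the $\leq n$ candidate starting depths, together with $l = \lfloor \log^2 n \rfloor$, makes this vanishing.

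\emph{Sub-claim (B).} This is the technical heart and, I expect, the main obstacle. It combines all three hypotheses on $\cA$: empirical accuracy, comparison-based loss, and differential privacy. Markov applied to the empirical accuracy assumption gives $\mathbb{E}_{h \sim \cA(S)}[\loss{S}{h}] \leq \alpha + \beta$, so $\frac{1}{m}\sum_j \cA_{S,x_j}(y_j)$ is tiny; in particular, only a small fraction of training points $x_j$ have error probability close to $1/(2(k+1))$. The task is to extend this control from the training points to the length-$l$ interval of $B$ between consecutive training examples, where $\cA$ offers no direct guarantee. For a test point $x$ on $B$ strictly between $x_j$ and $x_{j+1}$, I would apply differential privacy to shift one training example to a position near $x$, obtaining a neighboring sample $S'$ with $\cA_{S,x}(b_x) \approx \cA_{S',x}(b_x)$, and then invoke the comparison-based loss hypothesis to equate $\cA_{S',x}(b_x)$ with $\cA_{S',x_{j'}}(y_{j'})$ for a suitable training point $x_{j'}$ of $S'$, up to an additive $1/(100m)$ slack, provided the types and locations of the relevant augmented samples are aligned. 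The separation $d_{i+1}-d_i > \log^2 n$ afforded by \Cref{lemma:rescaling_ipp} provides enough room to perform such shifts. The genuine difficulty for $k \geq 2$, highlighted in \Cref{subsec:comparison_with_ramsey_for_trees}, is that an interior point of the interval may carry any of $k+1$ distinct labels, and controlling all $k+1$ possibilities uniformly requires using the type-monochromatic conclusion of \Cref{thm:ramsey_trees} in its full strength — a subtlety absent in the binary case of \cite{FioravantiHMST24ramsey}.
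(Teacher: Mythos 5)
Your privacy argument is exactly the paper's: couple the two executions on the shared branch $B$, observe that $S,S'$ are then neighboring samples, and invoke post-processing (\Cref{prop:dp_post_processing}). Your sub-claim~(A) matches \Cref{lemma:deep_sequences_are_not_long_almost_correct} verbatim, and the Chernoff-plus-union-bound computation is right. The overall decomposition into ``no deep spurious interval'' and ``some interval inside $[x_1,x_m]$'' is also the paper's; they sum failure probabilities rather than union-bounding at $7/8$ each, but that is cosmetic.

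The gap is in sub-claim~(B), which you correctly identify as the crux but whose sketch is still essentially the $k=1$ argument. Two things are missing. First, your setup ``$x$ on $B$ strictly between $x_j$ and $x_{j+1}$'' with a single neighboring shift does not survive $k\geq 2$: a point $x$ in that gap can carry any of $k+1$ labels, the adjacent training points give you only two of them, and a training point with the matching label could be arbitrarily far away, so ``one shift'' becomes an unbounded number of shifts with unbounded DP loss accumulation. The paper's resolution is a separate structural lemma (\Cref{lemma:k+1_good_points_in_S}): using that the branch is uniformly random, a Chernoff bound shows that a constant fraction of the disjoint $(k+1)$-windows of consecutive training points carry all $k+1$ labels, and then an averaging argument against the empirical-loss budget $\alpha+\beta$ extracts one such window $x_i,\ldots,x_{i+k}$ on which every point is $\xi$-correct. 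This is the step that bounds the shift count by $k$ in \Cref{lemma:small_error_for_point_inside} and hence the DP degradation by $2k(e^\epsilon-1+\delta)$; your sketch has no substitute for it, and ``small fraction of training points have error close to $1/(2(k+1))$'' is too weak to produce such a window. Second, the role you assign to the type-monochromatic strengthening of \Cref{thm:ramsey_trees} is slightly off: that property is what lets you shift several intervening training points without needing ``matching neighbors'' (cf.\ \Cref{subsec:comparison_with_ramsey_for_trees}), not what handles the $k+1$ distinct labels — that is handled combinatorially by \Cref{lemma:k+1_good_points_in_S}. Once both pieces are in place the Markov step you describe closes the argument exactly as in the paper.
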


\Cref{lemma:SC_of_CB_alg} is a direct corollary of \Cref{proposition:reduction,lemma:rescaling_ipp}.
In order to prove \Cref{proposition:reduction}, we first need to collect some lemmas.

Let $B$ and $S$ be the sampled branch and sequence, respectively, as described in \Cref{alg:ipp}. The key idea of the proof is to show that the probability of having an almost-correct interval below $S$ (i.e.\ $x_m<z_1$) is low, while the probability of having an almost-correct interval within $S$ (i.e.\ $x_1<z_1<z_l<x_m$) is high. This ensures that, with high probability, the output of $\tilde \cA$ is an interior point as desired.

\begin{lemma}[Almost-correct intervals below $S$ are unlikely]\label{lemma:deep_sequences_are_not_long_almost_correct}
	Let $S=\left((x_1,y_1),\ldots (x_m,y_m)\right)$ be the sequence from the description of Algorithm \ref{alg:ipp}. Then, the probability that there is an almost-correct interval $\left(z_1<\ldots <z_l\right)$ in the branch $B$ with $x_m<z_1$, is at most $n\cdot\exp{\left(-\frac{1}{8(k+1)}\lfloor\log^2 n\rfloor\right)}$.
\end{lemma}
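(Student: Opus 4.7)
The plan is a union bound over the possible starting depths $d^{*} > d_{m}$ of an almost-correct interval, combined with a multiplicative Chernoff estimate for each fixed $d^{*}$. I would first index the branch: let $u_{j}$ denote the depth-$j$ vertex of $B$ and $c_{j} = b_{u_{j}} \in \{0,\ldots,k\}$ the label of the edge from $u_{j}$ to $u_{j+1}$. Because $B$ is uniform over root-to-leaf paths of the $(k+1)$-ary tree $T$, the $c_{j}$'s are i.i.d.\ uniform on $\{0,\ldots,k\}$, and $S$ is a deterministic function of $c_{0},\ldots,c_{d_{m}}$; in particular $h \sim \cA(S)$ carries no information about any $c_{j}$ with $j > d_{m}$, beyond $\cA$'s own internal randomness.

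For a fixed $d^{*} \in \{d_{m}+1,\ldots,n-l+1\}$, the number of mistakes of $h$ on the candidate interval $(u_{d^{*}},\ldots,u_{d^{*}+l-1})$ equals $\sum_{i=1}^{l} X_{i}$, where $X_{i} := \1[c_{d^{*}+i-1} \notin h(u_{d^{*}+i-1})]$. The main step is to show that, conditional on the $\sigma$-algebra $\cF_{i}$ generated by $h$ together with $c_{0},\ldots,c_{d^{*}+i-2}$, the vertex $u_{d^{*}+i-1}$ and the size-$k$ list $h(u_{d^{*}+i-1}) \subseteq \{0,\ldots,k\}$ are determined, while $c_{d^{*}+i-1}$ is still uniform on $\{0,\ldots,k\}$ and independent of $\cF_{i}$. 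This independence is exactly the place where $d^{*}+i-1 > d_{m}$ is used: $c_{d^{*}+i-1}$ is not seen by $\cA$ through $S$, so it is independent of everything in $\cF_{i}$. Consequently $\Pr[X_{i} = 1 \mid \cF_{i}] = 1/(k+1)$.

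With the $X_{i}$'s thus conditionally Bernoulli$(1/(k+1))$ given the past, iterating the tower property factors the joint moment-generating function as $((1-p) + p e^{t})^{l}$ with $p = 1/(k+1)$, so the standard multiplicative Chernoff bound with $\mu = l/(k+1)$ and deviation parameter $\eta = 1/2$ yields
\[
\Pr\Bigl[\textstyle\sum_{i=1}^{l} X_{i} \le \tfrac{l}{2(k+1)}\Bigr] \;\le\; \exp\Bigl(-\tfrac{l}{8(k+1)}\Bigr).
\]
Plugging in $l = \lfloor \log^{2} n \rfloor$ and taking a union bound over the at most $n$ candidate starting depths $d^{*}$ gives the bound in the lemma. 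The only subtle point in the argument is the conditioning order—one must process depths in increasing order so that at each step the already-drawn $h$ is frozen while the upcoming $c_{d^{*}+i-1}$ is still uniform; no use is made of privacy, of the comparison-based loss, or of the structure of $\cA$, as the lemma just expresses the information-theoretic fact that below $x_{m}$ the learner cannot outperform random guessing.
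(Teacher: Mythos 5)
Your proof is correct and follows essentially the same route as the paper: observe that for any candidate interval starting strictly below depth $d_m$, the branch turns are uniform on $\{0,\ldots,k\}$ and independent of the drawn hypothesis $h$, apply a lower-tail Chernoff bound with mean $l/(k+1)$, and union-bound over the at most $n$ possible starting depths. You are in fact slightly more careful than the paper, which writes $\EE{\sum_i \1[b_{z_i}\notin h(z_i)]}=l/(k+1)$ and invokes a ``standard Chernoff bound'' without commenting on the dependence among the $\1[b_{z_i}\notin h(z_i)]$ (the vertex $z_{i+1}$ depends on $b_{z_i}$), whereas you correctly note that the relevant structure is conditionally-Bernoulli given the past, so the moment-generating function factors and Chernoff goes through.
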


\begin{proof}
	Let $l=\lfloor\log^2 n\rfloor$ and let $Z=(z_1<\ldots<z_l)$ be a sequence of consecutive examples on~$B$. Assume that $Z$ starts below $S$ on $T$, i.e.\ $x_{m}<z_1$.
	We can conceptualize the randomness of the reduction algorithm $\tilde \cA$
	in the following manner: Initially, a fair coin is independently tossed $d_m + 1$ times (recall that $d_m$ is the largest input). These coin tosses determine the first $d_m + 1$ turns of the branch $B$, which in turn determines the sample $S$, the input for $\cA$. Subsequently, the coin is independently tossed $n - d_m - 1$ more times, completing the determination of the branch $B$.
	This illustrates that, for every example $z$ that is below $S$, $\cA_S(z)$ is independent of $b_z$, and  $\Pr[b_{z}\not\in h(z)]=\frac{1}{k+1}$. Therefore, $\bbE\left[\sum_{i=1}^l \1[b_{z_i}\not\in h(z_i)]\right]=\frac{1}{k+1}l$,
	and by applying a standard Chernoff bound,
	\[\Pr_{B,h\sim \cA(S)}\left[ \sum_{i=1}^l \1[b_{z_i}\not\in h(z_i)] \leq \frac{1}{2(k+1)}l\right]\leq \exp{\left(-\frac{l}{8(k+1)}\right)}.\]
	By a simple union bound, we derive that the probability that there is an almost-correct interval below $S$ is at most $n\cdot\exp{\left(-\frac{l}{8(k+1)}\right)}= n\cdot\exp{\left(-\frac{1}{8(k+1)}\lfloor\log^2 n\rfloor\right)}$.
	   
\end{proof}

\begin{lemma}[Almost-correct intervals within $S$ are likely]\label{lemma:good_intervals_within_s_are_likely}
	Let $S=\left((x_1,y_1),\ldots (x_m,y_m)\right)$ be the sequence from the description of Algorithm \ref{alg:ipp}. Then, the probability that there is no almost-correct interval $\left(z_1<\ldots <z_l\right)$ on the branch $B$ with $x_1<z_1<z_l<x_m$, is at most 
	\[\exp\left(-\frac{\mu_k^2}{6(k+1)} m\right) + 2(k+1)\cdot\eta,\]
	where $\mu_k=\frac{(k+1)!}{(k+1)^{k+1}}$ and $\eta=2k(e^\epsilon-1+\delta(m))+\frac{2}{100m}+\frac{20}{10^4(k+1)}$.
\end{lemma}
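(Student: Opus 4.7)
The plan is to decompose the failure probability into two contributions, exactly matching the two terms in the claimed bound. First, a good combinatorial configuration of the sample should exist with probability at least $1-\exp(-\mu_k^2 m/(6(k+1)))$. Second, conditional on such a configuration, a canonical sub-interval of length $l=\lfloor\log^2 n\rfloor$ is almost-correct with probability at least $1-2(k+1)\eta$. These two facts combine by a union bound to give the lemma.

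\emph{Step 1 (finding a good block via Chernoff).} Because the branch $B$ is sampled uniformly at random in the $(k+1)$-ary tree $T$, the labels $y_1,\dots,y_m$ of the training examples in $S$ are i.i.d.\ uniform on $\{0,1,\ldots,k\}$. Partition the indices into $\lfloor m/(k+1)\rfloor$ disjoint blocks of size $k+1$. A fixed block is a permutation of $\{0,\dots,k\}$ (call it \emph{distinctly labeled}) with probability exactly $\mu_k=(k+1)!/(k+1)^{k+1}$, independently across blocks. A one-sided Chernoff bound then gives that at least $\mu_k m/(2(k+1))$ blocks are distinctly labeled, with probability $\ge 1-\exp(-\mu_k^2 m/(6(k+1)))$. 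Combining with the empirical $(\alpha,\beta)$-accuracy of $\cA$ and a Markov-type argument, one can further pick a distinctly labeled block in which the average loss of $\cA(S)$ on each training example is of order $\alpha/\mu_k$; this absorbs into the $20/(10^4(k+1))$ contribution to $\eta$.

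\emph{Step 2 (extending accuracy to interior branch points).} Fix such a good block $(x_i,\dots,x_{i+k})$. For a branch point $x$ strictly between $x_i$ and $x_{i+k}$, its branch label $b_x\in\{0,\dots,k\}$ equals $y_j$ for a unique $j\in\{i,\dots,i+k\}$. The core claim is that $\EE[\cA_{S,x}(b_x)]\le\eta$ for every such $x$. To establish this I would build a chain of approximate equalities: first use group privacy across at most $2k$ single-example modifications of $S$ to reach a sample $\widetilde S$ in which the type and location data of $x$ match those of $x_j$ in $S$; this contributes $2k(e^\epsilon-1+\delta(m))$. Then invoke the $(1/100m)$-comparison-based loss property on $\widetilde S$ to equate $\cA_{\widetilde S,x}(b_x)$ with the corresponding $p_{\vec t,i}$, and symmetrically for $x_j$; this contributes $2/(100m)$. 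Finally charge the right-hand side to the small empirical loss at $x_j$ supplied by Step 1.

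\emph{Step 3 (Markov on a canonical sub-interval).} Choose one canonical sub-interval $I$ of length $l=\lfloor\log^2 n\rfloor$ that lies strictly between $x_i$ and $x_{i+k}$ (possible because the rescaling lemma yields gaps $d_{i+1}-d_i>\log^2 n$). By Step 2 and linearity,
\[
\EE\!\left[\sum_{z\in I}\1[b_z\notin h(z)]\right]\;\le\;\eta\cdot l.
\]
Markov's inequality then bounds the probability that $I$ has more than $l/(2(k+1))$ mispredictions by $2(k+1)\eta$, yielding the second term.

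\emph{Main obstacle.} Step 2 is the delicate part. The difficulty is designing the $2k$-step DP chain and the intermediate CB transition so that (i) only one example changes per step, (ii) the order type $\vec t$ and the location $\mathtt{loc}$ of the test point are tracked consistently, and (iii) the final comparison connects $x$ to a training point $x_j$ in the good block whose loss has already been controlled. A direct adaptation of the ``matching neighbors'' device of \cite{FioravantiHMST24ramsey} breaks down once $k\ge 2$, since one has to handle $k$ adjacent interior intervals at once rather than a single one and cannot label-preservingly shift training points in all of them simultaneously. The crucial ingredient that bypasses this obstruction is the full type-monochromatic strength of the Ramsey theorem for trees (Theorem~\ref{thm:ramsey_trees}): because $p_{\vec t,i}$ depends only on the order type $\vec t$ and the location $i$, we obtain the needed flexibility to shift training examples across the good block without ever appealing to nearby ``matching neighbors.''
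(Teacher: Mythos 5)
Your proposal is correct and mirrors the paper's proof: the Chernoff argument producing a $\xi$-correct, distinctly-labeled block (Lemma~\ref{lemma:k+1_good_points_in_S}), the DP/CB chain bounding $\cA_{S,x}(b_x)\le\eta$ for every interior branch point (Lemma~\ref{lemma:small_error_for_point_inside}), and the final Markov step on a canonical length-$l$ sub-interval all match. One bookkeeping note: the modified sample $\widetilde S$ differs from $S$ in at most $k$ entries, not $2k$; the factor $2k(e^\epsilon-1+\delta(m))$ in $\eta$ arises from invoking $k$-step group privacy twice—once to pass from $\cA_{S,x}$ to $\cA_{\widetilde S,x}$ and once from $\cA_{\widetilde S,x_j}$ back to $\cA_{S,x_j}$.
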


The proof of Lemma~\ref{lemma:good_intervals_within_s_are_likely} consists of three parts. First, we show that with high probability, there exist $k+1$ consecutive examples in $S$, $x_i,\ldots,x_{i+k}$, with $k+1$ distinct labels (i.e.\, all $k+1$ possible labels), and with small error of $\cA$ on each of them. This argument relies on the fact that $\cA$ is an empirical learner for $T$.
In the second step, we use the properties of privacy and comparison-based error of $\cA$ to show that on each point $x_i<x<x_{i+k}$ on the branch $B$, the error of $\cA$ remains small.
Finally, applying Markov's inequality, we conclude that for any interval of length $l$ between $x_i$ and $x_{i+k}$, the probability to not be an almost-correct interval is small.

We call an example $x$ on the branch $B$ correct with parameter $\xi$ (or $\xi$-correct), if $\cA_{S,x}(b_x)<\xi$. We further say that a set of examples in $B$ is correct with parameter $\xi$ (or $\xi$-correct), if any example in the set is $\xi$-correct.

\begin{lemma}\label{lemma:k+1_good_points_in_S}
	Let $S=\left((x_1,y_1),\ldots (x_m,y_m)\right)$ be the sequence from the description of Algorithm~\ref{alg:ipp}, and let $\xi=\frac{20}{10^4(k+1)}$. Then, with probability $\geq 1-\exp\left(-\frac{\mu_k^2}{6(k+1)} m\right)$, there are $k+1$ consecutive points in $S$, $x_i,\ldots,x_{i+k}$, for some $i\in\{1,\ldots,m-k\}$, such that the two following properties hold.
	\begin{enumerate}
		\item $\{y_i,\ldots,y_{i+k}\}=\{1,\ldots,k+1\}$,
		\item $\{x_i,\ldots,x_{i+k}\}$ is $\xi$-correct, with $\xi=\frac{20}{10
		^4(k+1)}$,
	\end{enumerate}
	where $\mu_k=\frac{(k+1)!}{(k+1)^{k+1}}$.
\end{lemma}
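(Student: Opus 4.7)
Plan: The strategy combines a deterministic control on the number of ``bad'' points---those indices $j$ where $F_j := \cA_{S,x_j}(y_{x_j})$ exceeds $\xi$---with a probabilistic control (over the random branch $B$) on the number of blocks of $k+1$ consecutive indices whose labels realize all $k+1$ distinct label values. Partition $\{1,\ldots,m\}$ into $M := \lfloor m/(k+1)\rfloor$ disjoint consecutive blocks $J_0,\ldots,J_{M-1}$ of size $k+1$, call $J_r$ \emph{label-good} if $\{y_j : j\in J_r\}$ contains all $k+1$ distinct label values, and \emph{error-good} if every $j\in J_r$ satisfies $F_j<\xi$. The goal is to show that with the claimed probability at least one block is simultaneously label-good and error-good, which immediately furnishes the desired consecutive indices $i,\ldots,i+k$.

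First I would establish the deterministic bound. Since $\cA$ is an $(\alpha,\beta)$-accurate empirical learner with $\alpha=\beta=k!/(10^{4}(k+1)^{k+2})$, for \emph{every} $T$-realizable sample $S$,
\[
\frac{1}{m}\sum_{j=1}^{m} F_j \;=\; \mathbb{E}_{h\sim\cA(S)}[\loss{S}{h}] \;\leq\; \alpha+\beta \;=\; 2\alpha.
\]
Markov then yields $|\{j:F_j\geq \xi\}| \leq 2\alpha m/\xi$, and plugging in $\xi = 20/(10^{4}(k+1))$ and simplifying via $\mu_k=(k+1)!/(k+1)^{k+1}=k!/(k+1)^{k}$ gives $2\alpha/\xi = \mu_k/(10(k+1))$, so the number of bad indices is at most $\mu_k m/(10(k+1)) \leq \mu_k M/5$ (using $m/(k+1)\leq 2M$ for $M\geq 1$). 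In particular, at most $\mu_k M/5$ blocks can fail to be error-good, and this holds uniformly in $S$.

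Next I would control label-goodness probabilistically. A uniformly random branch in the complete $(k+1)$-ary tree induces labels $y_1,\ldots,y_m$ that are i.i.d.\ uniform on the $k+1$ possible values (independently of the fixed depths $d_i$), so the indicators $L_r := \1[J_r\text{ is label-good}]$ are i.i.d.\ Bernoulli$(\mu_k)$. A multiplicative Chernoff bound yields
\[
\Pr\!\left[\sum_{r} L_r \leq \mu_k M/2\right] \;\leq\; \exp(-\mu_k M/8) \;\leq\; \exp\!\left(-\mu_k^{2} m/(6(k+1))\right),
\]
the second inequality absorbing constants via $M\geq m/(2(k+1))$ together with $\mu_k\leq 1/2$ for $k\geq 1$.

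On the complementary high-probability event, at least $\mu_k M/2$ blocks are label-good while at most $\mu_k M/5$ fail to be error-good, so at least $\mu_k M/2 - \mu_k M/5 = 3\mu_k M/10 > 0$ blocks are simultaneously label-good and error-good, which proves the lemma. The main conceptual obstacle is that the labels $y_j$ and the error probabilities $F_j$ both depend on $S$, so label-goodness and error-goodness are \emph{a priori} correlated; this is sidestepped by observing that the bound on the number of bad indices is a \emph{uniform} (deterministic) property of $\cA$ on realizable inputs, which cleanly isolates the remaining randomness to the labels, where standard Chernoff applies.
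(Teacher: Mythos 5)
Your approach is essentially the same as the paper's: partition the sample into $\lfloor m/(k+1)\rfloor$ disjoint blocks, use a Chernoff bound on the uniformly random branch to show many blocks see all $k+1$ labels, and use the empirical-accuracy guarantee plus Markov to control error-bad points. The paper phrases the last step as a contradiction (``if no label-good block were $\xi$-correct, the empirical loss would exceed $\alpha+\beta$'') whereas you state the Markov bound up front and finish with a pigeonhole count; these are the same argument in two guises, and your version is arguably cleaner. Your arithmetic identities $2\alpha/\xi=\mu_k/(10(k+1))$ and $\mu_k=k!/(k+1)^k$ check out.

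There is, however, a small numerical gap in your Chernoff step. You use the multiplicative lower-tail bound, getting $\exp(-\mu_k M/8)$, and claim this is at most $\exp(-\mu_k^2 m/(6(k+1)))$ via $M\geq m/(2(k+1))$ and $\mu_k\leq 1/2$. Unwinding, this requires $\mu_k M/8\geq\mu_k^2 m/(6(k+1))$, i.e.\ $M\geq\tfrac{4\mu_k}{3}\cdot\tfrac{m}{k+1}$; with only $M\geq\tfrac{1}{2}\cdot\tfrac{m}{k+1}$, you would need $\mu_k\leq 3/8$, but $\mu_1=1/2$, so the chain fails for $k=1$. The paper sidesteps this by using the additive Hoeffding bound $\Pr[X\leq\mathbb{E}[X]-M\mu_k/2]\leq\exp(-M\mu_k^2/2)$, which needs only $M\geq m/(3(k+1))$; since you want a $\mu_k^2$ in the exponent anyway, the additive form is the right tool here, especially for the $k=1$ case where $\mu_k$ is not small. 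Alternatively, keep your multiplicative bound but use the sharper floor estimate $M=\lfloor m/(k+1)\rfloor\geq\tfrac{2}{3}\cdot\tfrac{m}{k+1}$ (valid for $m\geq 3(k+1)$, which is harmless given the $\Omega(\log^\star)$ conclusion). Either fix closes the gap; the rest of the proof is sound.
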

\begin{proof}
	For simplicity, we divide the examples in $S$ into $\left\lfloor m/(k+1)\right\rfloor$ disjoint parts, each contains $k+1$ consecutive examples from $S$, and show that with probability of at least $1-\exp\left(-\frac{\mu_k^2}{6(k+1)} m\right)$, one of this parts satisfies the two properties.
	For this goal, we define for each $1\leq i\leq m-k$, a random variable $X_i$ as follows. $X_i$ gets the value $1$ if $x_i,\ldots,x_{i+k}$ have $k+1$ distinct labels and otherwise, it gets $0$.
	Since the branch $B$ is chosen uniformly at random, the expectation of $X_i$ is $\mu_k=\frac{(k+1)!}{(k+1)^{k+1}}$.
	Now look at all random variables $X_i$ with $i=1+c(k+1)$ for some integer $c$. The number of such variables is $\left\lfloor m/(k+1)\right\rfloor$, and they are IID. Let $X=\sum_{j=0}^{\left\lfloor m/(k+1)\right\rfloor-1}X_{1+j(k+1)}$ denote their sum. By Chernoff,
	\begin{align*}
		\Pr\left[ X\leq \bbE[X]-\left\lfloor \frac{m}{k+1}\right\rfloor\cdot \frac{\mu_k}{2}\right]
		  & \leq\exp\left(-2\left(\frac{\mu_k}{2}\right)^2\left\lfloor \frac{m}{k+1}\right\rfloor\right) \\
		  & \leq\exp\left(-\frac{\mu_k^2}{2} \frac{m}{3(k+1)}\right)                                     \\
		  & =\exp\left(-\frac{\mu_k^2}{6(k+1)} m\right).                                                 
	\end{align*}
	On the other hand, 
	\begin{align*}
		\Pr\left[ X\geq\bbE[X]-\left\lfloor \frac{m}{k+1}\right\rfloor\cdot \frac{\mu_k}{2}\right] 
		=\Pr\left[ X\geq\left\lfloor \frac{m}{k+1}\right\rfloor\cdot \frac{\mu_k}{2}\right]        
		\leq\Pr\left[ X\geq \frac{\mu_k}{6(k+1)} m\right].                                         
	\end{align*}
	
	Hence, with probability of at least $1-\exp\left(-\frac{\mu_k^2}{6(k+1)} m\right)$, the number of $k+1$ consecutive examples in $S$ with $k+1$ distinct labels is at least $\frac{\mu_k}{6(k+1)} m$.  
	
	Consider the case when there exist at least $\frac{\mu_k}{6(k+1)} m$ of $k+1$ consecutive examples in $S$ with $k+1$ distinct labels, and assume towards contradiction that the second property of $\xi$-correct does not hold for any such $k+1$ consecutive examples.
	$\cA$ is an $(\alpha,\beta)$-empirical list learner and therefore,
	\[\frac{1}{m}\sum_{i=1}^m\cA_{S,x_i}(y_i)\leq \alpha+\beta=\frac{2\mu_k}{10^4(k+1)^2}.\]
	But by the assumption above,
	\[\frac{1}{m}\sum_{i=1}^m\cA_{S,x_i}(y_i)\geq \frac{\mu_k}{6(k+1)}\xi=\frac{10\mu_k}{3\cdot 10^4(k+1)^2},\]
	which leads to a contradiction. Therefore, with probability of at least $1-\exp\left(-\frac{\mu_k^2}{6(k+1)} m\right)$, there are $k+1$ consecutive examples in $S$ satisfying the two properties as wanted.
\end{proof}

\begin{lemma}\label{lemma:small_error_for_point_inside}
	Let $S=\left((x_1,y_1),\ldots (x_m,y_m)\right)$ be the sequence from the description of Algorithm~\ref{alg:ipp}, and suppose that $x_i,\ldots,x_{i+k}$ are $k+1$ consecutive examples in $S$ satisfying the two properties from Lemma~\ref{lemma:k+1_good_points_in_S}. Let $x$ be an example in $B$ such that $x_i\leq x\leq x_{i+k}$. Then, 
	\[\cA_{S,x}(b_x)\leq\eta,\]
	where $\eta=2k(e^\epsilon-1+\delta(m))+\frac{2}{100m}+\frac{20}{10^4(k+1)}$.
\end{lemma}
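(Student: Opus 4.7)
The plan is to upper bound $\cA_{S,x}(b_x)$ by $\cA_{S,x_j}(y_j)\le\xi$ (from the $\xi$-correctness hypothesis in property~(2) of \Cref{lemma:k+1_good_points_in_S}) for some index $j\in\{i,\ldots,i+k\}$ satisfying $y_j=b_x$; such an index exists because property~(1) guarantees that $\{y_i,\ldots,y_{i+k}\}$ already contains all $k+1$ possible labels, so in particular the true label $b_x$ of $x$ appears as the label of one of the $k+1$ training points surrounding $x$. The bridge between $\cA_{S,x}(b_x)$ and $\cA_{S,x_j}(y_j)$ will combine two applications of the comparison-based loss property of $\cA$ with $2k$ applications of the $(\epsilon,\delta(m))$-DP guarantee, exactly producing the $\frac{2}{100m}=2\gamma$ and $2k(e^\epsilon-1+\delta(m))$ terms of $\eta$.

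Concretely, writing $\vec t=\vec t(S^{+x})$ and $l=\mathtt{loc}_S(x)$, the key step is to construct an auxiliary sample $\tilde S$, obtained from $S$ by at most $2k$ single-coordinate modifications (each replacing a training point by another point of the branch $B$ that realizes $S$), such that: (i) $\tilde S$ is realized by $B$; (ii) $x_j\notin \tilde S$ and $x_j$ is compatible with $\tilde S$; (iii) the $(m+1)$-chain $\tilde S^{+x_j}$ has the same type as $S^{+x}$; and (iv) $\mathtt{loc}_{\tilde S}(x_j)=l$. Given such a $\tilde S$, two applications of the $\gamma$-comparison-based loss property (with $\gamma=\frac{1}{100m}$) yield
\[
\cA_{S,x}(b_x)\le p_{\vec t,l}+\gamma\quad\text{and}\quad p_{\vec t,l}\le \cA_{\tilde S,x_j}(y_j)+\gamma,
\]
so that $\cA_{S,x}(b_x)\le \cA_{\tilde S,x_j}(y_j)+2\gamma$. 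Iterating the DP guarantee along the $2k$-step path of single-coordinate swaps from $\tilde S$ back to $S$ gives $\cA_{\tilde S,x_j}(y_j)\le \cA_{S,x_j}(y_j)+2k(e^\epsilon-1+\delta(m))$, and combining with $\cA_{S,x_j}(y_j)\le\xi$ produces the claimed bound $\cA_{S,x}(b_x)\le\eta$.

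The main obstacle is the construction of $\tilde S$ in items~(iii)--(iv): matching both the order type of the chain and the insertion position of $x_j$ while changing $S$ in only $2k$ coordinates. Because $x_j$ lies among the $k+1$ consecutive training points $x_i,\ldots,x_{i+k}$ surrounding $x$, aligning the insertion point of $x_j$ inside $\tilde S^{+x_j}$ with position $l+1$ and simultaneously reproducing the local type around $x$ (i.e.\ the transitions $\ldots,y_l,b_x,y_{l+1},\ldots$) can require up to $k$ adjustments on the side of $x$ where $x_j$ currently sits (to relocate $x_j$ and the intermediate training points) and up to $k$ symmetric adjustments on the other side, giving $2k$ swaps in the worst case. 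The crucial flexibility enabling such replacements---replacing a training point by \emph{any} other point on $B$ without having to preserve its label---is exactly what the type-monochromatic form of the tree Ramsey theorem (\Cref{thm:ramsey_trees}) grants, since it ensures that $p_{\vec t,l}$ depends only on the abstract order type of the chain and the position of the test point, not on the specific branch labels. As highlighted in \Cref{subsec:comparison_with_ramsey_for_trees}, this is precisely the simplification over \cite{FioravantiHMST24ramsey}'s matching-neighbors argument that makes the $2k$-swap construction, and thereby the extension to $k$-list learning, possible.
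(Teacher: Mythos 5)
Your proposal has a genuine gap in the construction of $\tilde S$. Condition (iii), that $\vec t(\tilde S^{+x_j})=\vec t(S^{+x})$, is a constraint on branch labels, not just on the abstract ordering: the type of a chain on $B$ is the sequence $(b_{z_1},\ldots,b_{z_m})$ of edge labels at the chain's first $m$ points, and these are fixed by $B$, not chosen by you. Concretely, write $l=\mathtt{loc}_S(x)$; the type $\vec t(S^{+x})$ is $(y_1,\ldots,y_l,b_x,y_{l+1},\ldots,y_{m-1})$, so in $\tilde S^{+x_j}$ the point of $\tilde S$ immediately above $x_j$ must have branch label $y_{l+1}$. If $j>l+1$ you cannot keep $\tilde x_{l+1}=x_{l+1}$ (that point is below $x_j$, so $\mathtt{loc}_{\tilde S}(x_j)$ would exceed $l$), and there is no guarantee $B$ contains any point above $x_j$ whose label is $y_{l+1}$. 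Your justification that the type-monochromatic Ramsey theorem lets you replace a training point ``without having to preserve its label'' misreads the theorem: \emph{type} in \Cref{thm:ramsey_trees} is exactly the sequence of turn labels, so $p_{\vec t, l}$ very much depends on the branch labels via $\vec t$, and one cannot freely reshuffle points on $B$ while preserving a target type.

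The paper sidesteps this by reversing the order of the DP and CB steps. It applies DP on the outside: pass from $S$ to one auxiliary $\tilde S$ (at most $k$ swaps) obtained by removing the training points in $(x,x']$ and re-inserting points elsewhere on $B$ with \emph{no} label constraint; then apply CB in the middle, comparing $\cA_{\tilde S,x}$ and $\cA_{\tilde S,x'}$ on the \emph{same} sample $\tilde S$; then DP back from $\tilde S$ to $S$ at the test point $x'$, where $\cA_{S,x'}\le\xi$. The two types $\vec t(\tilde S^{+x})$ and $\vec t(\tilde S^{+x'})$ agree automatically because $b_x=b_{x'}$ and $\mathtt{loc}_{\tilde S}(x)=\mathtt{loc}_{\tilde S}(x')$; the labels of the relocated points enter both types identically and hence cancel. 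This is the actual flexibility the type-monochromatic Ramsey theorem buys: arbitrary labels on the relocated points are permissible because they never need to match a prescribed target type. Your ordering (CB on the outside, DP in the middle) reinstates precisely the kind of matching-neighbors constraint that the paper's simplification was designed to remove, and it is not satisfiable for an arbitrary branch $B$.
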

\begin{proof}
	If $x\in\{x_i,\ldots,x_{i+k}\}$, then $x$ is correct with parameter $\frac{20}{10^4(k+1)}<\eta$ and we are done. Assume that $x\notin\{x_i,\ldots,x_{i+k}\}$. Since $x_i,\ldots,x_{i+k}$ have distinct labels, there is $x'\in \{x_i,\ldots,x_{i+k}\}$ with the same label $y$ as $x$. Without loss of generality, assume that $x'$ is below $x$, and change the sequence $S$ as follows. Take all the examples in $S$ that are between $x$ and $x'$, including $x'$, and replace them with examples on $B$ that are outside the interval $(x, x')$, to get a new sample $\tilde S$ of length $m$. 
	Notice that by this construction, 
	\[\mathtt{loc}_{\tilde S}(x)=\mathtt{loc}_{\tilde S}(x').\] 
	Further, $x$ and $x'$ have the same label $y$, so that the sequences $\tilde S^{+x}$ and $\tilde S^{+x'}$ have the same type.
	The algorithm $\cA$ has $\left(\frac{1}{100m}\right)$-comparison-based loss on $T$, so we can deduce that,
	\[\lvert\cA_{\tilde S,x}(y)-\cA_{\tilde S,x'}(y)\rvert<\frac{2}{100m}.\]
	Note that for every $\epsilon,\delta\geq 0$, if $a,b\in[0,1]$ satisfy $a\overset{\epsilon,\delta}{\approx}b$, then $|a-b|\leq e^\epsilon-1+\delta$. Indeed,
	\begin{align*}
		a-b\leq (e^\epsilon-1)\cdot b +\delta\leq e^\epsilon-1+\delta, \\
		b-a\leq (e^\epsilon-1)\cdot a +\delta\leq e^\epsilon-1+\delta. 
	\end{align*}
	Since $\cA$ is $(\epsilon,\delta)$-differentially private, and the sequence $\tilde S$ differs from the sequence $S$ by at most $k$ entries, we can transform from $S$ to $\tilde S$ by at most $k$ replacements of a single example at each time.
	Therefore,
	\begin{align*}
		|\cA_{S,x}(y)-\cA_{\tilde S,x}(y)|\leq k(e^\epsilon-1+\delta),   \\ 
		|\cA_{S,x'}(y)-\cA_{\tilde S,x'}(y)|\leq k(e^\epsilon-1+\delta). 
	\end{align*}
	Combining all together we get:
	\begin{align*}
		\cA_{S,x}(y) & \leq |\cA_{S,x}(y)-\cA_{\tilde S,x}(y)| + \lvert\cA_{\tilde S,x}(y)-\cA_{\tilde S,x'}(y)\rvert + |\cA_{\tilde S,x'}(y)-\cA_{S,x'}(y)| + \cA_{S,x'}(y) \\
		             & \leq 2k(e^\epsilon-1+\delta)+\frac{2}{100m}+\frac{20}{10^4(k+1)}.                                                                                     
	\end{align*}
\end{proof}

\begin{proof}[Proof of \Cref{lemma:good_intervals_within_s_are_likely}]
	Assume that $x_i,\ldots,x_{i+k}$ are $k+1$ consecutive examples in $S$ satisfying the two properties from \Cref{lemma:k+1_good_points_in_S}. Let $z_1<\ldots<z_l$ be an interval such that $x_i<z_1<\ldots<z_l<x_{i+k}$. By \Cref{lemma:small_error_for_point_inside},
	\[\mathop\bbE_{B,h\sim \cA(S)}\left[\sum_{i=1}^l\1[b_{z_i}\not\in h(z_i)]\right]\leq \eta\cdot l.\]
	By Markov,
	\[\Pr\left[\sum_{i=1}^l\1[b_{z_i}\not\in h(z_i)]\geq \frac{1}{2(k+1)}\cdot l\right]\leq 2(k+1)\cdot\eta.\]
	By \Cref{lemma:k+1_good_points_in_S}, the probability for having such $x_i,\ldots,x_{i+k}$ examples in at least $1-\exp\left(-\frac{\mu_k^2}{6(k+1)} m\right)$.
	All in all, we conclude that the probability of not having an almost-correct interval inside $S$ is less than $\exp\left(-\frac{\mu_k^2}{6(k+1)} m\right) + 2(k+1)\cdot\eta$.
\end{proof}

We turn to prove \Cref{proposition:reduction} and show that $\tilde \cA$ is $(\epsilon,\delta(m))$-differentially private, and with probability at least $3/4$ its output is an interior point.

\begin{proof}[Proof of \Cref{proposition:reduction}]
	
	\hfill
	\paragraph{Privacy:}
	Let \(D\) and \(D'\) be neighboring datasets. Consider the output distributions \(\tilde{\mathcal{A}}(D)\) and \(\tilde{\mathcal{A}}(D')\). We couple these distributions by selecting the same random branch \(B\) in the first step of the algorithm. Hence, the samples \(S\) and \(S'\) that are input to \(\mathcal{A}\) are also neighbors. Since \(\mathcal{A}\) is \((\epsilon, \delta)\)-DP, it follows that the distributions \(\mathcal{A}(S)\) and \(\mathcal{A}(S')\) are \((\epsilon, \delta)\)-indistinguishable. Since the outputs \(\tilde{\mathcal{A}}(D)\) and \(\tilde{\mathcal{A}}(D')\) are functions of \(\mathcal{A}(S)\) and \(\mathcal{A}(S')\), by post-processing (\Cref{prop:dp_post_processing}), it follows that \(\tilde{\mathcal{A}}(D)\) and \(\tilde{\mathcal{A}}(D')\) are also \((\epsilon, \delta)\)-indistinguishable. Hence, \(\tilde{\mathcal{A}}\) is \((\epsilon, \delta)\)-DP.
	    
	\paragraph{Utility:}
	By \Cref{lemma:deep_sequences_are_not_long_almost_correct}, the probability that there is an almost-correct interval below $S$ is at most 
	\[n\cdot\exp{\left(-\frac{1}{8(k+1)}\lfloor\log^2 n\rfloor\right)}.\]
	By \Cref{lemma:good_intervals_within_s_are_likely}, the probability that there is no almost-correct interval within $S$, is at most 
	\[\exp\left(-\frac{\mu_k^2}{6(k+1)} m\right) + 2(k+1)\cdot\eta,\]
	where $\mu_k=\frac{(k+1)!}{(k+1)^{k+1}}$ and $\eta=2k(e^\epsilon-1+\delta(m))+\frac{2}{100m}+\frac{20}{10^4(k+1)}$.
	Hence, the probability that $\tilde \cA$ does not output an interior point is as most
	\begin{align*}
		  & n\cdot\exp{\left(-\frac{\lfloor\log^2 n\rfloor}{8(k+1)}\right)} + \exp\left(-\frac{\mu_k^2}{6(k+1)} m\right) + 2(k+1)\cdot\eta                                                     \\
		= & n\cdot\exp{\left(-\frac{\lfloor\log^2 n\rfloor}{8(k+1)}\right)} + \exp\left(-\frac{\mu_k^2}{6(k+1)} m\right) + 4k(k+1)(e^\epsilon-1+\delta(m))+\frac{4(k+1)}{100m}+40\cdot10^{-4}. 
	\end{align*}
	Each one of the five summands is smaller than $1/20$, by the choice of $\epsilon=\log\left(\frac{400k^2+1}{400k^2}\right),\delta(m)=\frac{1}{200k^2m^2}$, 
	and for large enough
	\footnote{Note that, without loss of generality, we may assume that $m\geq 1/\alpha$ since $\cA$ is an $(\alpha, \beta)$-accurate learner for $T$. Additionally, the assumption that $n$ is large enough is concealed in the big $\Omega$ notation.} $m$ and $n$.
\end{proof}

\section{Private $k$-List-Learnability Implies Finite $k$-Monotone Dimension}\label{sec:proof_of_thm_B}

In this section, we prove \Cref{thm:sc_of_monotone_functions}. 

Let $\cM_k(\cX)$ denote the class of all monotone functions with $k+1$ labels over $\cX$. To prove \Cref{thm:sc_of_monotone_functions}, it suffices to establish the following lemma. This is because (i) any class with $k$-monotone dimension at least $n$ contains a copy of $\cM_k([n])$
(ii) any $k$-list learner can be converted to $k$-list learner for which its prediction list is among the $k+1$ labels from the definition of the $k$-monotone dimension (while maintaining utility and privacy), by a simple post-processing step: if the learner outputs a hypothesis that predicts a label outside this set, it is replaced with one of the $k+1$ labels.

\begin{lemma}\label{lemma:sc_of_monotone_fubctions}
	Let $k\geq1$. Let $\cA$ be an \textcolor{black}{$(1/200k(k+),1/200k(k+1))$}-accurate $k$-list learning algorithm for the class $\cM_k([n])$ with sample complexity $m$, satisfying $(\epsilon,\delta(m))$-differential privacy for $\epsilon=\textcolor{black}{0.1}$ and $\delta(m)\leq \textcolor{black}{\frac{1}{6(200km)^4\log^2(200km)}}$.
	Then the following bound holds:
	\begin{equation*}
		m = \Omega(\log^\star n).
	\end{equation*}
\end{lemma}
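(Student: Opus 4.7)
The plan is to execute the two-step strategy outlined in \Cref{sec:PPL_implies_MD}, adapting the threshold lower bound of \cite{AlonLMM19} to the $k$-list monotone setting. The essential relaxation is that comparison-basedness is imposed only on the $k+1$ single-label marginals $\cA_{S,x}(y)=\Pr_{h\sim\cA(S)}[y\notin h(x)]$, $y\in\{0,\ldots,k\}$, rather than on the full distribution over lists in $\binom{\{0,\ldots,k\}}{k}$. Since the effective label set has size $k+1$, the identity $\sum_{y=0}^{k}\cA_{S,x}(y)=|\cY|-k=1$ from \Cref{subsec:preliminaries_learning} drives the packing step. By \Cref{lemma:reduction_to_empirical_learner} we may assume without loss of generality that $\cA$ is a private empirical $k$-list learner.

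\textbf{Step 1 (Ramsey reduction to comparison-based marginals).} I would define a coloring of ordered $(m+1)$-subsets $\{x_1<\cdots<x_{m+1}\}\subseteq[n]$ by recording, for each monotone labeling $\vec y\in\{0,\ldots,k\}^m$ of $(x_1,\ldots,x_m)$ and each label $y\in\{0,\ldots,k\}$, the value of $\cA_{S,x_{m+1}}(y)$ rounded to the nearest multiple of $1/(200km)$, where $S=((x_i,y_i))_{i=1}^{m}$. Since the number of monotone labelings of $m$ ordered points with $k+1$ labels is $\binom{m+k}{k}$, the palette size depends only on $k$ and $m$. The classical Ramsey theorem for $(m+1)$-uniform hypergraphs then yields a subset $\cX'=\{z_1<\cdots<z_N\}\subseteq[n]$ with $N\geq\log_{(m)}(n)/c(k,m)$ on which the rounded marginals depend only on the order type of the input; equivalently, there is a ``profile table'' $p_{\vec t,i,y}\in[0,1]$ such that $|\cA_{S,z}(y)-p_{\vec t(S),\mathtt{loc}_S(z),y}|\leq 1/(200km)$ for every monotone-realizable sample $S\subseteq\cX'$ and every $z\in\cX'$.

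\textbf{Step 2 (Reduction to the interior point problem on $[N]$).} Fix two labels $y_0<y_1\in\{0,\ldots,k\}$. Given an input $d_1\leq\cdots\leq d_m\in[N]$ to the interior point problem (which we may assume is well-separated via \Cref{lemma:rescaling_ipp}), choose an index $i^*$ uniformly at random and form the sample $S^*=((z_{d_i},y_{d_i}))_{i=1}^m$ with $y_{d_i}=y_0$ for $i<i^*$ and $y_{d_i}=y_1$ for $i\geq i^*$; this sample is realizable by a threshold-like element of $\cM_k(\cX')$. Run $\cA(S^*)$ and, using the profile table from Step~1, output the smallest position $\ell$ at which the ``small-coordinate'' index of $(p_{\vec t(S^*),\ell,y})_{y=0}^{k}$ switches from $y_0$ to $y_1$. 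Empirical accuracy forces $\cA_{S^*,z_{d_i}}(y_{d_i})\leq O(\alpha)$ on average over $i$, and the sum constraint $\sum_{y=0}^{k}\cA_{S,z}(y)=1$ then identifies $y_{d_i}$ as the unique near-zero coordinate at $z_{d_i}$; combining these with the comparison-based profile pins the transition inside $[d_1,d_m]$. Since the reduction post-processes $\cA$ (\Cref{prop:dp_post_processing}) and uses fresh randomness only on $S^*$, it is $(\epsilon,\delta(m))$-differentially private, and \Cref{thm:lower_bound_ipp} together with \Cref{lemma:rescaling_ipp} yields $m=\Omega(\log^\star N)=\Omega(\log^\star n)$.

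\textbf{Main obstacle.} The delicate point is that a $k$-list learner can hedge by always including both $y_0$ and $y_1$ in its list, so no single marginal $\cA_{S,z}(y)$ is \emph{a priori} forced to transition sharply as $z$ crosses the threshold. The sum constraint $\sum_{y=0}^{k}\cA_{S,z}(y)=1$ is precisely what rules out such hedging: the $k+1$ exclusion probabilities partition a unit mass, so even the single accuracy constraint $\cA_{S,z}(c(z))\leq O(\alpha)$ pins down $c(z)$ as the unique small coordinate and identifies the $y_0\to y_1$ transition unambiguously. Tracking the resulting $\Theta(1/(k+1))$ gap against the $(\epsilon,\delta(m))$-privacy budget, while accounting for the $O(k)$ data swaps needed to interpolate between the labelings induced by adjacent interior-point inputs, is what dictates the explicit constants $1/(200k(k+1))$ and $\delta(m)\leq 1/(6(200km)^4\log^2(200km))$ appearing in the lemma statement.
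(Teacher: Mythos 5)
Your Step~1 is essentially the paper's Lemma~\ref{lemma:every_alg_is_cb_somewhere} (with a slightly larger palette, since you color by all monotone labelings rather than only the balanced increasing one, but that is a cosmetic difference). Your Step~2, however, departs from the paper in two ways, and one of them is a genuine gap.

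First, the structural departure: the paper's second step is a packing argument (\Cref{lemma:lower_bounding_sc}, via \Cref{proposition:establishing_dist_for_packing,proposition:packing}), which upper-bounds $|\cX'|$ directly; it does not go through the interior point problem. You instead propose an IPP reduction in the style of the $k$-Littlestone proof of \Cref{sec:proof_of_thm_A}. In principle that could be an alternative route, but as written it is not sound.

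The gap is in your ``main obstacle'' rebuttal. You fix only two labels $y_0<y_1$ and form a threshold-like sample $S^*$, then claim that the identity $\sum_{y=0}^{k}\cA_{S,z}(y)=1$ forces the correct label to be the \emph{unique} near-zero coordinate, thereby ruling out the hedging learner. This is false for $k\ge 2$. A learner that on any two-labeled sample simply includes \emph{both} $y_0$ and $y_1$ in every predicted list has $\cA_{S^*,z}(y_0)=\cA_{S^*,z}(y_1)=0$ for all $z$; the unit of excluded mass sits entirely on the $k-1$ unused labels (e.g.\ $\cA_{S^*,z}(y_2)=1$ when $k=2$), the sum constraint is satisfied, and empirical accuracy is perfect. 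There are then \emph{two} near-zero coordinates at every point, no transition between them ever occurs, and your reduction returns nothing informative. To pin down a transition you must force the learner to commit among all $k+1$ labels, which is exactly why the paper works with balanced samples in which every label in $\{0,\dots,k\}$ appears equally often: \Cref{proposition:jump} uses the full $(k+1)$-label pigeonhole together with $\sum_j\cA_{S,x}(j)=1$ to exhibit a coordinate $j'$ whose profile value jumps by $\Omega(1/k)$ between two locations, and this jump is what feeds the packing (or, if you wanted to salvage an IPP route, what would feed the IPP reduction). Your Step~2 also has a secondary issue: as described, the output depends only on the precomputed profile table $p_{\vec t,i,y}$ and not on the realized draw $h\sim\cA(S^*)$, so it is a deterministic function of the input and cannot simultaneously be data-independent (for privacy) and accurate; you would need to evaluate the sampled hypothesis $h$ along $\cX'$ and locate a transition in $h$ itself, as in \Cref{alg:ipp}.

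In short: keep Step~1, but Step~2 needs to either (i) follow the paper's packing route using balanced samples with all $k+1$ labels and \Cref{proposition:jump}, or (ii) redo the IPP reduction with $(k+1)$-labeled balanced monotone samples so that hedging is actually excluded; the two-label threshold sample is not enough once $k\ge2$.
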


To prove \Cref{lemma:sc_of_monotone_fubctions}, we begin by introducing some notations and definitions that will be useful throughout the proof.

\paragraph{Introducing notations.} Given a linearly ordered domain $\cX$, a sequence $S=\big((x_1,y_1),\ldots,(x_m,y_m)\big)$ is \emph{ordered} if $x_1<x_2<\ldots<x_m$. Given an ordered sequence $S$ and a test point $x\in \cX$, the \emph{location} of $x$ in $S$ is
\[\mathtt{loc}_S(x)\coloneqq \max\{i\mid x_i<x\},\] 
and if $x\leq x_1$ then define $\mathtt{loc}_S(x)\coloneqq0$.
An ordered sequence $S=\big((x_1,y_1),\ldots,(x_m,y_m)\big)$ is \emph{increasing} if $y_1\leq\ldots \leq y_m$. An ordered sequence is \emph{balanced} if it is increasing, and every label out of $\cY=\{0,\ldots ,\ell-1\}$ appears the same amount of times. I.e.,  say $|S|=t\cdot \ell$ then  ${y_{i\cdot t+1}=\ldots =y_{(i+1)\cdot t}=i}$, for $i=0,\ldots ,\ell-1$.

Recall, given an input sample $S$ and a test point $x\in \cX$, we denote  
\[\cA_{S,x}(y)=\Pr_{h\sim\cA(S)}[y\notin h] \text{ , where $y\in\cY$}.\]
Next, we adapt the notion of \emph{comparison-based} algorithms, as introduced in \cite{AlonLMM19,FioravantiHMST24ramsey}, to the setting of $k$-list learners.
Roughly speaking, a $k$-list algorithm $\cA$ is comparison-based if the prediction of $\cA$ on a test point $x$ depends only on the labels of the sorted input sample $S$ and the position of $x$ inside of $S$. I.e.\ the algorithm makes all its decisions only based on how the elements of the input sample and the test point compare to each other, and not on their absolute values/locations.

\begin{definition}[Approximately comparison-based on balanced samples]
	Let $\cX$ be a linearly ordered domain, let $\cY=\{0,\ldots,\ell-1\}$ be the label space, and let $\gamma>0$. A (randomized) $k$-list learner $\cA$, defined over input samples of size $m$, is $\gamma$-comparison-based (CB) with respect to $\cX$ if the following holds.
	There exist vectors ${\vec p}^{(0)},\ldots , {\vec p}^{(m)}\in [0,1]^\ell$ such that for every increasing balanced input sequence $S\in (\cX\times \cY)^m$, and every test point $x\in \cX$,
	\[\lVert \cA_{S,x}-{\vec p}^{(i)}\rVert_\infty\leq \gamma,\]
	where $i=\mathtt{loc}_S(x)$. 
\end{definition}

The following two lemmas are key to proving \Cref{lemma:sc_of_monotone_fubctions}.

\begin{lemma}[Every algorithm is CB on a large subset]\label{lemma:every_alg_is_cb_somewhere}
	Let $\cA$ be a $k$-list learner that is defined over input samples of size $m$, over a linearly ordered domain $\cX$ with $|\cX|=n$, and a label space $\cY=\{0,\ldots, \ell-1\}$. Then, there exist $\cX'\subset \cX$ such that $\cA$ is $\left(\frac{1}{100km}\right)$-comparison-based on $\cX'$ and 
	\[|\cX'|\geq \frac{\log_{(m)}(n)}{2^{O(\ell\cdot m\log km)}},\]
	where the big $O$ notation hides a universal constant value.
\end{lemma}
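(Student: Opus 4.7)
My plan is to mirror the Ramsey-theoretic coloring scheme of \cite{AlonLMM19}, adapted to the $\ell$-dimensional output vector of a $k$-list learner. I will discretize $[0,1]^{\ell}$ into a grid of cells of side length $1/(100km)$ and color every $(m{+}1)$-element subset of $\cX$ by a tuple recording which cell contains $\cA_{S,x}$ as each of the $m{+}1$ points is selected in turn to serve as the test point (while the remaining $m$ form a balanced increasing sample). A monochromatic set under this coloring will automatically satisfy the $\gamma$-CB property, with $\vec p^{(i)}$ read off as the centre of the cell observed at position $i+1$.

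\paragraph{Main construction.} Concretely, I will assume $\ell \mid m$, since otherwise no balanced sample of size $m$ exists and the conclusion is vacuous. For $Z = \{z_1 < \cdots < z_{m+1}\} \subseteq \cX$ and $i \in \{1,\dots,m{+}1\}$, let $S_i$ denote the sample obtained from $Z \setminus \{z_i\}$ equipped with the unique balanced increasing labeling, and define
\[
\chi(Z) \;=\; \bigl(\mathrm{cell}(\cA_{S_1,z_1}),\, \dots,\, \mathrm{cell}(\cA_{S_{m+1},z_{m+1}})\bigr),
\]
where $\mathrm{cell}(\cdot)$ is the coordinate-wise $\tfrac{1}{100km}$-grid cell in $[0,1]^{\ell}$. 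The total number of colors satisfies $c \le (100km+1)^{\ell(m+1)}$, so $\log c = O(\ell m \log(km))$. I then invoke the Erd\H{o}s--Rado bound for $(m{+}1)$-uniform hypergraphs in the form $R_{m+1}(d;c) \le \twr_{(m)}(c^{O(1)}\cdot d)$; since $c^{O(1)} = 2^{O(\ell m \log(km))}$, inverting gives a monochromatic $\cX' \subseteq \cX$ of size at least $\log_{(m)}(n)/2^{O(\ell m \log(km))}$.

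\paragraph{Extraction and expected obstacle.} On such $\cX'$, for each $i \in \{0,\dots,m\}$ there is a single cell $C_i$ with $\cA_{S_{i+1},z_{i+1}} \in C_i$ for every $Z \in \binom{\cX'}{m+1}$; I set $\vec p^{(i)}$ to be the centre of $C_i$. For an increasing balanced sample $S$ over $\cX'$ and a test point $x \in \cX' \setminus S$, the union $S \cup \{x\}$ is an $(m{+}1)$-subset of $\cX'$ with $x$ occupying position $\mathtt{loc}_S(x)+1$, so monochromaticity yields $\lVert \cA_{S,x} - \vec p^{(\mathtt{loc}_S(x))}\rVert_\infty \le 1/(100km)$. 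The principal technical point will be pinning down the Ramsey inversion so that the denominator is exactly $2^{O(\ell m \log(km))}$ rather than a worse tower term -- this reduces to standard bookkeeping in the Erd\H{o}s--Rado estimate combined with the carefully chosen discretization scale. A secondary subtlety is the corner case $x \in S$ that the CB definition nominally permits; this can be absorbed either by augmenting $\chi$ with the off-diagonal cells $\mathrm{cell}(\cA_{S_i,z_j})$ for $j \ne i$ (inflating $\log c$ by only a polynomial-in-$m$ factor), or by noting that in the downstream application to \Cref{lemma:sc_of_monotone_fubctions} only fresh test points are queried.
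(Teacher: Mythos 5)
Your proposal is correct and follows essentially the same route as the paper: the same coloring of $(m+1)$-subsets by the discretized loss vectors $\cA_{S^{-i},x_i}$ for each choice of held-out test point, the same color count $(100km+1)^{\ell(m+1)}$, and the same Erd\H{o}s--Rado inversion to extract a homogeneous $\cX'$ of the claimed size; the grid-cell discretization versus nearest-fraction rounding is cosmetic. You also correctly flag a subtlety the paper glosses over with ``one can verify'': the coloring only controls $\cA_{S,x}$ for test points $x\notin S$, so the CB guarantee as literally quantified (``every test point $x\in\cX$'') is not directly established for $x\in S$ --- of your two proposed fixes, the clean one is the observation that the downstream use in \Cref{proposition:jump} only ever invokes the CB property on fresh points (after replacing $x_{i_{j'}},x_{i_k}$ with nearby surrogates), whereas the augmented off-diagonal coloring gives per-$(i,j)$ constants that need not match $\vec p^{(j-1)}$ without extra work.
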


\begin{lemma}[Lower-bounding the sample complexity of CB algorithms]\label{lemma:lower_bounding_sc}
	Let $\cA$ be a $k$-list learner that is defined over input samples of size $m$, over a linearly ordered domain $\cX$, and a label space $\cY=\{0,1,\ldots,k\}$. Assume that
	\begin{enumerate}
		\item $\cA$  is $(\epsilon,\delta(m))$ differentially private for  $\epsilon={0.1}, \delta(m)\leq \textcolor{black}{\frac{1}{6(200km)^4\log^2(200km)}}$.
		\item $\cA$ is $\left(\frac{1}{100km}\right)$-comparison-based on $\cX$.
		\item $\cA$ is $(\alpha,\beta)$-accurate empirical $k$-list learner for the class $\cM_k(\cX)$, with $\alpha={\frac{1}{200k(k+1)}},\beta={\frac{1}{200k(k+1)}}$.  
	\end{enumerate}
	Then, 
	$|\cX|\leq 2^{O((km)^2\log^2 (km))}$, where the big $O$ notation hides a universal constant value.
\end{lemma}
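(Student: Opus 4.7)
The plan is an interior-point-problem (IPP) reduction in the spirit of \Cref{lemma:SC_of_CB_alg} (for the $k$-Littlestone case), adapted here to the monotone setting and closely analogous to the classical threshold-to-IPP reduction of \cite{AlonLMM19}. I would use $\cA$ as a black box to construct an $(\epsilon,\delta(m))$-DP algorithm $\tilde\cA$ that solves IPP on the linearly ordered domain $\cX$, and then invoke \Cref{thm:lower_bound_ipp} to upper-bound $|\cX|$. After a rescaling step in the spirit of \Cref{lemma:rescaling_ipp} we may assume the IPP input $d_1<\ldots<d_m\in\cX$ is spaced apart by at least $\log^2|\cX|$; we then form the balanced, increasing sample
\[
S=\bigl((d_1,0),\ldots,(d_t,0),(d_{t+1},1),\ldots,(d_m,k)\bigr),\qquad t=\tfrac{m}{k+1},
\]
which is realizable in $\cM_k(\cX)$, feed it to $\cA$ to get $h\sim\cA(S)$, and return a transition point of $h$ as the candidate interior point.

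To extract the transition, I would combine the $(1/(100km))$-CB property with empirical accuracy via an averaging/Markov argument, yielding that for each label $j\in\{0,\ldots,k\}$ most locations $i$ inside the $j$-th block of $S$ satisfy $\vec{p}^{(i)}[j]\lesssim O(\alpha)$. The list-size identity $\sum_{y}\vec{p}^{(i)}[y]=1$ then forces the ``excluded-label'' profile of $\cA(S)$ to shift as $i$ crosses block boundaries. I would pick adaptively from $h$ a target label $y^\star\in\{0,\ldots,k-1\}$ that is excluded from $h(d_m)$ (such a $y^\star$ exists w.h.p.\ because the list has size $k$ and accuracy at $d_m$ forces $k\in h(d_m)$), and define the reduction's output as $x^\star=\min\{x\in\cX:y^\star\notin h(x)\}$. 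The analysis then splits into two claims in the spirit of \Cref{lemma:deep_sequences_are_not_long_almost_correct,lemma:good_intervals_within_s_are_likely}: (b) $x^\star\leq d_m$ holds automatically by construction since $y^\star\notin h(d_m)$; and (a) $x^\star>d_1$ with high probability because for every $x\leq d_1$ we have $\mathtt{loc}_S(x)=0$, so by CB, $\Pr_h[y^\star\notin h(x)]$ equals (up to $\gamma$) the marginal $\vec{p}^{(0)}[y^\star]$, and one controls this marginal by transporting the empirical-accuracy bound in block $y^\star$ down to location $0$ via a DP-shift argument that modifies $S$ one element at a time (which is possible thanks to the enforced spacing of the $d_i$'s). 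Since $\tilde\cA$ is $(\epsilon,\delta(m))$-DP by post-processing, \Cref{thm:lower_bound_ipp} yields $m=\Omega(\log^\star|\cX|)$, which, with the chosen $\epsilon$ and $\delta(m)$, rearranges to $|\cX|\leq 2^{O((km)^2\log^2(km))}$.

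The main obstacle is claim (a) when $k\geq 2$. In the threshold case $k=1$ treated in \cite{AlonLMM19} the target label is forced to be $y^\star=0$, and accuracy at $d_1$ directly forces $0\in h(x)$ for $x\leq d_1$; for $k\geq 2$ the $k$-list output enjoys genuine slack in choosing which single label to exclude at each $x$, so $y^\star$ is not known in advance and may be any of $\{0,\ldots,k-1\}$. Consequently, to bound $\vec{p}^{(0)}[y^\star]$ one must propagate the accuracy guarantee on block $y^\star$ all the way down to location $0$ through $\Theta(y^\star\cdot t)$ single-element modifications of $S$, while absorbing the accumulated group-privacy losses (of order $e^{2\epsilon\,y^\star t}$) and the $\gamma=1/(100km)$ CB-approximation errors. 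Carrying out this bookkeeping with the chosen $\epsilon=0.1$ and $\delta(m)=\Theta(1/((km)^4\log^2(km)))$ is the delicate core of the proof, and it is precisely the arithmetic of these accumulated losses that drives the final $|\cX|\leq 2^{O((km)^2\log^2(km))}$ bound.
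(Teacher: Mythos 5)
Your proposal diverges from the paper's argument in a way that creates a genuine gap. The paper does \emph{not} reduce this lemma to the interior point problem. Instead, it follows the packing-style approach of \cite{AlonLMM19}: it first proves a ``jump'' lemma (\Cref{proposition:jump}) showing there is an index $i$ with $\lVert \vec p^{(i)} - \vec p^{(i-1)}\rVert_\infty \ge 3/(100km)$, then uses that jump to build a family of $(\epsilon,\delta)$-indistinguishable distributions $\cP_x = \cA(S_x)$ indexed by the points of a long interval $(x_{i-1},x_i)$, each obtained from the others by a \emph{single-element} replacement (\Cref{proposition:establishing_dist_for_packing}), and finally applies an abstract binary-search packing bound (\Cref{proposition:packing}) that directly yields $|\cX| \le 2^{O((km)^2\log^2(km))}$. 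The IPP machinery appears only in the $k$-Littlestone proof (\Cref{lemma:SC_of_CB_alg}), not here.

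There are two concrete problems with the route you describe. First, your claim (a) requires transporting the empirical-accuracy guarantee on block $y^\star$ down to location $0$ through ``$\Theta(y^\star \cdot t)$ single-element modifications'' of $S$. With $t = m/(k+1)$ and $\epsilon = 0.1$ held fixed (as the lemma stipulates), the accumulated group-privacy factor is on the order of $e^{\Theta(m)}$, which completely destroys the accuracy bound you are trying to propagate; the spacing of the $d_i$'s is irrelevant, since the loss is per-replacement, not per-distance. The paper's jump lemma sidesteps this entirely: it compares $\vec p^{(i_{j'})}$ and $\vec p^{(i_k)}$ using just \emph{one} single-element shift at each endpoint (to move the test points $x_{i_{j'}}$ and $x_{i_k}$ out of the sample), and then deduces a large consecutive jump purely by averaging over at most $m$ coordinates of the CB vectors — no multi-step DP chain is needed. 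Second, even if your reduction went through, invoking \Cref{thm:lower_bound_ipp} would only give $m = \Omega(\log^\star|\cX|)$, i.e.\ $|\cX| \le \twr_{(O(m))}(1)$, a tower-of-height-$O(m)$ bound. That does \emph{not} rearrange to the singly-exponential $|\cX|\le 2^{O((km)^2\log^2(km))}$ asserted by the lemma; the packing proposition is precisely what produces the singly-exponential bound, and there is no analogous sharpening available from the IPP lower bound.

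Additionally, your definition of $y^\star$ is adaptive in the sampled $h$, so $\Pr_h[y^\star \notin h(x)]$ is a joint (self-referential) event, whereas the CB property constrains only the fixed-label marginals $\cA_{S,x}(y)$. You would need a union bound over $y\in\{0,\dots,k-1\}$, and at location $0$ the empirical accuracy condition controls only $\vec p^{(0)}[0]$, not $\vec p^{(0)}[y]$ for $y\ge 1$; the algorithm is free to exclude those labels there. This is exactly why the paper works with a fixed label $j'$ produced by the jump lemma, avoiding any $h$-dependent choice of target label.
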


\begin{proof}[Proof of \Cref{lemma:sc_of_monotone_fubctions}]
	First, by \Cref{lemma:reduction_to_empirical_learner} we can assume that $\cA$ empirically learns the class of step-functions over $[n]$. Then, by \Cref{lemma:every_alg_is_cb_somewhere,lemma:lower_bounding_sc} there exists $\cX'\subset [n]$ such that $\cA$ is comparison based on $\cX'$, and 
	\[\frac{\log_{(m)}(n)}{2^{O(km\log (km))}}\leq|\cX'|\leq 2^{O((km)^2\log^2 (km))},\]
	therefore we have $\log_{(m)}(n)\leq 2^{c\cdot (km)^2\log^2 (km)}$ for some constant $c$. By taking iterated logarithm $t=\log^\star(2^{c\cdot (km)^2\log^2 (km)})=\log^\star(km)+O(1)$ times of both sides, we obtain $\log_{(m+t)}(n)\leq 1$, and therefore $\log^\star(n)\leq m+t=m+\log^\star(km)+O(1)$. As we treat $k$ as a constant, it implies that $m\geq \Omega(\log^\star n)$ as required.
\end{proof}

Therefore, it is left to prove \Cref{lemma:every_alg_is_cb_somewhere,lemma:lower_bounding_sc}.

\subsection{Proof of \Cref{lemma:every_alg_is_cb_somewhere}}

The key to proving \Cref{lemma:every_alg_is_cb_somewhere} is Ramsey theorem.

\begin{theorem}[Ramsey \cite{ErdosRado52}]\label{thm:ramsey}
	Let $s>t\geq 2$ and $q$ be integers, and let 
	\[N\geq \twr_{(t)}(3sq\log q).\]
	Then for every coloring of subsets of size $t$ of a universe of size $N$ using $q$ colors, there is a homogeneous subset of size $s$.
\end{theorem}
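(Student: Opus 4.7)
The plan is to prove this classical Erd\H{o}s--Rado bound by induction on the arity $t\geq 2$, via the standard greedy ``stepping-up'' construction.

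For the base case $t=2$, a greedy pigeonhole argument on $q$-colored edges of $K_N$ gives $r_2(s;q)\leq q^{q(s-1)+1}\leq 2^{3sq\log q}=\twr_{(2)}(3sq\log q)$, with comfortable slack in the constants: iteratively pick a vertex, restrict to its majority-color neighborhood (of size $\geq (N-1)/q$), and stop after $q(s-1)+1$ picks; pigeonhole over the recorded edge-colors then yields a monochromatic $s$-clique.

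For the inductive step, assume the bound holds for $t-1$ and let $\chi:\binom{[N]}{t}\to[q]$ with $N\geq\twr_{(t)}(3sq\log q)$. I will construct a monochromatic $s$-subset by greedily building a chain $a_1,a_2,\ldots,a_M$ with $M=q(s-1)+1$, along with nested subsets $V_0=[N]\supseteq V_1\supseteq\cdots$. At step $i$, pick any $a_i\in V_{i-1}$, form the induced $(t-1)$-ary coloring $\chi_i(B)=\chi(\{a_i\}\cup B)$ on $\binom{V_{i-1}\setminus\{a_i\}}{t-1}$, and invoke the inductive hypothesis to extract a $\chi_i$-monochromatic subset $V_i\subseteq V_{i-1}\setminus\{a_i\}$ with common color $c_i\in[q]$. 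The key structural observation is that for any $i_1<i_2<\cdots<i_t$ from the chain, all of $a_{i_2},\ldots,a_{i_t}$ lie in $V_{i_1}$ by nesting, and hence $\chi(\{a_{i_1},\ldots,a_{i_t}\})=\chi_{i_1}(\{a_{i_2},\ldots,a_{i_t}\})=c_{i_1}$: the $\chi$-color of any $t$-subset of the chain depends only on the index of its smallest element. Applying pigeonhole to $c_1,\ldots,c_M\in[q]$ yields $s$ indices $j_1<\cdots<j_s$ with a common color $c$, so $\{a_{j_1},\ldots,a_{j_s}\}$ is $\chi$-monochromatic.

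The main obstacle is the size bookkeeping: verifying that $N\geq\twr_{(t)}(3sq\log q)$ is actually large enough to sustain all $M\approx sq$ greedy steps. Each invocation of the inductive hypothesis demands $|V_{i-1}|-1\geq\twr_{(t-1)}(3|V_i|q\log q)$, so the sequence of target sizes $|V_i|$ must be calibrated so that the chain persists for $M$ rounds while fitting inside the outer tower. I expect the crux of the technical work to lie in choosing these targets so they decrease at exactly the rate permitted by the inverse tower function $\log_{(t-2)}$, exploiting the extra exponential of slack at the outermost level of $\twr_{(t)}(3sq\log q)$ to absorb the pigeonhole budget $M\leq sq$ and the polynomial overhead $3q\log q$ introduced by each inversion. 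The constants $3$ and the $\log q$ factor in the tower exponent are tuned precisely so that this accounting closes, following Erd\H{o}s and Rado's original argument; the bookkeeping is elementary but is the only genuinely delicate step.
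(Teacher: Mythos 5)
The paper cites this theorem from \cite{ErdosRado52} without proof, so there is no internal argument to compare against; your proposal should therefore stand on its own. Its overall skeleton (greedy chain, a pre-homogeneity property, pigeonhole at the end) is recognizable, and the base case $t=2$ is fine, but the inductive step as stated does not close and I believe cannot be repaired without changing the structure of the argument.

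The problem is that you achieve a very strong form of pre-homogeneity—$\chi$ on any $t$-chain depends only on the index of the \emph{smallest} element—by invoking the full inductive hypothesis at every step to make $\chi_i$ literally monochromatic on $V_i$. That one invocation already costs $t-2$ levels of the tower: from $|V_{i-1}|-1\geq\twr_{(t-1)}(3|V_i|q\log q)$ you get $|V_i|\leq\log_{(t-2)}(|V_{i-1}|-1)/(3q\log q)$, so $\log_{(t-2)}$ is applied once per step. Starting from $N=\twr_{(t)}(x)$ with $x=3sq\log q$, one step brings you to roughly $2^{x}/(3q\log q)$, a second to roughly $x/(3q\log q)\approx s$, and a third to roughly $\log s/(3q\log q)<1$; for $t\geq 4$ it collapses even faster. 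So you can sustain only $O(1)$ steps, whereas the pigeonhole at the end needs a chain of length $M=q(s-1)+1$. (Concretely, for $t=3$, $s=3$, $q=2$ you need $M=5$, but $|V_3|<1$ already.) The ``extra exponential of slack'' you invoke is not there: the budget $\twr_{(t)}$ only allows one additional exponential beyond $\twr_{(t-1)}$, and your construction spends a full $\twr_{(t-1)}$-inversion per step. Carrying out your accounting honestly yields a tower of height roughly $(t-1)\cdot sq$, not $t$.

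The Erd\H{o}s--Rado argument avoids this by using a much \emph{weaker} pre-homogeneity and a much \emph{longer} chain. At step $i$, instead of applying the $(t-1)$-Ramsey IH to the coloring $\chi_i$, one only requires that for each fixed $(t-1)$-subset $B\subseteq\{a_1,\ldots,a_i\}$ the value $\chi(B\cup\{v\})$ is constant over $v\in V_i$; this is a pigeonhole over $q^{\binom{i-1}{t-2}}$ classes, costing a single exponential over the whole chain, and the chain can therefore be taken of length $M=r_{t-1}(s;q)\approx\twr_{(t-1)}(3sq\log q)$. The color of a $t$-subset of the chain then depends on its first $t-1$ indices (not just the smallest), which induces a $(t-1)$-ary coloring $\chi'$ on $[M]$; the inductive hypothesis is applied \emph{once}, to $\chi'$, to find the homogeneous $s$-set. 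That is where the single extra tower level comes from. Your proof would need to be rewritten along these lines; as written, the bookkeeping you defer to ``the only genuinely delicate step'' is not delicate but impossible with the constants claimed.
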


\begin{proof}[Proof of \Cref{lemma:every_alg_is_cb_somewhere}]
	Define a coloring of $(m+1)$-subsets of $\cX$ as follows. Let $X=\{x_1<x_2<\ldots<x_{m+1}\}$ be an $(m+1)$-subset of $\cX$. For each $i\leq m+1$, denote $X^{-i}=X\setminus\{x_i\}$ and $S^{-i}$ the balanced increasing sample on $X^{-i}$. 
	        
	Next, for every $i\leq m+1$ 
	observe $\cA_{S^{-i},x_i}=\big(\cA_{S^{-i},x_i}(0),\ldots,\cA_{S^{-i},x_i}(\ell-1)\big)$, where $\cA_{S^{-i},x_i}(j)=\Pr_{h\sim \cA({S^{-i}})}[j\notin h(x_i)]$. Set $p^i_j$ to be the fraction of the form $\frac{t}{100km}$ closest to $\cA_{S^{-i},x_i}(j)$ (in case of ties pick the smaller one). Now set ${\vec p}^{(i)}=(p^i_0,\ldots, p^i_{\ell-1})$. The color assigned to $X$ is the list $(\vec {p}^{(1)},\ldots, \vec {p}^{(m+1)})$.
	
	Therefore, the total number of colors is $(100km+1)^{\ell(m+1)}$.\footnote{The number of colors can be reduced to  ${(100km+1)^{(\ell-1)(m+1)}}$, because $p^{i}_{\ell-1}$ is a function of $p^i_0,\ldots, p^i_{\ell-2}$.}
	By applying \Cref{thm:ramsey} with $t\coloneqq m+1, q\coloneqq {(100m+1)^{\ell(m+1)}}$, and $N\coloneqq n$ there is an homogeneous subset $\cX'\subseteq \cX$ of size
	\begin{align*}
		|\cX'|\geq \frac{\log_{(m)}(n)}{3(100km+1)^{\ell(m+1)}\ell(m+1)\log(100km+1)}=\frac{\log_{(m)}(n)}{2^{O(\ell \cdot m\log (km))}} 
	\end{align*}
	such that all $(m+1)$-subsets of $\cX'$ have the same color. One can verify that $\cA$ is  $\left(\frac{1}{100km}\right)$-comparison-based on $\cX'$.
\end{proof}

\subsection{Proof of \Cref{lemma:lower_bounding_sc}}

\Cref{lemma:lower_bounding_sc} is a direct result of the following propositions.

\begin{proposition}\label{proposition:establishing_dist_for_packing}
	Let $\cA$ be as in \Cref{lemma:lower_bounding_sc} and let $n=|\cX|-m$. Then, there exist distributions $\cP_1,\ldots, \cP_n$, a number $c\in[0,1]$, and events $E_1,\ldots,E_n$, such that the following holds: 
	\begin{enumerate}
		\item [(i)] for every $i,j$, $\cP_i$ and $\cP_j$ are $(\epsilon,\delta(m))$-indistinguishable for $\epsilon=0.1$ and $\delta(m)\leq \textcolor{black}{\frac{1}{6(200km)^4\log^2(200km)}}$; and
		\item [(ii)] for every $i,j$,  \begin{equation*}\label{eq:threshold_prop}
		      \cP_i(E_j)=\begin{cases}
		      \leq c-\gamma & j<i \\
		      \geq c+\gamma & j>i,
		\end{cases}
		\end{equation*}
		where $\gamma=\frac{1}{200km}$.
	\end{enumerate}
	       
\end{proposition}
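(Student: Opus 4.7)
The plan is to mimic the packing construction of \cite{AlonLMM19} for private learning of thresholds, adapting it to the $k$-list learning setting over monotone functions. Write $\cX = \{x_1 < \cdots < x_N\}$ with $N = n + m$ and set $t = \lfloor m/(k+1) \rfloor$. Fix $m - 1$ ``anchor'' positions in $\cX$ carrying a balanced-increasing labeling by $K = \{0, 1, \ldots, k\}$, reserving a contiguous range of $n$ positions in $\cX$ for a single \emph{variable} sample point whose label is fixed (say, $0$) and whose placement in any of the $n$ reserved positions keeps the overall sample balanced and increasing. For each $i \in [n]$, let $S_i$ be the sample with the variable at the $i$-th reserved position, and set $\cP_i = \cA(S_i)$.

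Property~(i) (pairwise indistinguishability) is then immediate: any two samples $S_i, S_j$ agree on all $m - 1$ anchor entries and differ only in the variable's position, so by the $(\epsilon, \delta(m))$-differential privacy of $\cA$, the distributions $\cP_i$ and $\cP_j$ are $(\epsilon, \delta(m))$-indistinguishable.

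For property~(ii), let $\tilde x_j$ be the $j$-th reserved position, and define $E_j = \{h : y^* \notin h(\tilde x_j)\}$ for a carefully chosen label $y^* \in K$. A direct case analysis shows that $\mathtt{loc}_{S_i}(\tilde x_j) = L_<$ for $j < i$ and $= L_> := L_< + 1$ for $j > i$, so by the $(1/(100km))$-comparison-based property, $\cP_i(E_j) = \cA_{S_i, \tilde x_j}(y^*)$ lies within $1/(100km)$ of $p^{(L_<)}_{y^*}$ for $j < i$ and of $p^{(L_>)}_{y^*}$ for $j > i$. Choosing $c := \tfrac{1}{2}\bigl(p^{(L_<)}_{y^*} + p^{(L_>)}_{y^*}\bigr)$ then establishes the threshold structure, provided the gap $\bigl|p^{(L_<)}_{y^*} - p^{(L_>)}_{y^*}\bigr|$ exceeds $2\bigl(\gamma + 1/(100km)\bigr)$.

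The hard part is exhibiting a label $y^* \in K$ realizing this gap. By placing the variable at a label-block boundary, so that the sample labels satisfy $y_{L_<+1} \ne y_{L_>+1}$, and invoking the empirical $(\alpha, \beta)$-accuracy of $\cA$ together with the comparison-based approximation, one obtains the averaged bound $\sum_L p^{(L)}_{y_{L+1}} \le m(\alpha + \beta) + 1/(100k)$, which pins down $\vec p^{(L_<)}$ and $\vec p^{(L_>)}$ as probability vectors on $K$ with small mass at the two distinct boundary labels. Combined with the total-mass identity $\sum_{y \in K} p^{(L)}_y = |\cY| - k = 1$ and a pigeonhole argument over the $k + 1$ coordinates of $K$ (together with flexibility in the exact sample-location of the variable), one extracts a $y^*$ with gap $\Omega(1/k)$, which comfortably exceeds the required $\gamma = 1/(200km)$. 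The subtlety relative to the binary threshold case of \cite{AlonLMM19} is that the extra flexibility in list predictions (missing-label among $k+1$ options) can cause single-coordinate gaps to collapse; this is handled by restricting attention to the monotone-witnessing label set $K$ and leveraging the marginal comparison-based property only on $K$.
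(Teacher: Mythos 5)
Your overall architecture matches the paper's: build indistinguishable distributions $\cP_i = \cA(S_i)$ from samples differing in one ``variable'' point whose position ranges over an interval, get property (i) from DP, and get the threshold structure in property (ii) from the comparison-based guarantee once a suitable coordinate $y^*$ and consecutive index pair $(L_<, L_>)$ exhibit a jump. Your proof of (i) is correct. The problem is the jump-existence argument, which is where the actual work lives (the paper isolates it as a separate Proposition~\ref{proposition:jump}).

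Two issues, both genuine. First, the averaged bound $\sum_L p^{(L)}_{y_{L+1}} \le m(\alpha+\beta) + 1/(100k)$ does not ``pin down $\vec p^{(L_<)}$ and $\vec p^{(L_>)}$ as probability vectors with small mass at the boundary labels'': an average being small says nothing about the two specific coordinates you want; you need a further averaging step, per label block, to extract for each $j\in K$ \emph{some} index $i_j$ in the $j$-block with $p^{(i_j)}_j$ small (as in the paper). Second, and more fundamentally, the claimed gap of $\Omega(1/k)$ at a pre-chosen block boundary is too strong and is not what is true. What is true is that the coordinate $j'$ exhibits a total change of $\Omega(1/k)$ between $\vec p^{(i_{j'})}$ and $\vec p^{(i_k)}$, which by pigeonhole over at most $m$ consecutive steps gives a jump of order $\Omega(1/(km))$ at \emph{some} consecutive pair $(i-1,i)$ --- and that index $i$ need not lie at a block boundary (a mass transfer spread evenly across a block, with no jump at the boundary itself, is consistent with all the constraints). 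Because of this, you cannot fix the variable's label to $0$ or its range to a boundary ahead of time; the variable's label and interval must be chosen \emph{after} locating the jump index $i$, so the variable carries whatever label the balanced sample assigns to position $i$, and the ``$n$ reserved positions'' are the gap $(x_{i-1}, x_i)$ at that index. The resulting gap $\ge 3/(100km)$ is exactly what is needed against the error budget $2(\gamma + 1/(100km))$, but your $\Omega(1/k)$ is an overclaim that the argument cannot deliver.
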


\begin{proposition}[Packing]\label{proposition:packing}
	Let $c\in[0,1], \gamma\leq\frac{1}{2}$, and
	let $\cP_1,\ldots, \cP_n$ be probability measures such that for all $i,j$, $\cP_i$ and $\cP_j$ are $(\epsilon,\delta)$-indistinguishable, for $\epsilon\leq0.1$ and 
	$\delta\leq\frac{1}{6\gamma^{-4}\log^2\left(\frac{1}{\gamma}\right)}$.
	Assume there exist events $E_1,\ldots, E_n$ such that for every $i,j$
	\begin{equation}\label{eq:threshold_prop}
		\cP_i(E_j)=\begin{cases}
		\leq c-\gamma & j<i \\
		\geq c+\gamma & j>i.
		\end{cases}
	\end{equation}
	Then, $n\leq 2^{\gamma^{-2}\log^2\left(\gamma^{-1}\right)}$.
\end{proposition}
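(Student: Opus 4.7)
The plan is to derive the bound by reducing the packing setup to a standard differential-privacy identification lower bound. The high-level idea is that the threshold structure of the events $E_1,\ldots,E_n$ lets us identify the index $i$ from enough samples drawn from $\cP_i$, while pairwise $(\epsilon,\delta)$-indistinguishability, composed over many samples, prevents too many distributions from being simultaneously identifiable. Balancing these two opposing forces will yield the bound $n \le 2^{\gamma^{-2}\log^2(1/\gamma)}$.

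Concretely, the first step is to build a deterministic identifier $\mathrm{Id}$ that, given $N = \Theta(\gamma^{-2}\log n)$ independent samples from any $\cP_i$, computes an empirical estimate $\hat p_j$ of each $\cP_i(E_j)$ and outputs the (approximate) transition point of $j\mapsto \hat p_j$. By Hoeffding's inequality and a union bound over $j$, with probability at least $3/4$ every $\hat p_j$ lies within $\gamma/2$ of $\cP_i(E_j)$ simultaneously, so the threshold property forces $\mathrm{Id}$ to output $i$. This produces pairwise disjoint ``fingerprint'' events $F_i$ in the product sample space with $\cP_i^{\otimes N}(F_i)\ge 3/4$. The second step is to apply advanced composition of $(\epsilon,\delta)$-indistinguishability to the $N$-fold products: for any $\delta_0>0$, the distributions $\cP_i^{\otimes N}$ and $\cP_j^{\otimes N}$ are $(\epsilon',\delta')$-indistinguishable with $\epsilon' = O\bigl(\epsilon\sqrt{N\log(1/\delta_0)} + N\epsilon^2\bigr)$ and $\delta' = N\delta + \delta_0$. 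The third step is the classical packing inequality: disjointness of the $F_i$ gives $\sum_i \cP_j^{\otimes N}(F_i)\le 1$, while $(\epsilon',\delta')$-indistinguishability gives $\cP_j^{\otimes N}(F_i)\ge e^{-\epsilon'}(3/4 - \delta')$, so $n \le \tfrac{4}{3} e^{\epsilon'}(1+O(\delta'))$, i.e.\ $\log n \le \epsilon' + O(1)$.

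The main obstacle is the delicate parameter balancing in the last step. With $\epsilon = 0.1$ held constant and $N = \Theta(\gamma^{-2}\log n)$, the second-order contribution $N\epsilon^2 = \Theta(\gamma^{-2}\log n)$ from advanced composition threatens to make the packing bound vacuous. The resolution is to pick $\delta_0 = \Theta(\gamma^{c})$ for a small positive constant $c$, so that $\log(1/\delta_0) = \Theta(\log(1/\gamma))$ and the first-order term $\epsilon\sqrt{N\log(1/\delta_0)} = O\bigl(\gamma^{-1}\sqrt{\log n\cdot \log(1/\gamma)}\bigr)$ dominates; meanwhile the hypothesis $\delta\le 1/\bigl(6\gamma^{-4}\log^2(1/\gamma)\bigr)$ guarantees $N\delta$ is absorbed into $\delta_0$. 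Substituting into $\log n \le \epsilon' + O(1)$, squaring, and solving for $\log n$ then yields $\log n \le O\bigl(\gamma^{-2}\log^2(1/\gamma)\bigr)$, matching the claimed bound up to constants.
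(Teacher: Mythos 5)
Your proposal cannot produce the stated bound, and the obstacle you flag in the last paragraph is structural rather than a matter of tuning. Requiring all $n$ empirical estimates $\hat p_j$ to be simultaneously accurate forces $N = \Theta(\gamma^{-2}\log n)$, and with $\epsilon = 0.1$ held constant the second-order term of advanced composition, $N\epsilon(e^\epsilon - 1) = \Theta(\gamma^{-2}\log n)$, is entirely unaffected by your choice of $\delta_0$; tuning $\delta_0$ only shrinks the first-order $\epsilon\sqrt{N\log(1/\delta_0)}$ piece. Feeding $\epsilon' = \Omega(\gamma^{-2}\log n)$ into the packing inequality $\log n \lesssim \epsilon'$ gives $\log n \lesssim \gamma^{-2}\log n$, which holds trivially for every $\gamma < 1$ and yields no bound on $n$. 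The root problem is that your sample complexity scales with $\log n$, so the composed privacy budget also scales with $\log n$, and the packing inequality never closes the loop.

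The paper's proof avoids this by never attempting full identification. It runs a \emph{bounded-depth binary search} over the indices, with $T = \gamma^{-2}\log^2(1/\gamma) - 1$ adaptive steps chosen independently of $n$. The search touches only $T$ events, so the Chernoff/union bound is over $T$ (not $n$) quantities, and $D = \gamma^{-2}\ln T$ samples suffice, again independent of $n$. Assuming towards contradiction that $n > 2^{T+1}$, the search has $2^T$ distinct possible outcomes, the corresponding events $F_i$ are disjoint with $\cP_i^D(F_i) > 2/3$, and simple group privacy gives $\cP_j^D(F_i) \ge \tfrac{1}{2} e^{-D\epsilon}$. Summing over the $2^T$ disjoint $F_i$'s yields $1 \ge 2^{T-1}e^{-D\epsilon}$, a contradiction precisely because $T \approx \gamma^{-2}\log^2(1/\gamma)$ exceeds $D\epsilon \approx 0.1\,\gamma^{-2}\log(1/\gamma)$ by a $\log(1/\gamma)$ factor. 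That extra logarithmic factor, achieved by making $T$ quadratic in $\log(1/\gamma)$ while $D$ stays roughly linear in $\log(1/\gamma)$, is the entire engine of the bound, and it only exists because neither $T$ nor $D$ grows with $n$. One further remark: with $\epsilon$ a fixed constant and $D$ large, the $D\epsilon^2$ term dominates advanced composition anyway, so simple group privacy is already the right tool here; there is nothing to gain from the more refined composition theorem.
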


The proofs of \Cref{proposition:establishing_dist_for_packing,proposition:packing} are deferred to \Cref{sec:add_proof_prop:packing,sec:add_proof_prop:establishing_dist_for_packing}.

\bibliographystyle{alpha}
\newpage
\bibliography{references}

\newpage

\appendix

\crefalias{section}{appendix} 

\section{Additional Proofs}\label{sec:add_proofs}

\subsection{Proof of \Cref{thm:DP_implies_LD}}\label{sec:add_proof_thm:DP_implies_LD}

\begin{theorem*}[Restatement of \Cref{thm:DP_implies_LD}]
	Let $k\geq1$. Let $\cC\subset\cY^\cX$ be a concept class with $k$-Littlestone dimension ${\LD{\cC}{k}\geq d}$, 
	and let $\cA$ be an \textcolor{black}{$\left(\frac{k!
		}{10^4(k+1)^{k+2}},\frac{k!
		}{10^4(k+1)^{k+2}}\right)$}-accurate $k$-list learning algorithm for $\cC$ with sample complexity $m$, satisfying $(\epsilon,\delta(m))$-differential privacy for $\epsilon=\textcolor{black}{\log\left(\frac{400k^2+1}{400k^2}\right)}$ and ${\delta(m)\leq \textcolor{black}{\frac{1}{200k^2m^2}}}$.
	Then, the following bound holds:
	\begin{equation*}
		m = \Omega(\log^\star d), 
	\end{equation*}
	where the $\Omega$ notation conceals a universal numerical multiplicative constant.
\end{theorem*}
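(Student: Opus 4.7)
The plan is to derive \Cref{thm:DP_implies_LD} by chaining the three lemmas developed in \Cref{sec:proof_of_thm_A}: the PAC-to-empirical reduction (\Cref{lemma:reduction_to_empirical_learner}), the Ramsey-theoretic extraction of a subtree with comparison-based loss (\Cref{lemma:every_alg_is_comparison_based_somewhere}), and the sample-complexity lower bound for comparison-based learners on Littlestone trees (\Cref{lemma:SC_of_CB_alg}). No conceptually new ingredients are required; the work lies in threading the parameters through correctly and converting the resulting depth bound on an extracted subtree into a bound in terms of the original Littlestone dimension $d$.

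First I would apply \Cref{lemma:reduction_to_empirical_learner} to $\cA$ to obtain an $(\epsilon,\delta(m))$-DP, $(\alpha,\beta)$-accurate empirical $k$-list learner $\cA'$ for $\cC$ with sample complexity $m'=9m$ and the same accuracy parameters $\alpha=\beta=\tfrac{k!}{10^4(k+1)^{k+2}}$. Since $\LD{\cC}{k}\geq d$, the class $\cC$ shatters a $(k+1)$-ary mistake tree $T$ of depth $d$, so every $T$-realizable sample is realizable by some $c\in\cC$; in particular, $\cA'$ is an $(\alpha,\beta)$-accurate empirical $k$-list learner for $T$. Next, \Cref{lemma:every_alg_is_comparison_based_somewhere} applied to $\cA'$ and $T$ produces a subtree $T'\subseteq T$ of depth
\[
    d'\;\geq\;\frac{\log_{(m'+1)}(d)}{2^{a(k+1)^{m'+1}m'\log m'}}
\]
on which $\cA'$ has $\tfrac{1}{100m'}$-comparison-based loss, and $\cA'$ remains an $(\alpha,\beta)$-accurate empirical learner for $T'$ since every $T'$-realizable sample is also $T$-realizable. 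I would then apply \Cref{lemma:SC_of_CB_alg} to $\cA'$ and $T'$: the accuracy and comparison-based-loss hypotheses hold by construction, and the privacy hypothesis $\delta\leq\tfrac{1}{200k^2(m')^2}$ follows from $\delta(m)\leq\tfrac{1}{200k^2 m^2}$ up to an absolute constant factor that is absorbed by the universal constants in the conclusion. This yields $m'=\Omega(\log^\star d')$.

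It remains to convert this into a bound in $d$. Using the identity $\log^\star(\log_{(t)}(x))=\log^\star(x)-t$ (valid for $x$ large), together with the fact that for $d$ sufficiently large the denominator $2^{a(k+1)^{m'+1}m'\log m'}$ is negligible compared to $\log_{(m'+1)}(d)$ and thus affects $\log^\star$ by at most an additive $O(1)$, one obtains $\log^\star(d')\geq\log^\star(d)-m'-O(1)$. Substituting into $m'\geq c\log^\star(d')$ and rearranging gives $m'(1+c)\geq c\log^\star(d)-O(1)$, so $m'\geq \tfrac{c}{1+c}\log^\star(d)-O(1)=\Omega(\log^\star d)$, and hence $m=m'/9=\Omega(\log^\star d)$ as claimed. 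The only real obstacle in carrying out this plan is the bookkeeping of constants as parameters pass from $\cA$ to $\cA'$ via \Cref{lemma:reduction_to_empirical_learner}, and then to $T'$ via \Cref{lemma:every_alg_is_comparison_based_somewhere}, to finally satisfy the hypotheses of \Cref{lemma:SC_of_CB_alg}; the authors appear to have chosen the constants in \Cref{thm:DP_implies_LD} slightly sharper than those in \Cref{lemma:SC_of_CB_alg} precisely so that this verification reduces to a mechanical check rather than requiring a new argument.
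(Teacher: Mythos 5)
Your proposal follows exactly the paper's decomposition: apply \Cref{lemma:reduction_to_empirical_learner}, extract a subtree with comparison-based loss via \Cref{lemma:every_alg_is_comparison_based_somewhere}, feed it into \Cref{lemma:SC_of_CB_alg}, then convert the bound from the subtree depth back to $d$. The first three steps match the paper (including the same hand-wave about the factor of $9$ and the $\delta(m)$-vs-$\delta(9m)$ mismatch, which both you and the paper leave to the universal constants). The one place you deviate substantively is the conversion step, and there your argument as stated has a gap.

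You claim that ``for $d$ sufficiently large the denominator $2^{a(k+1)^{m'+1}m'\log m'}$ is negligible compared to $\log_{(m'+1)}(d)$ and thus affects $\log^\star$ by at most an additive $O(1)$,'' from which you derive $\log^\star(d')\geq\log^\star(d)-m'-O(1)$. This reasoning treats $m'$ as though it were a fixed constant while $d\to\infty$, but $m'$ is a free parameter that may scale with $d$. In fact the problematic regime is precisely when $m'$ is comparable to $\log^\star d$: if $m'+1$ is within a constant of $t:=\log^\star d$, then $\log_{(m'+1)}(d)$ is only $O(1)$, while the denominator $2^{a(k+1)^{m'+1}m'\log m'}$ is astronomically large, so $d'$ collapses to $0$ or $1$ and the bound $\log^\star(d')\geq\log^\star(d)-m'-O(1)$ is simply false. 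Your subsequent algebraic rearrangement ($m'(1+c)\geq c\log^\star d - O(1)$) then rests on a false premise. The paper avoids this by an explicit case split: if $m\geq \tfrac{t}{16\log(k+1)}$ you are already done; otherwise, under $m< \tfrac{t}{16\log(k+1)}$ one can verify that $a(k+1)^{m+1}m\log m\leq 2^{t/2}$, so the denominator is at most $2^{2^{t/2}}$ while $\log_{(m+2)}(d)\geq\twr_{(t/2)}(1)$, giving $\log^\star(d')\geq t/2$ directly without the rearrangement. Your argument becomes correct once this case split is inserted (the ``large $m'$'' branch is trivial, and your inequality then holds in the ``small $m'$'' branch), but as written the crucial inequality is unjustified in the regime that matters.
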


\Cref{thm:DP_implies_LD} follows from \Cref{lemma:reduction_to_empirical_learner,lemma:SC_of_CB_alg,lemma:every_alg_is_comparison_based_somewhere}. 

\begin{proof}
	Let $\cH$ be a concept class over an arbitrary label domain $\cY$, and assume that $\cH$ shatters a $k$-Littlestone tree $T$ of depth $d$. Let $\cA$ be any 
	$(\epsilon,\delta(m))$-differentially private $k$-list learner for~$\cH$, with $\epsilon,\delta(m)$ as in \Cref{thm:DP_implies_LD}.
	By \Cref{lemma:reduction_to_empirical_learner}, we can further assume that $\cA$ is an $(\alpha,\beta)$-accurate empirical learner for $\cH$, for  $\alpha=\beta=\frac{k!}{10^4(k+1)^{k+2}}$, as the sample complexity of a private empirical learner increases only by a multiplicative constant factor.
	By \Cref{lemma:every_alg_is_comparison_based_somewhere}, there exists a subtree~$T'$ of $T$, of depth $\frac{\log_{(m+1)}(d)}{2^{a(k+1)^{m+1} m\log m}}$ for some universal numerical constant $a<24$, such that $\cA$ has $\left(\frac{1}{100m}\right)$-comparison-based loss on $T'$.
	Finally, by \Cref{lemma:SC_of_CB_alg} we conclude that 
	\[m\geq \Omega\left(\log^\star\left(\frac{\log_{(m+1)}(d)}{2^{a(k+1)^{m+1} m\log m}}\right)\right).\]
	Let $t=\log^\star(d)$ and suppose $m\leq \textcolor{black}{\frac{t}{16\log(k+1)}}$ (else $m=\Omega(\log^\star d)$ as we treat $k$ as constant, and we are done). We claim that $\log^\star\left(\frac{\log_{(m+1)}(d)}{2^{a(k+1)^{m+1} m\log m}}\right)=\Omega(\log^\star d)$, and therefore $m\geq\Omega(\log^\star(d))$, which concludes the proof.
	Note that by the definition of the $\log^\star$ function, $\twr_{(t)}(1)<d\leq \twr_{(t+1)}(1)$.
	The claim follows from the following calculation: 
	\begin{align*}
		\log^\star\left(\frac{\log_{(m+1)}(d)}{2^{a(k+1)^{m+1} m\log m}}\right) & =1+\log^\star\left(\log_{(m+2)}(d)-a(k+1)^{m+1} m \log m\right)\tag{definition of $\log^\star$}                                                   \\
		                                                                        & \geq 1+\log^\star\left(\twr_{(t-(m+2))}(1)-a(k+1)^{m+1} m \log m\right)\tag{$d>\twr_{(t)}(1)$}                                                    \\
		                                                                        & \geq1+\log^\star\left(\twr_{(t/2)}(1)-a(k+1)^{m+1} m \log m\right)\tag{holds for $t\geq5$ since $m\leq \textcolor{black}{\frac{t}{16\log(k+1)}}$} \\
		                                                                        & \geq1+\log^\star\left(\twr_{(t/2)}(1)-2^{t/2}\right)\tag{$\forall m: ~24(k+1)^{m+1}m\log m\leq (k+1)^{8m} \leq2^{t/2}$ }                          \\
		                                                                        & =1+\log^\star\left(\twr_{(t/2)}(1)\right)\tag{holds for $t\geq 10$, see justification below}                                                      \\
		                                                                        & =t/2.\tag{definition of $\log^\star$}                                                                                                             
	\end{align*}
	Therefore, for large enough $d$, $\log^\star\left(\frac{\log_{(m+1)}(d)}{2^{a2^{m\log(k+1)} m\log m}}\right)\geq \frac{1}{2}\log^\star d$, as desired.
	It is left to justify the second-to-last equality. It is enough to show that ${\twr_{(x)}(1)-2^x>\twr_{(x-1)}(1)}$ for $x\geq5$. And indeed $\twr_{(x)}(1)-\twr_{(x-1)}(1)\geq\frac{1}{2}\twr_{(x)}(1)\geq 2^x$ for $x\geq 5$.
\end{proof}

\subsection{Proof of \Cref{lemma:rescaling_ipp}}\label{sec:add_proof_lemma:rescaling_ipp}

\begin{lemma*}[Restatement of \Cref{lemma:rescaling_ipp}]
	Let $\cA:[n]^m\to[n]$ be an $(\epsilon,\delta)$-differentially private algorithm, and let $C(n)\leq \log ^2 n$. Assume that for every input sequence $x_1,\ldots,x_m$ such that $min_{i\neq j}|x_i-x_j|\geq C(n)$,
	\[\Pr[\min x_i\leq \cA(x_1,\ldots,x_m)\leq \max x_i]\geq\frac{3}{4}.\]
	Then, $m\geq\Omega(\log^\star n)$.
\end{lemma*}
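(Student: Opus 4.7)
The plan is to reduce the unrestricted interior point problem on a slightly smaller range $[N]$ to the well-spaced variant on $[n]$ that $\cA$ is assumed to solve, and then invoke \Cref{thm:lower_bound_ipp}. Intuitively, if $\cA$ works on inputs whose coordinates are spread apart by at least $C(n)$, then after rescaling by $C(n)$ it solves IPP on a range roughly $n/C(n) \geq n/\log^2 n$, and since $\log^\star$ is essentially invariant under polynomial shrinkage, the lower bound for the rescaled range implies a matching lower bound for the original range.

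Concretely, set $N := \lfloor n/C(n) \rfloor$. Given any distinct inputs $y_1 < y_2 < \ldots < y_m \in [N]$, define $x_i := y_i \cdot \lceil C(n)\rceil \in [n]$. These are pairwise separated by at least $C(n)$, so by hypothesis $\cA(x_1,\ldots,x_m)$ returns some $z$ satisfying $x_1 \leq z \leq x_m$ with probability at least $3/4$. Define $\tilde\cA(y_1,\ldots,y_m) := \lfloor z/\lceil C(n)\rceil \rfloor$. Whenever the IPP guarantee holds for $\cA$, we have $y_1 \leq \tilde\cA(y_1,\ldots,y_m) \leq y_m$, so $\tilde\cA$ solves IPP on $[N]$ (on distinct inputs) with success probability at least $3/4$. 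Because $\tilde\cA$ is obtained from $\cA$ by data-independent post-processing (\Cref{prop:dp_post_processing}), it is $(\epsilon,\delta)$-differentially private with the same parameters as $\cA$.

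Applying \Cref{thm:lower_bound_ipp} to $\tilde\cA$ yields $m \geq \Omega(\log^\star N)$. To finish, I would use that $\log^\star$ is very slowly varying: since $C(n) \leq \log^2 n$ gives $N \geq n/\log^2 n \geq \sqrt{n}$ for $n$ large, one has $\log^\star N \geq \log^\star n - O(1)$, so $m = \Omega(\log^\star n)$ as claimed.

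The only (minor) obstacle is that the reduction naturally produces an IPP solver for \emph{distinct} inputs, while \Cref{thm:lower_bound_ipp} is phrased for every input sequence. This is not a real obstruction: the hard instances used in the proof of the IPP lower bound are themselves distinct sequences, so an algorithm that is accurate on all distinct inputs already suffices to trigger the lower bound. If one prefers a fully literal invocation, one can break ties by replacing $x_i := y_i \lceil C(n)\rceil$ with $x_i := y_i \lceil C(n)\rceil + (i-1)$, which for $n$ large enough (so that $C(n) \geq 2m$) preserves both the spacing condition and the order statistics, and the post-processed output $\lfloor z/\lceil C(n)\rceil\rfloor$ still lies in $[y_1, y_m]$.
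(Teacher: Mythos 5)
Your argument follows the same route as the paper's: rescale by $\lceil C(n)\rceil$ to turn the well-spaced-IPP solver into a genuine IPP solver on a domain of size roughly $n/C(n)$, invoke post-processing for privacy, apply \Cref{thm:lower_bound_ipp}, and use the slow growth of $\log^\star$ to recover the lower bound in terms of $n$. The paper says "by rescaling" without detail, and then bounds $\log^\star(n/\log^2 n)\geq\frac12\log^\star n$ directly via the tower function; your $\log^\star N\geq\log^\star n-O(1)$ estimate (for $N\geq\sqrt n$) is an equally valid way to finish. Two small points worth correcting: first, for $x_i=y_i\lceil C(n)\rceil$ to actually land in $[n]$ you should take $N=\lfloor n/\lceil C(n)\rceil\rfloor$ rather than $\lfloor n/C(n)\rfloor$ (when $C(n)$ is not an integer the latter can push $x_i$ past $n$); this does not affect the asymptotics. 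Second, the tie-breaking fallback $x_i:=y_i\lceil C(n)\rceil+(i-1)$ does \emph{not} preserve the spacing hypothesis: if $y_i=y_j$ with $i<j$ then $|x_i-x_j|=j-i\leq m-1$, which is below $C(n)$ precisely when $C(n)\geq 2m$. A correct fix is a coarser rescaling such as $x_i=(y_i-1)\,m\lceil C(n)\rceil+i\lceil C(n)\rceil$, which makes any two distinct indices at least $\lceil C(n)\rceil$ apart regardless of whether $y_i=y_j$, at the cost of an extra factor of $m$ in the domain size (which again is harmless for $\log^\star$). Your primary argument—that the hard instances in \cite{BunNSV15} are already distinct—is fine, and the paper itself implicitly relies on the same observation.
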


\begin{proof}
	By rescaling, $\cA$ solves the interior point problem on a domain of size~$\frac{n}{C(n)}$. Therefore, by \Cref{thm:lower_bound_ipp}, the sample complexity satisfies $m\geq \Omega\left(\log^\star\left(\frac{n}{C(n)}\right)\right)= \Omega\left(\log^\star\left( \frac{n}{\log^2 n}\right)\right)= \Omega(\log^\star n)$.
	Indeed, the second equality holds since by the definition of the $\log^\star$ function, $\twr_{(t+2)}(1)\geq2^{\frac{n}{\log^2 n}}$, where $t=\log^\star\left(\frac{n}{\log^2 n}\right)$.
	Furthermore, $2^{\frac{n}{\log^2 n}}\geq n$ for large enough $n$.
	Therefore, again by definition, $t+1\geq \log^\star n$ for large enough $n$, which implies that  $\log^\star\left(\frac{n}{\log^2 n}\right)\geq \frac{1}{2}\log^\star n$ for large enough $n$.
\end{proof}

\subsection{Proof of \Cref{proposition:establishing_dist_for_packing}}\label{sec:add_proof_prop:establishing_dist_for_packing}

\begin{proposition*}[Restatement of \Cref{proposition:establishing_dist_for_packing}]
	Let $\cA$ be as in \Cref{lemma:lower_bounding_sc} and let $n=|\cX|-m$. Then, there exist distributions $\cP_1,\ldots, \cP_n$, a number $c\in[0,1]$, and events $E_1,\ldots,E_n$, such that the following holds: 
	\begin{enumerate}
		\item [(i)] for every $i,j$, $\cP_i$ and $\cP_j$ are $(\epsilon,\delta(m))$-indistinguishable for $\epsilon=0.1$ and $\delta(m)\leq \textcolor{black}{\frac{1}{6(200km)^4\log^2(200km)}}$; and
		\item [(ii)] for every $i,j$,  \begin{equation*}\label{eq:threshold_prop}
		      \cP_i(E_j)=\begin{cases}
		      \leq c-\gamma & j<i \\
		      \geq c+\gamma & j>i,
		\end{cases}
		\end{equation*}
		where $\gamma=\frac{1}{200km}$.
	\end{enumerate}
	       
\end{proposition*}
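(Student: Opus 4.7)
My plan is to realize the distributions $\cP_i$ as outputs $\cA(S_i)$ of the learner on balanced-increasing samples $(S_i)_{i\in[n]}$ that pairwise differ in a single ``probe'' entry, so that item~(i) is inherited directly from the DP guarantee of $\cA$, and to define each event $E_j$ as a single-label prediction event at a probe point $b_j$, whose probability is controlled via the comparison-based vectors $\vec{p}^{(i)}$ through the location $\mathtt{loc}_{S_i}(b_j)$.

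For the setup, fix a slot $r^\star\in\{1,\ldots,m\}$ and a label $\ell^\star\in\{0,\ldots,k\}$ (both chosen at the end). Let $a_1<\cdots<a_{r^\star-1}$ be the first $r^\star-1$ points of $\cX$ and $a_{r^\star}<\cdots<a_{m-1}$ be the last $m-r^\star$ points; the remaining $n\ge |\cX|-m$ points $b_1<\cdots<b_n$ all lie in the single gap between the two anchor blocks. For each $i$, let $S_i$ be the balanced-increasing labeling of the $m$-element set $\{a_1,\ldots,a_{m-1},b_i\}$ with labels $(0^t,1^t,\ldots,k^t)$, where $t=m/(k+1)$, so that $b_i$ occupies the $r^\star$-th slot. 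Any two samples $S_i,S_j$ differ only at the probe slot and are DP-neighbors, so $\cP_i=\cA(S_i)$ immediately yields item~(i). With $E_j:=\{h:\ell^\star\notin h(b_j)\}$, the fact that every probe lies in the same anchor-gap forces $\mathtt{loc}_{S_i}(b_j)=r^\star-1$ if $j<i$ and $\mathtt{loc}_{S_i}(b_j)=r^\star$ if $j>i$; the $\gamma_0$-comparison-based property (with $\gamma_0=1/(100km)$) then gives
\[
\cP_i(E_j)\;=\;\cA_{S_i,b_j}(\ell^\star)\;\in\;\bigl[p^{(r^\star-1)}_{\ell^\star}\pm\gamma_0\bigr]\;\text{if }j<i,\quad \cP_i(E_j)\;\in\;\bigl[p^{(r^\star)}_{\ell^\star}\pm\gamma_0\bigr]\;\text{if }j>i.
\]
Taking $c$ to be the midpoint of these two values reduces item~(ii) to choosing $(r^\star,\ell^\star)$ with $|p^{(r^\star)}_{\ell^\star}-p^{(r^\star-1)}_{\ell^\star}|\ge 2\gamma+2\gamma_0$; the privacy of $E_j$ is automatic by post-processing (\Cref{prop:dp_post_processing}).

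The main obstacle is producing this coordinate jump. Empirical accuracy of $\cA$ on the balanced-increasing base sample, combined with the comparison-based approximation, yields
\[
\sum_{i=0}^{m-1}p^{(i)}_{\lfloor i/t\rfloor}\;\le\;m(\alpha+\beta)+m\gamma_0\;=:\;\epsilon_{\mathrm{acc}},
\]
which by the quantitative choice of parameters is strictly smaller than $t$. If the sequence $(\vec{p}^{(i)})_{i=0}^{m-1}$ had total $L^1$-variation $V$, then every $\vec{p}^{(i)}$ would be within $L^1$-distance $V$ of $\vec{p}^{(0)}$, forcing the diagonal sum to be at least $t-mV$; comparing with $\epsilon_{\mathrm{acc}}$ gives $V\ge(t-\epsilon_{\mathrm{acc}})/m=\Omega(1/(k+1))$. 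Averaging $V$ over the $m-1$ increments and the $k+1$ coordinates produces a pair $(r^\star,\ell^\star)$ with a jump of order $\Omega(1/(m(k+1)^2))$. Pushing the bound to the target order $\Theta(1/(km))$ claimed in the statement requires the sharper observation that the ``small'' coordinate of $\vec{p}^{(i)}$ must shift from $y-1$ to $y$ at each of the $k$ label-transitions $r\in\{t,2t,\ldots,kt\}$: concentrating the total-variation argument at these transitions produces an adjacent pair of positions at which at least one of the two relevant coordinates jumps by $\Omega(1/m)$, and selecting that coordinate as $\ell^\star$ and the transition position as $r^\star$ completes the argument.
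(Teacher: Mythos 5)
Your construction of the probe samples $S_i$, the events $E_j$, and the reduction of item (ii) to finding a coordinate jump $|p^{(r^\star)}_{\ell^\star}-p^{(r^\star-1)}_{\ell^\star}|\geq 2\gamma+2\gamma_0=\tfrac{3}{100km}$ is exactly the paper's approach (up to relabeling), and item (i) does follow from neighbor-DP plus post-processing as you say. The gap is entirely in the jump lemma, which is the technical heart of the proposition (the paper isolates it as \Cref{proposition:jump}).

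Neither of your two arguments actually produces a jump of the required size. The total-variation argument gives $V=\Omega(1/(k+1))$, and averaging over $m-1$ increments and $k+1$ coordinates yields only an $\ell_\infty$-jump of order $1/(m(k+1)^2)$; this is short of the target $\Omega(1/(km))$ by a factor of roughly $k$, so it only suffices for $k=O(1)$, whereas the proposition is stated for all $k$. The ``sharper observation'' asserting an $\Omega(1/m)$ jump at a label-transition is not a proof: the accuracy constraint is an \emph{aggregate} bound $\sum_{i}p^{(i)}_{\lfloor i/t\rfloor}\leq\epsilon_{\mathrm{acc}}$ and does not force the transition coordinates to be small pointwise near $r=jt$, so it is not clear that the small coordinate ``must shift'' at each transition, nor how one ``concentrates'' the variation there to extract an adjacent pair with a $1/m$-scale jump. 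In particular, $\Omega(1/m)$ is actually stronger than what is needed, and you offer no mechanism that would pin the two witnessing positions to within $O(t)$ of each other, which is what would be required for such a bound.

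The paper's argument obtains the needed $\Omega(1/(km))$ jump by a different route you should compare against: averaging within each of the $k+1$ balanced blocks to find, for every label $j$, an index $i_j$ with $\cA_{S,x_{i_j}}(j)\leq(k+1)(\alpha+\beta)$; then using the normalization $\sum_{j'}\cA_{S,x_{i_k}}(j')=1$ to find a coordinate $j'\neq k$ with $\cA_{S,x_{i_k}}(j')\gtrsim 1/k$ while $\cA_{S,x_{i_{j'}}}(j')$ is small; and finally converting these to bounds on $p^{(i_{j'})}_{j'}$ and $p^{(i_k)}_{j'}$ (the paper uses a small neighbor-shift together with differential privacy to align the $\mathtt{loc}$ index, a step your writeup omits). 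Since $|i_k-i_{j'}|\leq m$, averaging gives an adjacent jump of size $\Omega\!\left(\frac{1/k}{m}\right)$, which is exactly the $\tfrac{3}{100km}$ you need. Replacing your final paragraph with this argument (or a rigorous version of it) closes the gap.
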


The key for proving \Cref{proposition:establishing_dist_for_packing} is the following proposition.

\begin{proposition}\label{proposition:jump}
	Let $\cA$ be as in \Cref{lemma:lower_bounding_sc} and let ${\vec p}^{(0)},\ldots , {\vec p}^{(m)}\in [0,1]^{k+1}$ be the vectors promised by the definition of comparison-based algorithms. Then, there exists $0<i\leq m$ such that
	\[\lVert \vec{p}^{(i)}-\vec{p}^{(i-1)}\rVert_\infty\geq {\frac{3}{100km}}.\]
\end{proposition}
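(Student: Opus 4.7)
The plan is to proceed by contradiction. Assume that $\|\vec p^{(i)} - \vec p^{(i-1)}\|_\infty < \frac{3}{100km}$ for every $0 < i \leq m$; telescoping then gives the uniform bound $\|\vec p^{(i)} - \vec p^{(i')}\|_\infty < \frac{3}{100k}$ for all $i, i' \in \{0, \ldots, m\}$. The strategy is to use the empirical learner guarantee on a single carefully chosen balanced sample to force every coordinate $\vec p^{(i)}_a$ to be uniformly small, which will contradict the normalization $\sum_{a=0}^k \vec p^{(i)}_a \approx 1$ that comes from the prediction list having size $k$ in a label space of size $k+1$.

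Assume for simplicity that $m$ is divisible by $k+1$ and let $t = m/(k+1)$ (the general case follows with negligible rounding). Fix any balanced increasing sample $S = ((x_1, y_1), \ldots, (x_m, y_m))$ over $\cX$, with $y_j = a$ iff $j \in \{at+1, \ldots, (a+1)t\}$. Since $S$ is realizable by a monotone function and $\cA$ is an $(\alpha,\beta)$-accurate empirical $k$-list learner with $\alpha = \beta = \frac{1}{200k(k+1)}$, bounding the expected empirical loss gives $\frac{1}{m}\sum_{j=1}^m \cA_{S, x_j}(y_j) \leq \alpha + \beta = \frac{1}{100k(k+1)}$. Since $\mathtt{loc}_S(x_j) = j-1$, the $\gamma$-comparison-based property with $\gamma = \frac{1}{100km}$ converts this to
\[
\sum_{j=1}^m \vec p^{(j-1)}_{y_j} \;\leq\; \frac{m}{100k(k+1)} + m\gamma \;\leq\; \frac{m}{50k(k+1)},
\]
using $m \geq k+1$. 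Grouping by label and setting $A_a := \frac{1}{t}\sum_{i=at}^{(a+1)t-1} \vec p^{(i)}_a$, this reads $\sum_{a=0}^{k} A_a \leq \frac{1}{50k}$. Since each $A_a \geq 0$, each individually satisfies $A_a \leq \frac{1}{50k}$, so for every $a$ there is some $i^*_a \in [at, (a+1)t)$ with $\vec p^{(i^*_a)}_a \leq \frac{1}{50k}$.

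Under the contradiction hypothesis, the uniform telescoping bound gives, for every $i \in \{0, \ldots, m\}$ and every $a \in \{0, \ldots, k\}$,
\[
\vec p^{(i)}_a \;\leq\; \vec p^{(i^*_a)}_a + \frac{3|i - i^*_a|}{100km} \;\leq\; \frac{1}{50k} + \frac{3}{100k} \;=\; \frac{1}{20k}.
\]
On the other hand, since $\sum_{a=0}^k \cA_{S,x}(a) = |\cY| - k = 1$ and $|\cA_{S,x}(a) - \vec p^{(i)}_a| \leq \gamma$, we have $\sum_{a=0}^k \vec p^{(i)}_a \geq 1 - (k+1)\gamma \geq \frac{49}{50}$ for $m$ large. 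Summing the per-coordinate bound, however, gives $\sum_{a=0}^k \vec p^{(i)}_a \leq (k+1)/(20k) \leq 1/10$, the desired contradiction.

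The only delicate point is balancing three scales: the slow-variation budget $3/(100k)$ inherited from telescoping, the in-block-average bound $1/(50k)$ from empirical accuracy, and the number of labels $k+1$. The constant $3/(100km)$ in the statement is exactly calibrated so that the resulting coordinate-wise bound $1/(20k)$, when summed across $k+1$ labels, stays well below the normalization $1$. Any significantly larger threshold would allow the vectors to drift enough to satisfy the normalization without contradiction, so I expect this constant-tracking to be the only nontrivial part of the argument.
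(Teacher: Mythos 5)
Your proof is correct and takes a genuinely different route from the paper. The paper's argument is constructive: using the empirical-accuracy guarantee it finds, for each label $j$, an index $i_j$ at which $\cA_{S,x_{i_j}}(j)$ is small; it then fixes $j=k$, finds another label $j'$ on which $\cA_{S,x_{i_k}}(j')$ is large (roughly $1/k$), and concludes that the coordinate $\vec p^{(\cdot)}_{j'}$ must jump by about $1/(4k)$ somewhere between $i_{j'}$ and $i_k$, hence some consecutive pair differs by at least the stated threshold. To translate the values $\cA_{S,x_{i_{j'}}}(j')$ and $\cA_{S,x_{i_k}}(j')$ into the vectors $\vec p^{(i_{j'})}$ and $\vec p^{(i_k)}$, the paper invokes the differential-privacy hypothesis to replace $x_{i_{j'}}$ (resp.\ $x_{i_k}$) by a nearby point so that $\mathtt{loc}$ lands exactly on $i_{j'}$ (resp.\ $i_k$). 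You avoid that index-shifting manoeuvre entirely by working with $\mathtt{loc}_S(x_j)=j-1$ directly, and you reorganize the argument as a proof by contradiction: assuming all consecutive differences are below $3/(100km)$, telescoping gives uniform $3/(100k)$-closeness of all the $\vec p^{(i)}$; the empirical-accuracy bound pins $\vec p^{(i^*_a)}_a\le 1/(50k)$ for each coordinate $a$; together these force every coordinate of every $\vec p^{(i)}$ below $1/(20k)$, which contradicts the normalization $\sum_{a}\vec p^{(i)}_a\ge 1-(k+1)\gamma$ inherited from $\sum_a \cA_{S,x}(a)=1$. A noteworthy consequence is that your argument does not use the privacy hypothesis of Lemma~\ref{lemma:lower_bounding_sc} at all; Proposition~\ref{proposition:jump} follows from comparison-based behavior and empirical accuracy alone, which both streamlines the constants and shows (in retrospect) that the paper's appeal to privacy at this step is avoidable. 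The only small point to make explicit is that the normalization inequality $\sum_a\vec p^{(i)}_a\ge 1-(k+1)\gamma$ requires exhibiting a test point $x\in\cX$ with $\mathtt{loc}_S(x)=i$; taking $i=0$ and $x=x_1$ (or any $i<m$ and $x=x_{i+1}$) suffices, and you only need the contradiction at a single index.
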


\begin{proof}[Proof of \Cref{proposition:establishing_dist_for_packing}]
	Let $i$ be the index promised in \Cref{proposition:jump}. I.e.\ $\lVert \vec{p}^{(i)}-\vec{p}^{(i-1)}\rVert_\infty\geq \frac{3}{100km}$. W.l.o.g assume $|\vec p^{(i)}_0-\vec p^{(i-1)}_0|\geq \frac{3}{100km}$. Take $S=((x_1,y_1),\ldots,(x_m,y_m))$ to be an increasing balanced sample such that the interval $I=\{x:x_{i-1}<x<x_i\}$ is of size $|\cX|-m$. Now, for every $x\in I$ denote by $S_x$ the sample obtained by replacing $x_{i}$ with $x$ in $S$ (with the same labeling). Notice that since $\cA$ is $(\frac{1}{100km})$-comparison-based, for every $x,x'\in I$
	\[\cA_{S_x,x'}(0)=\Pr_{h\sim\cA(S_x)}[0\notin h(x')]=\begin{cases}
		\leq \vec{p}^{(i-1)}_0+\frac{1}{100km} & x'<x \\
		\geq \vec{p}^{(i)}_0-\frac{1}{100km} & x'>x.
		\end{cases}\]

		Set $c=\frac{1}{2}\cdot(\vec p^{(i)}_0+\vec p^{(i-1)}_0)$. Therefore, 
		\begin{equation}\label{eq:bulding_dist_for_packing}
			\cA_{S_x,x'}(0)=\Pr_{h\sim\cA(S_x)}[0\notin h(x')]=\begin{cases}
			\leq c-\frac{1}{200km} & x'<x \\
			\geq c+\frac{1}{200km} & x'>x.
			\end{cases}
		\end{equation}
		    
		For every $ x \in I $, we define $ \mathcal{P}_x $ as the distribution $ \mathcal{A}(S_x) $, and we define $ E_x $ as the event 
		\[\left\{ h \in {\cY \choose k}^\cX : 0 \notin h(x) \right\}.\] 
		Note that for every $ x, x' \in I $, the datasets $ S_x $ and $ S_{x'} $ are neighboring. Since $ \cA $ is $ (\epsilon, \delta(m)) $-differentially private, the distributions $ \cP_x $ and $ \cP_{x'} $ are $ (\epsilon, \delta(m)) $-indistinguishable. Thus, the proof is complete.
		
		\end{proof} 
		
		We turn to prove \Cref{proposition:jump}.

		\begin{proof}[Proof of \Cref{proposition:jump}]
			The proof uses the assumption that $\cA$ empirically $k$-list learns the class of monotone functions with $k+1$ labels over $\cX$, $\cM_k(\cX)$. Let $S=\big((x_1,y_1),\ldots,(x_{m},y_{m})\big)$ be an increasing balanced realizable sequence. Recall, $\cA_{S,x}(j)=\Pr_{h\sim\cA(S)}[j\notin h(x)]$ for $j=0,1,\ldots,k$, and $\sum_{j=0}^k \cA_{S,x}(j) =1$. 
			The expected empirical loss of $\cA$ on $S$ is at most $\alpha+\beta $, since $\cA$ is $(\alpha,\beta)$-accurate empirical learner. Therefore,
			\begin{align*}
				1-(\alpha+\beta) & \leq \EEE{h\sim \cA(S)}{1- \loss{S}{h}}                                                                                           \\
				                 & =\sum_{j=0}^k \frac{1}{m}\left(\sum_{i=\frac{m}{k+1}\cdot j +1}^{\frac{m}{k+1}\cdot (j+1)} \Pr_{h\sim \cA(S)}[j\in h(x_i)]\right) \\  
				                 & =\sum_{j=0}^k \frac{1}{m}\left(\sum_{i=\frac{m}{k+1}\cdot j +1}^{\frac{m}{k+1}\cdot (j+1)} (1-\cA_{S,x_i}(j))\right)              
			\end{align*}
			By bounding each time $k$ different summands from above by $\frac{k}{k+1}$ we have for every $j=0,\ldots ,k$
			\[\frac{1}{m}\left(\sum_{i=\frac{m}{k+1}\cdot j +1}^{\frac{m}{k+1}\cdot (j+1)} (1-\cA_{S,x_i}(j))\right)\geq \frac{1}{k+1}-(\alpha+\beta).\]
			        
			Therefore, by a simple averaging argument, there exist $i_j\in\left[\frac{mj}{k+1} +1, \frac{m(j+1)}{k+1} \right]$ for $j=0,\ldots,k$ such that 
			
			\begin{equation}\label{eq:jump_eq1}
				1-\cA_{S,x_{i_j}}(j)\geq 1-(k+1)\cdot(\alpha+\beta).
			\end{equation}

			From now, on we will focus on $j=k$.
			Notice that 
			\[1-\cA_{S,x_{i_k}}(k)=\sum_{j'\neq k}\cA_{S,x_{i_k}}(j'),\]
			and therefore there exist $j'\neq k$ such that
			\[\cA_{S,x_{i_k}}(j')\geq \frac{1}{k}-\frac{k+1}{k}\cdot(\alpha+\beta).\]
			On the other hand, $\cA_{S,x_{i_{j'}}}(j')\leq (k+1)\cdot (\alpha+\beta)$ (by \Cref{eq:jump_eq1}), and therefore 
			\[|\cA_{S,x_{i_k}}(j')-\cA_{S,x_{i_{j'}}}(j')|\geq\frac{1}{k}-\frac{k+1}{k}\cdot(\alpha+\beta)-(k+1)\cdot (\alpha+\beta)=\frac{1}{k}-\frac{(k+1)^2}{k}(\alpha+\beta).\]
			
			Next, consider $S'$ the sample where we replace $x_{i_{j'}}$ with $x'$ satisfying $x_{i_{j'}-1}<x'<x_{i_{j'}}$, and $S''$ the sample where we replace $x_{i_{k}}$ with $x''$ satisfying $x_{i_{k}-1}<x''<x_{i_{k}}$. Note that $\mathtt{loc}_{S'}(x_{i_{j'}})=i_{j'}$ and $\mathtt{loc}_{S''}(x_{i_{k}})=i_{k}$.
			By privacy,
			\begin{align*}
				\cA_{S',x_{i_{j'}}}(j') & \leq e^\epsilon\cdot \cA_{S,x_{i_{j'}}}(j')+\delta\leq e^\epsilon\cdot(k+1)(\alpha+\beta)+\delta<\frac{1}{4k}, \tag{Holds for $\epsilon=0.1$, $\alpha=\beta=\frac{1}{200k(k+1)}$, $\delta<\frac{1}{100k}$}                                        \\
				\cA_{S'',x_{i_{k}}}(j') & \geq (\cA_{S,x_{i_{k}}}(j')-\delta)\cdot e^{-\epsilon}\geq \left(\frac{1}{k}-\frac{k+1}{k}\cdot(\alpha+\beta)-\delta\right)e^{-\epsilon}>\frac{1}{2k}.\tag{Holds for $\epsilon=0.1$, $\alpha=\beta=\frac{1}{200k(k+1)}$, $\delta<\frac{1}{100k}$} \\
			\end{align*}
			
			Now, since $\cA$ is $\left(\frac{1}{100km}\right)$-comparison-based we have
			\begin{align*}
				\vec{p}^{(i_{j})}_{j'} & <\frac{1}{4k}+\frac{1}{100km}, \\
				\vec{p}^{(i_k)}_{j'}   & >\frac{1}{2k}-\frac{1}{100km}. 
			\end{align*}
			Therefore, there exists $i_{j'}\leq i\leq i_k$ such that 
			\[|\vec{p}^{(i-1)}_{j'}-\vec{p}^{(i)}_{j'}|> \frac{1/4k}{m}-\frac{1}{50km^2}\geq \frac{3}{100km},\]
			which completes the proof.  
			        
		\end{proof}

		\subsection{Proof of \Cref{proposition:packing}}\label{sec:add_proof_prop:packing}
		
		\begin{proposition*}[Restatement of \Cref{proposition:packing}]
			Let $c\in[0,1], \gamma\leq\frac{1}{2}$, and
			let $\cP_1,\ldots, \cP_n$ be probability measures such that for all $i,j$, $\cP_i$ and $\cP_j$ are $(\epsilon,\delta)$-indistinguishable, for $\epsilon\leq0.1$ and 
			$\delta\leq\frac{1}{6\gamma^{-4}\log^2\left(\frac{1}{\gamma}\right)}$.
			Assume there exist events $E_1,\ldots, E_n$ such that for every $i,j$
			\begin{equation}\label{eq:threshold_prop}
				\cP_i(E_j)=\begin{cases}
				\leq c-\gamma & j<i \\
				\geq c+\gamma & j>i.
				\end{cases}
			\end{equation}
			Then, $n\leq 2^{\gamma^{-2}\log^2\left(\gamma^{-1}\right)}$.
		\end{proposition*}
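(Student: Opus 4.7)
The plan is to establish the packing bound via a distinguisher-based argument, amplifying pairwise indistinguishability into a product regime where the threshold structure of the $E_j$'s can be exploited. The overall recipe is standard in the DP packing literature (cf.\ the arguments in \cite{BunNSV15,AlonLMM19}): build a tester out of the threshold structure, tensorize indistinguishability via composition, and conclude via a Hardt--Talwar-style packing inequality.

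First, fix $k=\Theta(\gamma^{-2}\log n)$ and pass to the product distributions $\cP_i^{\otimes k}$. Given $k$ i.i.d.\ samples $\omega_1,\ldots,\omega_k$ from $\cP_i$, form empirical estimates $\hat p_j=\frac{1}{k}\sum_{\ell=1}^k \1[\omega_\ell\in E_j]$ for every $j\in[n]$. By Chernoff and a union bound, with probability at least $1-2n\exp(-k\gamma^2/2)$ all estimates lie within $\gamma/2$ of $\cP_i(E_j)$, and then the threshold property \eqref{eq:threshold_prop} forces $\hat p_j<c$ for $j<i$ and $\hat p_j>c$ for $j>i$. A tester $T$ that returns the unique crossover index therefore identifies $i$ correctly; letting $\widetilde E_i=\{T=i\}$ yields disjoint events with $\cP_i^{\otimes k}(\widetilde E_i)\geq 1-\beta$ for $\beta=2n\exp(-k\gamma^2/2)$.

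Next, I would tensorize pairwise $(\epsilon,\delta)$-indistinguishability across the $k$ coordinates via advanced composition, making $\cP_i^{\otimes k}$ and $\cP_j^{\otimes k}$ into $(\epsilon',\delta')$-indistinguishable pairs with $\epsilon'=O\bigl(\sqrt{k\log(1/\delta'')}\,\epsilon\bigr)$ and $\delta'=k\delta+\delta''$ for a free parameter $\delta''>0$. The packing step then sums the resulting bound over the disjoint events $\widetilde E_1,\ldots,\widetilde E_n$:
\[
1\;\geq\;\sum_{i=1}^n \cP_{i_0}^{\otimes k}(\widetilde E_i)\;\geq\;n\bigl(e^{-\epsilon'}(1-\beta)-\delta'\bigr)
\]
for any fixed index $i_0$, giving $\log n\lesssim \epsilon'$. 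Choosing $\delta''=\Theta(\gamma^2)$ so that $\log(1/\delta'')=O(\log(1/\gamma))$, and using the hypothesis $\delta\leq 1/(6\gamma^{-4}\log^2(1/\gamma))$ to guarantee $k\delta\ll 1$ throughout, the implicit inequality solves to $\log n\leq\gamma^{-2}\log^2(1/\gamma)$, as claimed.

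\paragraph{Main obstacle.}
The most delicate point is the self-referential balancing of parameters: the Chernoff constraint $k\gtrsim\gamma^{-2}\log n$ makes $k$ grow with $\log n$, while the advanced-composition bound $\epsilon'\sim\sqrt{k}\,\epsilon$ also grows with $k$, producing an implicit inequality for $n$ that has to be tuned against the precise hypothesis on $\delta$. Obtaining the correct $\log^2(1/\gamma)$ factor (rather than the weaker $\log(1/\gamma)$ that basic composition would yield) relies on using advanced composition together with the judicious choice of $\delta''$. A secondary subtlety is that standard composition theorems are usually phrased for DP mechanisms applied to neighboring datasets rather than for arbitrary families of pairwise-indistinguishable distributions; verifying that the same composition calculus still applies in the distribution-only setting is routine but should be stated explicitly.
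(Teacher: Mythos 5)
Your proposal diverges from the paper's argument at a point that creates a genuine gap, and the gap is exactly the ``self-referential balancing'' you flag as the main obstacle: it does not actually resolve.

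You build the tester via a full union bound over all $n$ events $E_j$, which forces $k = \Theta(\gamma^{-2}\log n)$. With $k$ growing in $\log n$, the composition step fails for two reasons. First, the advanced composition bound is not $\epsilon'\sim\sqrt{k}\,\epsilon$; the correct statement is $\epsilon' = \epsilon\sqrt{2k\log(1/\delta'')} + k\epsilon(e^{\epsilon}-1)$, and the second term is \emph{linear} in $k$. With $\epsilon=0.1$ fixed and $k=\Theta(\gamma^{-2}\log n)$, the term $k\epsilon(e^{\epsilon}-1)\approx 0.0105\,k$ dominates once $\gamma$ is at all small, so $\epsilon'=\Theta(\gamma^{-2}\log n)$. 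The packing inequality $\log n \lesssim \epsilon'$ then reduces to $1\lesssim \gamma^{-2}$, which is satisfied automatically for every $\gamma\le 1/2$ and yields \emph{no} upper bound on $n$. Advanced composition only buys a $\sqrt{k}$-savings over basic composition in the regime $k\lesssim \log(1/\delta'')/\epsilon^2$; here $k$ is much larger, so you are back to the linear regime. Second, $\delta' \ge k\delta$ with $k=\Theta(\gamma^{-2}\log n)$: if $\log n$ approaches the target $\gamma^{-2}\log^2(1/\gamma)$, then $k\delta\approx \gamma^{-2}\cdot\gamma^{-2}\log^2(1/\gamma)\cdot\delta$, which is of order a constant under the stated hypothesis on $\delta$, again breaking the packing inequality. (Incidentally, even if you pretend the $\sqrt{k}$-form held, the fixed point of $\log n\lesssim \epsilon\sqrt{k\log(1/\delta'')}$ with $k\sim\gamma^{-2}\log n$ is $\log n\lesssim \epsilon^2\gamma^{-2}\log(1/\gamma)$, not $\gamma^{-2}\log^2(1/\gamma)$, so the claimed endpoint is off as well.)

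The paper's fix is precisely to avoid the $\log n$ dependence in the sample size. It assumes $n>2^{T+1}$ towards contradiction with $T=\gamma^{-2}\log^2(1/\gamma)-1$, and builds the tester as a \emph{$T$-step binary search} over the $E_j$'s. A correct identification only needs the empirical estimates of the $T$ events actually queried along the search path to be $\gamma$-accurate, so the union bound is over $T$ events rather than $n$, and the required number of samples is $D=\gamma^{-2}\ln T$, independent of $n$. With $D$ fixed, basic group privacy $(D\epsilon, D\delta)$ already suffices: summing $\cP_j^{D}(F_i)\ge\frac12 e^{-D\epsilon}$ over the $2^{T}$ disjoint events $F_i$ gives $1\ge 2^{T-1}e^{-D\epsilon}$, a contradiction since $T\gg D\epsilon=\Theta(\gamma^{-2}\log(1/\gamma))$. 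To repair your proposal you should replace the global union bound over $[n]$ with this binary-search tester, after which basic composition closes the argument and advanced composition is unnecessary.
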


		\begin{proof}
			  
			Set $T=\frac{1}{\gamma^2}\log^2\left(\frac{1}{\gamma}\right)-1$, $D=\frac{1}{\gamma^2}\ln T$.
			Assume towards contradiction that $n>2^{T+1}$. 
			We consider the following binary search\footnote{A binary search over a domain $A$ is modeled as a pair $(\cT,\{p_v\})$, where $\cT$ is a binary tree and $p_v:A\to{\pm 1}$ are predicates (assigned to internal vertices of $\cT$). The result of a search over an item $a\in A$ is the leaf defined by the root-to-leaf walk on the tree according to the answers $\{p_v(a)\}$.} 
			over probability measures (over the same $\sigma$-algebra as the $\cP_j$'s), defined using the events $E_j$'s. 
			In the first step of preforming the search on $P$, we consider $P(E_{n/2})$. If $P(E_{ n/2 })\leq c$, then we continue recursively with $E_{ n/2 +1},\ldots,E_n$. Otherwise, we continue recursively with $E_1,\ldots E_{ n/2 }$. 
			We perform this search for $T$ steps. 
			After $T$ steps, the result of the search is defined to be the index $j$ of the event $E_j$ that is supposed to be queried on the $T+1$ step. For example, for $T=1$, if $P(E_{n/2})\leq c$, then the result of the search is $3n/4$, and if $P(E_{n/2})> c$, the result is $n/4$.
			Observe that \begin{enumerate}
			\item [(i)] by the assumption, there are exactly $2^T$ different search outcomes, and
			\item[(ii)] for every possible outcome $i$, the result of applying the search over $\cP_i$ is exactly $i$. 
			\end{enumerate}

			Let  $\bar X=(X_1,\ldots,X_D)\sim \cP_i^D$ be $D$ IID samples from $\cP_i$.
			Denote the empirical measure\footnote{Given $D$ independent samples $X_1,\ldots, X_D$ from a distribution $P$, the \emph{empirical measure} is defined as $P_D(E)=\sum_{j=1}^D \delta_{X_j}(E)$, where $\delta_X$ is the Dirac measure.} 
			induced by $\bar X$ by $\cP_{i,\bar X}$.
			Next, consider performing the binary search over the empirical measure $\cP_{i,\bar X}$, where $i$ is a possible search outcome. We claim that with high probability, the result of the search will be exactly $i$. Quantitatively, denote
			by $F_i$ the event
			\[F_i=\{ \bar{X} : \text{performing binary search on the empirical measure defined by $\bar X$ yields $i$}\}.\] 
			By a standard application of 
			Chernoff and union bound,
			\begin{align*}\label{eq:chernoff}
				\cP_i^D(F_{i}) & \geq 1-T\cdot\Pr_{\bar X\sim \cP_i^D}[|\cP_{i,\bar X}(E_k)-\cP_i(E_k)|\geq \gamma]\notag \\
				               & \geq 1-T\exp{(-2\gamma^{2}D)}                                                            \\
				               & =1-\frac{1}{T}>\frac{2}{3},                                                              
			\end{align*}
			where $k$ is some index along the query branch.
			
			On the other hand, since $\cP_i^D$ and $\cP_j^D$ are $(D\epsilon, D\delta)$-indistinguishable, we have
			\begin{align*}
				\cP_j^D(F_{i}) & \geq \exp{(-D\epsilon)}\cdot(\cP_i^D(F_i)-D\delta)           \\
				               & \geq \exp{(-D\epsilon)}\cdot\left(\frac{2}{3}-D\delta\right) \\
				               & \geq \frac{1}{2}\exp{(-D\epsilon)},                          
			\end{align*}
			where the last inequality holds by the choice of $\delta$. 
			Recall, there are $2^T$ different search outcomes (Observation (i)), and therefore the events $F_i$'s are mutually disjoint. Therefore,
			\begin{align*}
				1 & \geq  \cP_j^D(\cup_i F_i)                                             \\
				  & =\sum \cP_j^D( F_i)                                                   \\
				  & \geq 2^T\cdot\frac{1}{2}\exp{(-D\epsilon)}=2^{T-1}\exp{(-D\epsilon)}, 
			\end{align*}
			
			however, this is a contradiction due to the choice of $T,D,\epsilon$. 

		\end{proof}

		\section{Ramsey Theorem for $b$-ary Trees}\label{sec:ramsey_theorem__for_trees}
		In this section, we present the proof of \Cref{thm:ramsey_trees}, as given in \cite{FioravantiHMST24ramsey}, with the necessary modifications to extend it to general $b$-ary trees.
		
		Denote by $\Ramsey_m(d;k,b)$ the smallest $n$ that satisfies the condition in \Cref{thm:ramsey_trees}, i.e.\ $\Ramsey_m(d;k,b)$ is the smallest $n$ such that for every coloring of $m$-chains with $k$ colors, there exists a type-monochromatic subtree of depth $d$. 
		The proof of \Cref{thm:ramsey_trees} consists of two parts: the first part shows that $\Ramsey_m(d;k,b)$ is well defined (i.e.\ $\Ramsey_m(d;k,b)<\infty$), while the second part proves the quantitative upper bound stated in theorem. 
		We start by citing the pigeonhole principle for trees, which serves as the base case for the inductive proof for coloring $m$-chains for any value of $m$.
		
		\begin{proposition}[Pigeonhole Principle for Trees, \cite{AlonBLMM22}]\label{prop:php}
			Let $d\in\bbN$ and let $T$ be a complete $b$-ary tree of depth $n$. Then, for every coloring of its vertices with $k$ colors, the following hold:
			If $T$ has depth $n \ge dk$, it admits a subtree $S$ of depth $d$;\label{item:php}
		\end{proposition}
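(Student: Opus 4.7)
The plan is to induct on the subtree-depth parameter $d$. The base case $d = 0$ is immediate: Definition~\ref{def:subtree} declares every vertex of $T$ to be a subtree of depth $0$, so any single vertex is a trivially monochromatic depth-$0$ subtree, and the hypothesis $n \geq 0$ is automatic.

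For the inductive step, assume the statement for $d - 1$ and take a complete $b$-ary tree $T$ of depth $n \geq dk$ with a $k$-coloring $\chi$ of its vertices. The aim is to identify a vertex $v$ together with a color $c$ such that $\chi(v) = c$ and each of the $b$ subtrees $T_{v_i}$ rooted at the children $v_1, \ldots, v_b$ of $v$ contains a monochromatic depth-$(d-1)$ subtree in color $c$; attaching these sub-subtrees to $v$ then assembles the desired monochromatic depth-$d$ subtree. To set this up, observe that for every vertex $u$ at depth at most $k$ in $T$ the rooted subtree $T_u$ has depth at least $n - k \geq (d - 1)k$, so by the inductive hypothesis it contains a monochromatic depth-$(d - 1)$ subtree. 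Define $\mathcal{S}(u) \subseteq \{1, \ldots, k\}$ as the (non-empty) set of colors realized by such monochromatic depth-$(d - 1)$ subtrees inside $T_u$. The problem now reduces to locating a vertex $v$ at depth at most $k - 1$ with $\chi(v) \in \bigcap_{i=1}^{b} \mathcal{S}(v_i)$.

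To produce such a $v$, I would apply pigeonhole inside the top $k+1$ levels of $T$: any root-to-depth-$k$ chain uses $k+1$ vertices but only $k$ colors, so a color repetition along that chain is forced. The main obstacle is that a single-chain repetition is weaker than what we need — we must synchronize the same color across all $b$ child-subtrees of a common parent, not merely along one downward path. I expect to overcome this by strengthening the inductive hypothesis to carry the entire set-valued map $u \mapsto \mathcal{S}(u)$ along with it, and then iterating the pigeonhole through the $b$-fold branching of the top $k$ levels, exploiting the interplay between the color budget $k$ and the depth budget $k$ in the top portion to force synchronization at some vertex $v$ of depth $\leq k - 1$. The delicate bookkeeping of $\mathcal{S}(\cdot)$ and the branching synchronization is where I expect the bulk of the technical work to lie.
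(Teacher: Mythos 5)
Your reduction to finding a vertex $v$ at depth at most $k-1$ with $\chi(v)\in\bigcap_{i}\mathcal{S}(v_i)$ is not a valid reformulation of the goal: such a $v$ need not exist even when the conclusion holds. Take $b=k=d=2$, $n=4$, and color every vertex at depth $0$ or $1$ with color $2$ and every vertex at depth $2,3,4$ with color $1$. For any child $v_i$ of any $v$ at depth $\le 1$, the subtree $T_{v_i}$ contains at most one color-$2$ vertex, so there is no color-$2$ monochromatic depth-$1$ subtree in $T_{v_i}$ and hence $\mathcal{S}(v_i)=\{1\}$; meanwhile every $v$ at depth $\le 1$ has $\chi(v)=2$. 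Your search therefore finds nothing, even though a color-$1$ depth-$2$ subtree exists (rooted at any depth-$2$ vertex). The issue is not the ``delicate bookkeeping'' you defer: the monochromatic depth-$d$ subtree can be forced to root strictly below depth $k-1$, which is outside the region your single-parameter induction ever examines, and the single-chain pigeonhole produces a repeated color with no guarantee that this color is realized by monochromatic depth-$(d-1)$ subtrees in all $b$ sibling subtrees of the repeated vertex.

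The missing idea is a genuinely different strengthening than carrying $\mathcal{S}(\cdot)$: induct with a \emph{per-color} depth budget. One proves, by induction on the depth $n$ of the tree, that for any nonnegative integers $d_1,\dots,d_k$ with $n\ge\sum_c d_c$, some color $c$ admits a monochromatic subtree of depth $d_c$. At a root of color $c_0$: if $d_{c_0}=0$, the root alone is a witness; otherwise apply the hypothesis to each of the $b$ child-subtrees (depth $n-1\ge\sum_c d_c-1$) with $d_{c_0}$ decremented by one. If some child yields a witness of color $c\ne c_0$, that witness already has depth $d_c$; if all $b$ children yield witnesses of color $c_0$ of depth $d_{c_0}-1$, glue them to the root to obtain a color-$c_0$ subtree of depth $d_{c_0}$. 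Setting every $d_c=d$ gives the proposition. This per-color accounting supplies exactly the synchronization across the $b$ branches that your sketch cannot force, and it changes the induction variable from $d$ to the tree depth.
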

		
		\begin{proof}[Proof of \Cref{thm:ramsey_trees}]
			We prove the theorem by induction on $m$.
			The case $m=1$ follows immediately from \Cref{prop:php} since a $1$-chain is simply a vertex, hence a $1$-chain coloring is a vertex coloring, and the promised monochromatic subtree is in particular type-monochromatic.
			\medskip
				
			Assume that the statement holds for $m-1$, and denote ${t=\Ramsey_{m-1}(d;k^b,b)}$.  
			Let $T$ be a complete $b$-ary tree of depth $n$, where $n$ is sufficiently large (the size of $n$ will be determined later on in \Cref{app:upper_bound_ramsey_chains}).
			Let $\chi$ be an $m$-chain coloring of $T$ using $k$ colors.
			We introduce a recursive procedure constructing a subtree of $T$ of depth $t$, denoted $T^\star$.  
			Then, we define an $(m-1)$-chain coloring $\chi^\star$ of $T^\star$, such that any $\chi^\star$-type-monochromatic subtree of $T^\star$ is in fact type-monochromatic with respect to~$\chi$.
			Finally, we apply the induction hypothesis on $T^\star$ and $\chi^\star$, allowing us to obtain the desired type-monochromatic subtree.
			Throughout the proof we denote the vertices of $T^\star$ by $u_\sigma$, where for a string $\sigma\in[b]^i$, $i\in\{0,\ldots, t-1\}$, and $r\in [b]$, $u_\emptyset$ is the root of $T^\star$ and $u_{\sigma r}$ is the $r$'th child of $u_\sigma$.
			
			\medskip
			    
			We define by induction a sequence of trees $S_\sigma$ and vertices $u_\sigma$ as follows.
			For every $\sigma\in [b]^{\leq(m-2)}$, set $u_\sigma$ to be the vertex in $T$ represented by the sequence~$\sigma$ (the root of $T$ is represented by the empty sequence).
			Next, for every $\sigma\in [b]^{m-2}$ set $S_\sigma$ to be the subtree of $T$ rooted at $u_\sigma$. 
			Assume the subtree $S_\sigma$ has been defined and $u_\sigma$ is the root of $S_\sigma$, where $\sigma\in[b]^i$ is a $b$-ary sequence of length~$i\geq m-2$.
			We define subtrees $S_{\sigma r}$ and vertices $u_{\sigma r}$ where $r\in[b]$, as follows.
			
			\begin{enumerate}
				\item Consider the $r$'th-subtree of $S_\sigma$, that is the subtree of $S_\sigma$ emanating from the $r$'th child of the root $u_\sigma$.
				      Define an equivalence relation on the vertices of the $r$'th-subtree of $S_\sigma$ as follows:
				      \begin{equation*}
				      	\begin{gathered}
				      		x\equiv y \\
				      		\iff \\
				      		\forall A\subset \{u_{\sigma(0)}, u_{\sigma(1)},\ldots, u_{\sigma(i-1)}\}, |A|=m-2 ~:~ \chi(A\cup\{u_\sigma,x\})=\chi(A\cup\{u_\sigma,y\}),
				      	\end{gathered}
				      \end{equation*}
				      where $\sigma(j)$ is the prefix of $\sigma$ of length $j$ (with $\sigma(0)$ being the empty sequence).
				      Note that indeed every choice of $m-2$ vertices from the set $\{u_{\sigma(0)}, u_{\sigma(1)},\ldots, u_{\sigma(i-1)}\}$, together with $u_\sigma$ and  a vertex from the $r$'th-subtree of $S_\sigma$, form an $m$-chain in $T$.
				      
				      Observe that an equivalence class is determined by a sequence of ${i} \choose {m-2}$ colors, therefore there are at most $k^{ {i} \choose {m-2}}$ such equivalence classes. 
				      
				\item  Apply the pigeonhole principle for trees (\Cref{prop:php}) on the $r$'th-subtree of $S_\sigma$, where the colors of the vertices are the equivalence classes defined in the previous step. Set~$S_{\sigma r}$ to be the promised monochromatic subtree, and set~$u_{\sigma r}$ to be its root.
			\end{enumerate}
			
			We choose $n$ to be sufficiently large so this procedure may be continued until  $T^\star=\{u_\sigma\}_{\sigma\in[b]^{\leq t}}$ have been defined. See \Cref{app:upper_bound_ramsey_chains} for a more detailed discussion.
			Note that for every $b$-ary sequence $\sigma$ of length $i$, and every $r\in[b]$,
			
			\begin{equation}\label{eq:recursive_eq_inside_proof}
				\mathtt{depth}(S_{\sigma r})\geq \left\lfloor\frac{\mathtt{depth}(S_\sigma) -1}{k^{ {i} \choose {m-2}}}\right\rfloor.
			\end{equation}
			
			Next, define an $(m-1)$-chain coloring of $T^\star$, denoted $\chi^\star$, as follows.
			\[\forall \text{ $(m-1)$-chain $C$ in $T^\star$}~:~ \chi^\star(C)=\bigl(\chi(C\cup \{u_{\sigma_0}\}),\chi(C\cup \{u_{\sigma_1}\}),\ldots,\chi(C\cup \{u_{\sigma_{b-1}}\})\bigr),\]
			where $u_{\sigma_r}$ is any vertex from $T^\star$ that belongs to the $r$'th subtree emanating from the last vertex of the chain $C$. (If the last vertex in $C$ is at level $t$, meaning it is a leaf of $T^\star$, we just pick an arbitrary color for $C$ out of the $k^b$ possible colors.)
			Note that $\chi^\star$ is well-defined. Indeed, take an $(m-1)$-chain $C$ in $T^\star$. If $x,y$ are vertices that belong to the $r$'th subtree emanating from the last vertex of the chain $C$, then by construction $x\equiv y$, meaning
			$\chi(C\cup\{x\}) = \chi(C\cup\{y\})$.
			
			\medskip
			
			Finally, by the induction hypothesis applied on $T^\star$ and $\chi^\star$, and by the choice of $t={\Ramsey_{m-1}(d;k^b,b)}$, there exists a type-monochromatic subtree of $T^\star$ of depth~$d$. 
			In fact, this subtree is also type-monochromatic with respect to the original $m$-chain coloring $\chi$. 
			Indeed, if $C$ and $C'$ are two $m$-chains with the same type, then the first $m-1$ vertices of $C$ and $C'$ form an $(m-1)$-chains with the same type that have the same color with respect to $\chi^\star$. So by the definition of $\chi^\star$ it is affirmed that $\chi(C)=\chi(C')$.  
			Therefore, we proved the finiteness of the Ramsey number $\Ramsey_m(d;k,b)$. 
			It is left to obtain from the recursive procedure described here the upper bound for $\Ramsey_m(d;k,b)$ that is stated in the theorem. 
			We provide a detailed calculation in \Cref{sec:upper_bound_ramsey_chains}.
		\end{proof}
		
		\subsection{Quantitative Bound for Ramsey Number}\label{sec:upper_bound_ramsey_chains}
		Here we give an explicit calculation for the sufficient depth~$n$ that is required for the proof. Subsequently, we derive an upper bound for~$\Ramsey_m(d;k,b)$.
		
		The procedure described in the proof of \Cref{thm:ramsey_trees} can be continued until step ${t=\Ramsey_{m-1}(d;k^b,b)}$ if for every sequence~$\sigma$ of length~$t$, $\mathtt{depth}(S_{\sigma}) \ge 0$. Consider a sequence~$\sigma$ of length~$t$. To ease the notation, for every step $i\in\{m-2,\ldots,t\}$, $\mathtt{depth}(S_{\sigma(i)})$ is denoted by~$d_i$.
		Recall, by \Cref{eq:recursive_eq_inside_proof} the following holds.
		\begin{equation}\label{eq:inductive_theOG}
			\left\{
			\begin{array}{ll}
				d_{m-2} & = n-(m-2),                                                       \\
				d_{i+1} & \ge \left\lfloor\frac{d_i -1}{k^{i \choose {m-2}}}\right\rfloor. 
			\end{array}
			\right.
		\end{equation}
		Observe that if $d_i \ge 2 k^{i \choose {m-2}}$ then
		\begin{equation}\label{eq:recurrence_relation}
			d_{i+1}\geq 
			\left\lfloor\frac{d_i -1}{k^{i \choose {m-2}}}\right\rfloor \ge \frac{d_i}{k^{i \choose {m-2}}} - 1 \ge \frac{d_i}{2k^{i \choose {m-2}}}.
		\end{equation}
		If $d_i < 2 k^{i \choose {m-2}}$, then
		\[
			\left\lfloor\frac{d_i -1}{k^{i \choose {m-2}}}\right\rfloor \in\{0,1\} \;,
		\]
		meaning that the procedure terminates, or continues for one more last step, therefore we can assume that the bound in \Cref{eq:recurrence_relation} holds in every step $i$.
		By induction, using the recurrence relation in \Cref{eq:inductive_theOG,eq:recurrence_relation},
		\begin{equation}\label{eq:absolute_bound_on_depth}
			d_i \ge \frac{n-(m-2)}{2^{i-(m-2)}k^{\sum_{j=m-2}^{i-1} \binom{j}{m-2} }} = \frac{n-(m-2)}{2^{i-(m-2)}k^{ \binom{i}{m-1} }} \;,
		\end{equation}
		because $\sum_{j=m-2}^{i-1} \binom{j}{m-2} = \binom{i}{m-1} $ (the left-hand-side counts the number $(m-1)$-subsets $S\subseteq [i]$, by partitioning them according to the largest element).
		For the procedure to continue $t$ steps we require $d_t\geq1$.
		Together with \Cref{eq:absolute_bound_on_depth} we deduce the following bound:
		\begin{equation*}
			n \ge 2^{t-(m-2)}k^{ \binom{t}{m-1} } + (m-2).
		\end{equation*}
		Notice that for every $m,k\geq2$ and $t\geq m-2$,
		\begin{equation*}
			2^{t-(m-2)}k^{ \binom{t}{m-1} } + (m-2) \leq 2^{t-(m-2)}k^{t^{m-1}} + (m-2)\leq k^{2t^{m-1}}.
		\end{equation*}
		
		Therefore, choosing $n=k^{2t^{m-1}}$ is sufficient.
		Recall that $t=\Ramsey_{m-1}(d;k^b,b)$, therefore the following recursive relation is obtained.
		\begin{equation}\label{eq:ramsey_recursive_bound}
			\Ramsey_m(d;k,b)\leq k^{2\Ramsey_{m-1}(d;k^b,b)^{m-1}}.
		\end{equation}
		
		From now on we will use the Knuth notation $a \uparrow b$ in place of $a^b$ to ease the calculations, and recall that the Knuth's operator is right-associative, i.e.\ $a \uparrow b\uparrow c = a \uparrow (b\uparrow c)$.
		By applying \Cref{eq:ramsey_recursive_bound} repeatedly we obtain the following bound
		
		\begin{align*}
			  & \Ramsey_m(d;k,b)                                                                                                                                                                                \\
			  & \le k^2 \uparrow \Ramsey_{m-1}(d;k^2,b) \uparrow (m-1)                                                                                                                                          \\
			  & \leq  k^2 \uparrow (k^{2\cdot b} \uparrow (m-1)) \uparrow \Ramsey_{m-2}(d;k^{b^2},b) \uparrow (m-2)                                                                                             \\
			  & \leq  k^2 \uparrow (k^{2\cdot b} \uparrow (m-1)) \uparrow (k^{2\cdot b^2} \uparrow (m-2)) \uparrow \Ramsey_{m-3}(d;k^{b^3},b) \uparrow (m-3)                                                    \\
			  & \leq \ldots                                                                                                                                                                                     \\
			  & \leq k^2 \uparrow (k^{2\cdot b} \uparrow (m-1)) \uparrow (k^{2\cdot b^2} \uparrow (m-2)) \uparrow \ldots \uparrow (k^{2\cdot b^{m-2}} \uparrow 2) \uparrow \Ramsey_1(d;k^{b^{m-1}},b)\uparrow 1 \\
			  & = k^2 \uparrow (k^{2\cdot b} \uparrow (m-1)) \uparrow (k^{2\cdot b^2} \uparrow (m-2)) \uparrow \ldots \uparrow (k^{2\cdot b^{m-2}} \uparrow 2) \uparrow dk^{b^{m-1}},                           
		\end{align*}
		where $\Ramsey_1(d;k,b)=dk$ by the pigeonhole principle (\Cref{prop:php}).
		Denote 
		\begin{equation*}
			R_i= \begin{cases}
			dk^{b^{m-1}}\coloneqq R  & \text{if $i=1$;}\\[10pt]
			k^{2\cdot b^{m-i}\cdot i \cdot R_{i-1}}
			& \text{if $1<i<m$;}\\[10pt]
			k^{2\cdot R_{m-1}}
			& \text{if $i=m$.}
			\end{cases}
		\end{equation*}
		Using this notation, $\Ramsey_m(d;k,b) \leq R_m$.
		
		\begin{proposition}\label{proposition:bounding_with_twr}
			For $2\leq i\leq m$,
			\[R_i\leq \twr_{(i)}(c_i \cdot b^{m-2}R\log k),\]
			where\footnote{We use the convention $\sum_{k=n_1}^{n_2}f(k)=0$ if $n_2<n_1$, hence $c_2=4$.} 
			\[c_i=4+\sum_{j=3}^i{\frac{\max\{1,\log_{(j-2)}(2\cdot b^{m-j}j\log k)\}}{b^{m-2}R\log k}}.\]
		\end{proposition}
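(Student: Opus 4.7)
The plan is to prove the bound by induction on $i$. For the base case $i=2$, we directly compute $R_2 = k^{4 b^{m-2} R} = 2^{4 b^{m-2} R \log k} = \twr_{(2)}(4\, b^{m-2} R \log k)$, which matches the claim since $c_2 = 4$.

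For the inductive step with $3 \le i \le m$, I would assume $R_{i-1} \le \twr_{(i-1)}(c_{i-1} b^{m-2} R \log k)$ and compute
\[
R_i \;=\; k^{2 b^{m-i} i\, R_{i-1}} \;=\; 2^{2 b^{m-i} i (\log k) R_{i-1}}.
\]
Writing $\twr_{(i)}(x) = 2^{\twr_{(i-1)}(x)}$, the target $R_i \le \twr_{(i)}(c_i b^{m-2} R \log k)$ reduces after one $\log_2$ to $2 b^{m-i} i (\log k)\, R_{i-1} \le \twr_{(i-1)}(c_i b^{m-2} R \log k)$. Plugging in the induction hypothesis for $R_{i-1}$, then taking one more $\log_2$ on both sides via $\twr_{(i-1)} = 2^{\twr_{(i-2)}}$, it suffices to establish
\[
\twr_{(i-2)}\!\bigl(c_i\, b^{m-2} R \log k\bigr) - \twr_{(i-2)}\!\bigl(c_{i-1}\, b^{m-2} R \log k\bigr) \;\ge\; \log_2\!\bigl(2 b^{m-i} i \log k\bigr).
\]
The definition of $c_i$ is engineered precisely for this inequality: $c_i - c_{i-1} = \epsilon_i / (b^{m-2} R \log k)$ with $\epsilon_i = \max\{1, \log_{(i-2)}(2 b^{m-i} i \log k)\}$, so the jump in the argument of $\twr_{(i-2)}$ is exactly $\epsilon_i$.

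To verify the displayed inequality, I would split on which branch of the max defines $\epsilon_i$. When $\epsilon_i = \log_{(i-2)}(2 b^{m-i} i \log k) \ge 1$, using $\twr_{(i-2)}(\log_{(i-2)}(y)) = y$ together with monotonicity of $\twr_{(i-2)}$ and positivity of $c_{i-1} b^{m-2} R \log k$ gives $\twr_{(i-2)}(c_i b^{m-2} R \log k) \ge 2 b^{m-i} i \log k$, which already exceeds the right-hand side; the subtracted term is positive but small by comparison. When instead $\epsilon_i = 1$ dominates (the iterated logarithm has shrunk below $1$), one has $\log_2(2 b^{m-i} i \log k) \le 2$ as well, and for $i \ge 4$ the inequality $\twr_{(i-2)}(A+1) \ge 2\, \twr_{(i-2)}(A)$ produces a gap of at least $\twr_{(i-2)}(A) \ge \twr_{(2)}(A)$, which trivially dominates; the $i=3$ subcase is the tight boundary where $\twr_{(1)}$ is the identity and the $\max$ with $1$ is exactly what is needed.

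The main obstacle I expect is not conceptual but bookkeeping: keeping the constants straight across the $\log_2$ versus $\log$ conventions, verifying that $c_{i-1} b^{m-2} R \log k \ge 1$ so that the tower iterates apply in the expected regime, and handling the boundary $i=3$ case where the growth of $\twr_{(1)}$ is linear rather than exponential. Once these are pinned down, the inductive step propagates cleanly and the sum $c_i = 4 + \sum_{j=3}^{i} \epsilon_j/(b^{m-2} R \log k)$ emerges exactly as the telescoping cost of the argument shifts.
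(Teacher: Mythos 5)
Your base case, the two-$\log$ reduction, and the identification of the target inequality
\[
\twr_{(i-2)}\bigl(c_i\, b^{m-2}R\log k\bigr)-\twr_{(i-2)}\bigl(c_{i-1}\,b^{m-2}R\log k\bigr)\;\ge\;\log_2\!\bigl(2\,b^{m-i}i\log k\bigr)
\]
are correct, and this is the same inductive skeleton as the paper's proof, just pushed one $\log$ level lower. (The paper instead stays one level higher: it writes $2b^{m-i}i\log k=\twr_{(i-1)}\bigl(\log_{(i-2)}(2b^{m-i}i\log k)\bigr)$ and absorbs the product of two towers into a single $\twr_{(i-1)}$ with summed argument, which avoids any subtraction entirely.)

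However, your Case~1 argument has a genuine gap. You assert that $\twr_{(i-2)}(c_i\,b^{m-2}R\log k)\ge 2b^{m-i}i\log k$ ``already exceeds the right-hand side; the subtracted term is positive but small by comparison.'' The subtracted term $\twr_{(i-2)}(c_{i-1}\,b^{m-2}R\log k)$ is \emph{not} small: for $i\ge 4$ it is at least $\twr_{(2)}(4\,b^{m-2}R\log k)$, which is astronomically larger than $\log_2(2b^{m-i}i\log k)$. Monotonicity alone does not control the difference of two towers. What does work is the doubling estimate $\twr_{(n)}(A+1)\ge 2\,\twr_{(n)}(A)$ that you bring up only in Case~2; since $\epsilon_i=\max\{1,\log_{(i-2)}(2b^{m-i}i\log k)\}\ge 1$ in \emph{both} branches of the max, this should be applied uniformly. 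It yields
\[
\twr_{(i-2)}(A+\epsilon_i)-\twr_{(i-2)}(A)\;\ge\;\twr_{(i-2)}(A)\;\ge\;2^A,
\]
and one then checks (this step should be made explicit, not called trivial) that $2^A\ge\log_2(2b^{m-i}i\log k)$ using $A=c_{i-1}b^{m-2}R\log k\ge 4b^{m-2}$. The correct dichotomy is therefore $i=3$ (where $\twr_{(1)}$ is the identity and $\epsilon_3\ge\log_2(2b^{m-3}\cdot 3\log k)$ is exactly what the $\max$ supplies) versus $i\ge 4$ (where the doubling argument gives an enormous gap regardless of which branch of the $\max$ is active). Splitting on the branch of the $\max$ is not the right cut and is what led you astray in Case~1.

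Beyond the inductive step itself, the paper's route via the super-multiplicativity $\twr_{(n)}(x)\cdot\twr_{(n)}(y)\le\twr_{(n)}(x+y)$ (for $x,y\ge 1$, $n\ge 2$) is cleaner precisely because it sidesteps the subtraction and the need to lower-bound $A$; you may prefer to reorganize your argument around that inequality rather than the jump form.
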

		
		\begin{proof}[Proof of \Cref{proposition:bounding_with_twr}]
			Proof by induction on $i$.
			For $i=2$, 
			\[R_2=k^{4\cdot b^{m-2} \cdot R}=\twr_{(2)}(c_2\cdot b^{m-2}R\log k).\]
			For $2<i<m$,
			\begin{align*}
				R_i 
				  & = k^{2\cdot b^{m-i}\cdot i \cdot R_{i-1}}                                                                                                                      \\
				  & =\twr_{(2)}\left[2\cdot b^{m-i} i\log k \cdot R_{i-1}\right]                                                                                                   
				\\
				  & \leq \twr_{(2)}\left[2\cdot b^{m-i}\cdot i\log k\cdot\twr_{(i-1)}(c_{i-1}\cdot b^{m-2}R\log k)\right] \tag{by induction.}                                      
				\\
				  & =\twr_{(2)}\left[\twr_{(i-1)}\log _{(i-2)}(2\cdot b^{m-i} i\log k)\cdot \twr_{(i-1)}(c_{i-1}\cdot b^{m-2}R\log k)\right]                                       
				\\
				  & \leq\twr_{(2)}\left[\twr_{(i-1)}\left(\max \{1,\log _{(i-2)}(2\cdot b^{m-i} i\log k)\}\right)\cdot \twr_{(i-1)}\left(c_{i-1}\cdot b^{m-2}R\log k\right)\right] 
				\\
				  & \leq \twr_{(2)}\left[\twr_{(i-1)}(\max \{1,\log _{(i-2)}(2\cdot b^{m-i} i\log k)\})+c_{i-1}\cdot b^{m-2}R\log k)\right]                                        
				\tag{$\star$}
				\\
				  & =\twr_{(2)}\left[\twr_{(i-1)}(c_i \cdot b^{m-2}R\log k)\right]                                                                                                 
				\tag{definition of $c_i$.}
				\\
				  & =\twr_{(i)}(c_i \cdot b^{m-2}R\log k),                                                                                                                         
			\end{align*}
			where the inequality marked with $(\star)$ holds since 
			\[\twr_{(n)}(x)\cdot\twr_{(n)}(y)\leq \twr_{(n)}(x\cdot y)\]
			for $x,y\geq1, n\geq2$.
			    
			The case $i=m$ follows because $R_m=k^{2\cdot R_{m-1}}\leq k^{2m\cdot R_{m-1}}$, and $k^{2m\cdot R_{m-1}}\leq \twr_{(m)}(c_m \cdot b^{m-2}R\log k)$, by using the above calculation for $2<i<m$ one more time for $i=m$.
			    
		\end{proof}
		
		\begin{corollary}[Upper Bound For Ramsey Number for Chains]\label{cor:upper_bound_for_Ramsey_number}
			For every integers $m\geq 2,d\geq m,k\geq 2$,$b\geq 2$,
			\begin{equation*}
				\Ramsey_m(d;k,b)\leq \twr_{(m)}(5\cdot b^{m-2}dk^{b^{m-1}}\log k).
			\end{equation*}
		\end{corollary}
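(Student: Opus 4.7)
The plan is to derive the stated bound directly from \Cref{proposition:bounding_with_twr} together with the recurrence observation $\Ramsey_m(d;k,b) \leq R_m$ established earlier in the section. Since $R = dk^{b^{m-1}}$ by definition, the target bound $\twr_{(m)}(5 \cdot b^{m-2} d k^{b^{m-1}} \log k)$ is exactly $\twr_{(m)}(5 \cdot b^{m-2} R \log k)$. By monotonicity of $\twr_{(m)}(\cdot)$, it therefore suffices to show $c_m \leq 5$, where
\[
c_m = 4 + \sum_{j=3}^{m} \frac{\max\{1, \log_{(j-2)}(2 \cdot b^{m-j} j \log k)\}}{b^{m-2} R \log k}.
\]

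For the base case $m = 2$ the sum is empty and $c_2 = 4 \leq 5$ trivially. For $m \geq 3$, I would bound the sum by noting that for every $j \in \{3, \dots, m\}$ the iterated logarithm satisfies
\[
\max\{1, \log_{(j-2)}(2 \cdot b^{m-j} j \log k)\} \leq \log(2 \cdot b^{m} \cdot m \cdot \log k) \leq m \log(2 b \log k) + \log m,
\]
so the total sum is at most $m \cdot \bigl(m \log(2b \log k) + \log m\bigr)$, a quantity that is polynomial in $m$, $\log b$, and $\log \log k$.

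The main (and essentially only) point to check is that this polynomial quantity is dwarfed by the denominator $b^{m-2} R \log k = b^{m-2} d k^{b^{m-1}} \log k$. Indeed, using $k \geq 2$, $b \geq 2$, $d \geq m \geq 2$, the denominator is at least $k^{b^{m-1}} \geq 2^{2^{m-1}}$, which dominates any polynomial expression in $m, \log b, \log \log k$ by a wide margin. Consequently each summand is at most $1/m$, the total sum is at most $1$, and we conclude $c_m \leq 5$. Plugging this into $\Ramsey_m(d;k,b) \leq R_m \leq \twr_{(m)}(c_m \cdot b^{m-2} R \log k)$ yields the claimed bound. No genuine obstacle is anticipated; the entire argument is a quantitative tightening of the recurrence in \Cref{proposition:bounding_with_twr}.
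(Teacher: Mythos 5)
Your overall plan coincides with the paper's: reduce to showing $c_m \leq 5$ via $\Ramsey_m(d;k,b) \leq R_m \leq \twr_{(m)}(c_m \cdot b^{m-2}R\log k)$ from \Cref{proposition:bounding_with_twr}, with $R = dk^{b^{m-1}}$. But the final domination step does not hold as you state it. You lower bound the denominator $b^{m-2}dk^{b^{m-1}}\log k$ by $k^{b^{m-1}} \geq 2^{2^{m-1}}$ and then assert that this quantity ``dominates any polynomial expression in $m, \log b, \log\log k$ by a wide margin.'' That assertion is false: $2^{2^{m-1}}$ is a function of $m$ alone, while your bound on the total numerator, $m(m\log(2b\log k)+\log m)$, contains the term $m^2\log b$, which for any fixed $m$ exceeds $2^{2^{m-1}}$ once $b$ is large (for $m=3$, already $9\log b$ exceeds $16$ once $b\geq 4$). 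The inequality you actually need, namely $m(m\log(2b\log k)+\log m)\leq b^{m-2}dk^{b^{m-1}}\log k$, is still true, but only because of the $b^{m-2}$ and $\log k$ factors in the denominator that track $b$ and $k$; your estimate discards both of them, which is exactly what breaks the argument. So the conclusion ``each summand is at most $1/m$'' is correct but not justified by the reasoning given, and the closing claim that no genuine obstacle is anticipated is premature.

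The paper avoids this issue by choosing the numerator bound so that it cancels against the denominator rather than being compared to it asymptotically: it uses $\max\{1,\log_{(j-2)}(2b^{m-j}j\log k)\}\leq 2b^{m-j}\log k\leq 2b^{m-2}\log k$, deliberately retaining a $b^{m-2}\log k$ factor. This cancels the identical factor in the denominator, so each summand is at most $2/R = 2/(dk^{b^{m-1}})$ and the sum over $j=3,\ldots,m$ is at most $2(m-2)/(dk^{b^{m-1}})\leq 1$, using only $d\geq m$ and $k^{b^{m-1}}\geq 4$, with no asymptotic comparison at all. Your route can be repaired by keeping the $b^{m-2}$ and $\log k$ factors in the denominator and bounding the $\log b$, $\log\log k$, and $m$ contributions against them separately, but as written the key inequality is not established.
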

		
		\begin{proof}
			It suffices to bound
			\[c_m= 4+\sum_{j=3}^m{\frac{\max\{1,\log_{(j-2)}(2\cdot b^{m-j}j\log k)\}}{b^{m-2}R\log k}}.\]  
			Note that for every $3\leq j\leq m$, 
			\[\log_{(j-2)}(2\cdot b^{m-j}j\log k)\leq 2\cdot b^{m-j}\log k\leq 2\cdot b^{m-2}\log k.\]
			Therefore,
			\[c_m\leq 4+ \sum_{j=3}^m{\frac{2\cdot b^{m-2}\log k}{b^{m-2}R\log k}}= 4+\frac{2(m-2)}{dk^{b^{m-1}}}\leq 5.\]
		\end{proof}

\end{document}